\documentclass[Afour,sageh,times]{sagej}

\usepackage{moreverb,url}

\usepackage[colorlinks,bookmarksopen,bookmarksnumbered,citecolor=blue,urlcolor=red]{hyperref}

\usepackage{times}
\usepackage{soul}
\usepackage{url}
\usepackage[utf8]{inputenc}
\usepackage{graphicx}
\usepackage{amsmath,amssymb,amsfonts}
\usepackage{amsthm}
\usepackage{booktabs}
\usepackage{algorithm}
\usepackage{algorithmic}
\usepackage[switch]{lineno}
\usepackage{verbatim} 
\usepackage{enumerate}
\usepackage{rotating}
\usepackage{enumitem}%\setlist{nolistsep}
\usepackage{color}
\usepackage{mathrsfs}
\usepackage{threeparttable}
\usepackage{mathtools}

\usepackage{subcaption}
\usepackage{textcomp}
\usepackage[table]{xcolor}
\usepackage{siunitx} \usepackage{flushend}
\usepackage{tikz}
\usepackage{svg}
\usepackage{multirow}
\usetikzlibrary{positioning, arrows.meta,automata,trees}
\usetikzlibrary{calc}

\definecolor{algLine}{RGB}{190,190,190}   % subtle separator lines
\definecolor{algNote}{RGB}{55,55,55}      % comment text (almost black)
\definecolor{algHL}{RGB}{30,80,160}       % highlight blue (nice-looking)
\newcommand{\algsep}{\STATE \textcolor{algLine}{\rule{\linewidth}{0.4pt}}}
\newcommand{\algcomment}[1]{\STATE \textbf{\textcolor{algHL}{// #1}}}

\newcommand\BibTeX{{\rmfamily B\kern-.05em \textsc{i\kern-.025em b}\kern-.08em
T\kern-.1667em\lower.7ex\hbox{E}\kern-.125emX}}

\def \RC{\mathcal{R}}
\def \IC{\mathcal{I}}
\def \PC{\mathcal{P}}
\def \PP{\mathbb{P}}
\def \TT{\mathbb{T}}

\def \s{\mathbf{s}}
\def \x{\mathbf{x}}
\def \z{\mathbf{z}}

\def \a{\mathbf{a}}
\def \R{\mathbb{R}}
\def \G{\mathrm{G}}
\def \F{\mathrm{F}}
\def \U{\mathrm{U}}

\newcommand{\bs}{\boldsymbol}
\newcommand{\mr}{\mathrm}

\newtheorem{example}{Example}
\newtheorem{theorem}{Theorem}
\newtheorem{remark}{Remark}

\newtheorem{problem}{Problem}
\newtheorem{lemma}{Lemma}

\makeatletter
\def\ps@title{%
\let\@oddhead\@empty
\let\@evenhead\@empty
\def\@oddfoot{\hfill\thepage\hfill}%
\let\@evenfoot\@oddfoot}

\def\@maketitle{%
\vspace*{-34pt}%
\null%
\begin{center}
\if@PCfour
\begin{rm}
\else
\begin{sf}
\fi
\begin{minipage}[t]{\textwidth}
  \vskip 12.5pt%
  {\raggedright\titlesize\textbf{\@title}\par}%
  \vskip 1.5em%
  \vskip 12.5mm%
\end{minipage}
{\par\large%
  \lineskip .5em%
  {\raggedright\textbf{\@author}\par}}
\vskip 40pt%
{\noindent\usebox\absbox\par}
{\vspace{20pt}%
  {\noindent\normalsize\@keywords}\par}
\if@PCfour
\end{rm}
\else
\end{sf}
\fi
\end{center}
\vspace{22pt}
\par}
\makeatother

\begin{document}

% \runninghead{Liu and Hou \textit{et~al.}}

\title{DAG-STL: A Hierarchical Framework for Zero-Shot Trajectory Planning under Signal Temporal Logic Specifications}

\author{Ruijia Liu\affilnum{1},  Ancheng Hou\affilnum{1},  Xiao Yu\affilnum{2},    and Xiang Yin\affilnum{1}}

\affiliation{\affilnum{1}School of Automation and Intelligent Sensing, Shanghai Jiao Tong University, Shanghai, China.\\
\affilnum{2}Institute of Artificial Intelligence, Xiamen University, Xiamen, China.}

\corrauth{Xiang Yin, School of Automation and Intelligent Sensing, Shanghai Jiao Tong University, 800 Dongchuan RD. Minhang District, Shanghai 200240, China.}

\email{yinxiang@sjtu.edu.cn}

\begin{abstract}
Signal Temporal Logic (STL) is a powerful language for specifying temporally structured robotic tasks. Planning executable trajectories under STL constraints, however, remains difficult when the system dynamics and environment structure are not analytically available. Existing methods typically either assume explicit system models or learn task-specific behaviors, which limits their ability to generalize zero-shot to previously unseen STL tasks. In this work, we study offline STL planning under unknown dynamics using only task-agnostic trajectory data.
Our central design philosophy is to separate logical reasoning from trajectory realization. We instantiate this idea in DAG-STL, a hierarchical framework that converts long-horizon STL planning into three stages. It first decomposes an STL formula into reachability and invariance progress conditions linked by shared timing constraints. It then allocates timed waypoints using learned reachability-time estimates. Finally, it synthesizes trajectories between these waypoints with a diffusion-based generator. This decomposition--allocation--generation pipeline reduces a difficult global planning problem to a set of shorter and better-supported subproblems.
To bridge the remaining gap between planning-level correctness and execution-level feasibility, we further introduce a rollout-free dynamic consistency metric, an anytime refinement search   procedure for improving multiple allocation hypotheses under finite budgets, and a hierarchical online replanning mechanism for execution-time recovery. Experiments in Maze2D, OGBench AntMaze and Cube domain show that DAG-STL substantially outperforms direct robustness-guided diffusion on complex long-horizon STL tasks and generalizes across both navigation and manipulation settings. In a custom-built environment with an optimization-based reference, DAG-STL recovers most model-solvable tasks while retaining a clear computational advantage  over direct optimization based on the explicit system model.
\end{abstract}

\keywords{Signal Temporal Logic,  Task and Motion Planning, Diffusion Model}

\maketitle
\pagestyle{plain}

\section{Introduction}
 
Task and motion planning (TAMP) is a fundamental problem in robotics, as it enables autonomous systems to accomplish high-level objectives while reasoning about low-level feasibility constraints~\cite{lavalle2006planning}. With the growing demand for advanced robotic autonomy, there has been increasing interest in planning for complex missions specified using formal logics~\cite{kress2018synthesis,luckcuck2019formal,yin2024formal}. Among many formalisms, Signal Temporal Logic (STL) has emerged as a powerful specification language for describing temporal behaviors of real-valued and real-time signals. Owing to its expressive syntax and the availability of both Boolean and quantitative semantics, STL has been widely adopted for defining high-level robotic tasks and has been applied successfully in diverse domains such as autonomous drones~\cite{silano2021power,yang2025stlgame}, robotic manipulation~\cite{puranic2021learning,sewlia2022cooperative,yuasa2026neuro}, and bipedal locomotion~\cite{gu2025robust}.

Controlling robots under STL task constraints is highly challenging, since one must simultaneously ensure task satisfaction and compatibility with the underlying system dynamics. In principle, when the environment and system dynamics are fully known, STL planning can be formulated as a hybrid optimization problem over both continuous and discrete variables, where system dynamics and STL constraints are encoded explicitly~\cite{raman2014model,belta2019formal,kurtz2020trajectory}. Although such optimization-based approaches provide a principled framework for STL planning, they often incur substantial computational burden and are generally intractable because of the combinatorial complexity of the resulting hybrid problem. Beyond computational difficulty, a more fundamental limitation is their reliance on accurate knowledge of the system model, including both robot dynamics and environment interactions.

These limitations motivate a data-driven alternative, especially in settings where accurate dynamics models are unavailable but trajectory data can be collected from simulations or prior operations. In many practical systems, the true dynamics are unknown or difficult to model explicitly.
For instance, in many commercially packaged robotic systems, low-level parameters and dynamics are proprietary and thus inaccessible to end users, leaving only limited interfaces available for control and planning \cite{salunkhe2025kinematic}. Likewise, for platforms such as soft robots, constructing faithful analytical models remains notoriously difficult~\cite{armanini2023soft}. More broadly, for systems with high-dimensional, strongly nonlinear, or contact-rich dynamics \cite{suh2025dexterous}, obtaining an accurate analytical model is often prohibitively challenging due to the complexity of the underlying physical processes. In such cases, collecting trajectory data is often substantially more practical and feasible than deriving an explicit mathematical model from first principles.

For STL planning without explicit dynamics models, a major line of work has focused on reinforcement learning (RL); see, e.g.,~\cite{aksaray2016q,balakrishnan2019structured,kalagarla2021model,venkataraman2020tractable,ikemoto2022deep,wang2024synthesis}. However, RL methods are typically designed and trained for a specific task, require carefully engineered task-dependent rewards, and often demand a large number of training episodes. As a result, they generally exhibit limited generalization across different STL tasks. More recently, generative methods such as diffusion models have been explored for trajectory planning with improved test-time flexibility and generalization capability; see, e.g.,~\cite{janner2022planning,ajay2022conditional,chi2023diffusion}. Nevertheless, their current use is still largely concentrated on relatively simple point-to-point or goal-conditioned planning problems, and they do not readily handle long-horizon temporal constraints of the kind induced by STL. Achieving zero-shot STL planning from task-agnostic trajectory data alone, especially for complex long-horizon specifications, therefore remains highly challenging.

In this paper, we address the problem of generating feasible trajectories for complex, long-horizon STL tasks when the underlying system dynamics are unknown. We assume access only to task-agnostic trajectory data collected from prior operations, without any task-specific labels. Our goal is to leverage this data to construct a planning framework that can satisfy previously unseen STL tasks while remaining compatible with the unknown system dynamics, thereby enabling zero-shot task generalization.

The main contributions of this paper are summarized as follows.
\begin{itemize}[leftmargin=*]
    \item  
    We develop a hierarchical framework, DAG-STL, that integrates semantics-based task \textbf{\underline{D}}ecomposition, system-aware progress \textbf{\underline{A}}llocation, and diffusion-based trajectory \textbf{\underline{G}}eneration for offline STL planning under unknown dynamics.
    Specifically, STL tasks are decomposed into reachability and invariance progress conditions coupled through shared timing constraints; a search-based allocation module then constructs a timed waypoint skeleton; and a diffusion-based completion module generates trajectories that realize the allocated waypoints while enforcing the induced invariance conditions. This decomposition--allocation--generation architecture provides a sound symbolic-level planning framework while remaining fully compatible with task-agnostic offline data. 
    \item 
    We further augment the basic planner with a rollout-free \emph{Dynamic Consistency Metric}, an \emph{Anytime Refinement Search} (ARS) procedure, and a hierarchical \emph{Online Replanning} mechanism. The consistency metric evaluates whether planned states and transitions are locally supported by the offline data manifold; ARS upgrades single-solution allocation into a multi-hypothesis, score-guided search with cost-guided backjumping; and online replanning supports both local segment repair and history-consistent global recovery during execution. Collectively, these components extend the framework beyond basic STL-correct open-loop planning toward a more execution-aware and robust planning architecture.
    \item 
    We validate the proposed framework through case studies and systematic experiments on Maze2D, OGBench AntMaze, and the Cube domain, together with controlled comparisons and supplementary stress tests in a custom-built environment. The results demonstrate that DAG-STL significantly outperforms a direct robustness-guided diffusion baseline on complex long-horizon STL tasks, that ARS and online replanning yield clear gains in execution reliability on challenging tasks, and that the overall framework generalizes across both navigation and manipulation domains while maintaining favorable computational scalability.
\end{itemize}

\section{Related Works}
We next  briefly review the relevant lines of research related to our work, including model-based STL planning, reinforcement learning for STL tasks, diffusion-based planning, and STL decomposition methods.
\subsection{Model-Based STL Planning}
For systems with known dynamics and explicit environment constraints, a standard approach is to encode STL satisfaction using binary variables and formulate trajectory planning as an optimization problem constrained by both system dynamics and task semantics. This line of work was initiated in~\cite{raman2014model} within a model predictive control framework, which enables receding-horizon adaptation to environmental disturbances. However, the number of binary variables typically grows rapidly with the planning horizon, leading to substantial computational burden. To improve scalability, \cite{sun2022multi} introduced timed waypoints in a multi-agent setting, while \cite{kurtz2022mixed,cardona2025stl} proposed more compact encodings with fewer binary variables.

Another major challenge for optimization-based STL planning is the non-smoothness of the resulting objective. To address this, several works~\cite{pant2017smooth,gilpin2020smooth,dawson2022robust,kapoor2025stlcg++} proposed smooth approximations of STL robustness, enabling faster gradient-based optimization. Sample-based alternatives have also been explored~\cite{ho2023sampling,ilyes2023stochastic} to avoid solving mixed-integer programs directly. Despite these advances, such methods still fundamentally rely on accurate models of system dynamics and environment interactions, which are often unavailable in the settings considered in this paper.

\subsection{Reinforcement Learning for STL Tasks}
To address unknown dynamics, a large body of work has studied reinforcement learning (RL) for STL tasks~\cite{aksaray2016q,balakrishnan2019structured,kalagarla2021model,venkataraman2020tractable,ikemoto2022deep,wang2024synthesis,wang2025multi}. A common strategy is to design reward functions that approximate STL satisfaction. For example, \cite{aksaray2016q} introduced one of the first model-free approaches by augmenting the state with a fixed-length trajectory window induced by the STL formula, thereby enabling Q-learning with instance-based rewards. Later works~\cite{venkataraman2020tractable,wang2024synthesis} proposed more scalable progress encodings that avoid storing the entire trajectory, while \cite{balakrishnan2019structured} explored locally shaped rewards over partial signal traces. Although these methods improve tractability, they are still typically task-specific and often struggle with long-horizon STL specifications.

More recent work has attempted to improve generalization within the RL paradigm. One direction is model-based RL, where a learned dynamics model is combined with planning or optimization, as in~\cite{kapoor2020model}. However, the reliance on optimization still limits such approaches to relatively short-horizon tasks. Another direction is skill-based or goal-conditioned RL, where reusable goal-conditioned policies are composed to satisfy temporal tasks; for example, \cite{he2024scalable} trains a library of skills and sequences them through search. While such methods provide some task reuse, they still depend on predefined goal abstractions and do not naturally support zero-shot generalization to arbitrary unseen STL specifications. In contrast, our setting assumes only task-agnostic trajectory data and seeks zero-shot planning for previously unseen STL tasks without task-specific training.

\subsection{Planning with Diffusion Models}
More recently, generative models, especially diffusion models~\cite{ho2020denoising}, have emerged as a promising approach to trajectory generation under unknown dynamics~\cite{janner2022planning,ajay2022conditional,chi2023diffusion,carvalho2023motion,huang2025diffusion}. Compared with many model-based RL approaches, diffusion models offer stronger long-horizon generation capability and greater test-time flexibility~\cite{janner2022planning}, which has made them attractive for complex planning problems. Beyond standard goal-conditioned planning, diffusion models have also been explored for temporally specified tasks. For example, for finite Linear Temporal Logic ($\text{LTL}_f$) tasks, \cite{feng2024ltldog} introduced classifier-guided diffusion sampling, while \cite{feng2025diffusion} proposed a hierarchical framework that decomposes co-safe LTL tasks into sub-tasks and combines diffusion generation with hierarchical control.

In the STL setting, diffusion-based planning has also begun to attract attention. For example, \cite{zhong2023guided} proposed a robustness-guided diffusion approach that uses STL robustness gradients to steer trajectory sampling, thereby generating vehicle trajectories that satisfy traffic-rule specifications. Building on this, \cite{meng2024diverse} introduced data augmentation to improve diversity and rule satisfaction rates. However, these methods  operate largely by directly optimizing STL robustness in trajectory space, which becomes increasingly difficult for complex long-horizon specifications. Moreover, directly maximizing robustness often creates a tension between task satisfaction and trajectory feasibility under the data distribution~\cite{li2024derivative}. In contrast, our method introduces an explicit intermediate planning structure, namely semantics-based decomposition and progress allocation, before invoking diffusion-based trajectory completion. This separation between logic-level planning and data-driven trajectory synthesis enables our framework to handle more complex STL tasks while remaining computationally efficient. In the experiments, we compare DAG-STL against~\cite{zhong2023guided} and show clear advantages on complex long-horizon tasks.

\subsection{STL Decomposition Techniques}
To alleviate the complexity induced by nested and long-horizon STL specifications, several decomposition-based synthesis methods have been proposed in the literature; see, e.g.,~\cite{charitidou2021signal,leahy2023rewrite,yu2023model,kapoor2024safe,zhang2025decomposition}. Among these, the work most closely related to ours is~\cite{kapoor2024safe}, which decomposes STL tasks into spatio-temporal subtasks with time-variable constraints, simplifies these constraints, partitions the task into intervals, and then solves the resulting atomic subtasks sequentially.

Our approach is inspired by this decomposition perspective but differs in two important aspects. First, we decompose STL tasks into reachability and invariance progress conditions coupled through shared time-variable constraints, and resolve them through a search-based allocation procedure with dynamic on-the-fly constraint maintenance. This allows the planner to backtrack and revise earlier allocation decisions while preserving a globally consistent timing context. Second, our decomposition is designed to interface naturally with learned transition-time prediction, diffusion-based trajectory completion, and the refinement mechanisms introduced later in the paper. As a result, unlike prior decomposition-based methods that primarily target model-based synthesis, our framework is tailored to the offline unknown-dynamics setting considered here.

\subsection{Comparison with Preliminary Version}
A preliminary version of this work was presented in \cite{liu2025zeroshot}, where the basic decomposition--allocation--generation backbone was introduced. Compared with that earlier version, the present paper makes three substantive extensions. First, while retaining the original DAG backbone, we refine several key components, including the transition-time prediction and constrained trajectory generation modules. More importantly, we identify and explicitly address a major practical limitation of the original framework: even when a nominal plan is STL-correct, it may still lie in a region only weakly supported by the offline data manifold, making it difficult to track reliably during execution. Second,   to further close this gap at planning time, we introduce a rollout-free Dynamic Consistency Metric together with an Anytime Refinement Search (ARS) framework. Rather than serving merely as a post-hoc trajectory smoothing procedure, ARS strictly generalizes the original allocator by upgrading single-hypothesis progress allocation into a multi-hypothesis, score-guided search procedure with cost-guided backjumping. Third, to handle execution-time deviations that cannot be fully resolved offline, we develop a hierarchical online replanning mechanism that combines local segment repair with history-consistent global re-allocation, thereby extending the original open-loop planner into a more execution-aware and robust framework.
\section{Preliminaries}
\subsection{System Model}
We consider a discrete-time system with unknown dynamics
\begin{equation}\label{formula:system}
\x_{t+1}=f(\x_t,\a_t),
\end{equation}
where $\x_t \in \R^n$ and $\a_t\in \R^m$ denote the state and the action applied at time step $t$, respectively.
Given an initial state $\x_0$ and an action sequence
$\a_0\a_1\dots\a_{T-1}$,
the resulting \emph{trajectory} of the system is
$\bs{\tau}=\x_{0} \a_{0} \x_{1} \a_{1}\dots\a_{T-1}\x_{T}$,
where $T$ is the planning horizon.
The corresponding \emph{signal} is the state sequence
$\s=\x_0\x_1\dots\x_T$,
and we denote by $\s_t=\x_t\x_{t+1}\dots\x_T$ the sub-signal starting from time step $t$.
In this work, STL specifications are imposed on the state signal $\s$, and the planner primarily reasons over state sequences rather than action sequences. The planned state sequence is therefore interpreted as a reference trajectory, while action synthesis and trajectory tracking are handled at the execution stage.

\begin{figure*}[t]
    \centering
    \includegraphics[width=0.98\textwidth]{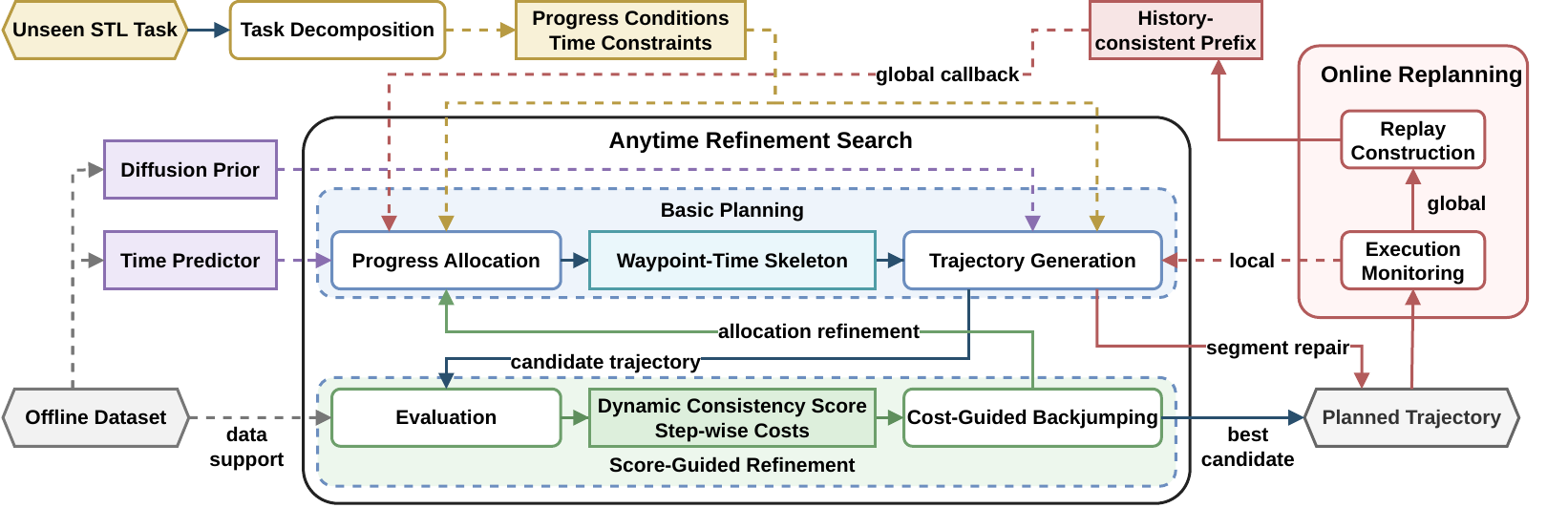}
    \caption{The overall framework of DAG-STL.}
    \label{fig:structure}
\end{figure*}

\subsection{Signal Temporal Logic}\label{sec:STL}
We use signal temporal logic (STL) to specify formal tasks over generated state signals~\cite{maler2004monitoring}. Specifically, we consider STL formulae in \emph{positive normal form} (PNF)~\cite{sadraddini2015robust}, whose syntax is given by
\begin{align}
\varphi ::= \;
&\top \mid \mu \mid \varphi_1 \wedge \varphi_2 \mid \varphi_1 \vee \varphi_2 \mid \F_{[a,b]}\varphi \mid \G_{[a,b]}\varphi \nonumber\\
&\mid \varphi_1 \U_{[a,b]} \varphi_2 ,
\end{align}
where $\top$ is the true predicate, and $\mu$ is an atomic predicate associated with an evaluation function $h_{\mu}:\R^n\to \R$, i.e., $\mu$ is satisfied at state $\x_t$ iff $h_{\mu}(\x_t)\ge 0$.
Furthermore, $\wedge$ and $\vee$ denote the logical operators \emph{conjunction} and \emph{disjunction}, respectively; $\U_{[a,b]}$, $\F_{[a,b]}$, and $\G_{[a,b]}$ denote the temporal operators \emph{until}, \emph{eventually}, and \emph{always}, respectively; and $[a,b]$ is a time interval with $a,b\in\mathbb{Z}$ and $0\le a\le b<\infty$.

Note that negation does not appear in PNF. However, as shown in~\cite{sadraddini2015robust}, this does not result in any loss of generality, since one can redefine atomic predicates to absorb negations and thereby express any STL formula in PNF. For readability, when presenting task formulae and case studies, we still write expressions such as \(\neg\mu\) as shorthand for the corresponding redefined atomic predicate whose sign convention has already been absorbed into its evaluation function. In this work, we impose an additional restriction on the class of PNF formulae considered here: for any subformula of the form $\varphi_1 \U_{[a,b]} \varphi_2$, the prefix formula $\varphi_1$ is allowed to involve only the temporal operator ``always''. This restriction is introduced so that our later decomposition of until formulae remains well structured and tractable. The precise technical role of this restriction will be clarified in Section~\ref{sec:STLdecomposition}.

For any signal $\s=\x_0\x_1\dots\x_T$, we write $\s_t\vDash \varphi$ if $\s$ satisfies $\varphi$ at time step $t$, and $\s\vDash \varphi$ if $\s_0\vDash \varphi$. This is formally defined by the Boolean semantics of STL~\cite{bartocci2018specification}:
\begin{align}
\s_t\vDash \mu
&\Leftrightarrow h_\mu(\x_t) \geq 0, \\
\s_t\vDash \varphi_1 \wedge \varphi_2
&\Leftrightarrow (\s_t \vDash \varphi_1)\ \wedge\ (\s_t \vDash \varphi_2), \\
\s_t \vDash \varphi_1 \vee \varphi_2
&\Leftrightarrow (\s_t \vDash \varphi_1)\ \vee\ (\s_t \vDash \varphi_2), \\
\s_t \vDash \F_{[a,b]} \varphi
&\Leftrightarrow \exists t' \in [t+a, t+b] \text{ s.t. } \s_{t'} \vDash \varphi, \label{formula:sem_F}\\
\s_t \vDash \G_{[a,b]} \varphi
&\Leftrightarrow \forall t' \in [t+a, t+b],\ \s_{t'} \vDash \varphi, \label{formula:sem_G}\\
\s_t \vDash \varphi_1 \U_{[a,b]} \varphi_2
&\Leftrightarrow \exists t' \in [t+a, t+b] \text{ s.t. } \s_{t'} \vDash \varphi_2 \nonumber\\
&\quad \text{and } \forall t'' \in [t, t'],\ \s_{t''} \vDash \varphi_1. \label{formula:sem_U}
\end{align}
For notational convenience, we slightly abuse notation and write $\x_t \vDash \mu$ to denote $h_\mu(\x_t)\ge 0$.

Besides the Boolean semantics, STL also admits a standard \emph{quantitative semantics}, often referred to as the \emph{robustness value}~\cite{donze2010robust,bartocci2018specification}, which measures the degree to which a signal satisfies or violates a specification. For a signal $\s_t$ and a formula $\varphi$, we denote the corresponding robustness by $\rho(\s_t,\varphi)$. Nonnegative robustness indicates satisfaction, while negative robustness indicates violation. For the STL fragment considered in this paper, the quantitative semantics is defined recursively as
\begin{equation}
\begin{aligned}
\rho(\s_t,\mu) &= h_\mu(\x_t),\\
\rho(\s_t,\varphi_1\wedge\varphi_2) &= \min\bigl(\rho(\s_t,\varphi_1),\,\rho(\s_t,\varphi_2)\bigr),\\
\rho(\s_t,\varphi_1\vee\varphi_2) &= \max\bigl(\rho(\s_t,\varphi_1),\,\rho(\s_t,\varphi_2)\bigr),\\
\rho(\s_t,\F_{[a,b]}\varphi) &= \max_{t'\in[t+a,t+b]} \rho(\s_{t'},\varphi),\\
\rho(\s_t,\G_{[a,b]}\varphi) &= \min_{t'\in[t+a,t+b]} \rho(\s_{t'},\varphi),\\
\rho\left(\s_t, \varphi_1 \U_{[a, b]} \varphi_2\right) & =\max _{t^{\prime} \in[t+a, t+b]} \min \{\rho\left(\s_{t^{\prime}}, \varphi_2\right), \\ 
&\quad \quad \quad \quad \quad \quad \min _{\tau \in\left[t, t^{\prime}\right]} \rho\left(\s_\tau, \varphi_1\right)\}.
\end{aligned}
\end{equation}

It is well known that $\rho(\s_t,\varphi)\ge 0$ implies $\s_t\vDash\varphi$, while $\rho(\s_t,\varphi)<0$ implies violation. In this paper, our planning framework is developed primarily from the Boolean semantics above; the robustness value is used later only as a post-hoc evaluation quantity in the experiments.

\subsection{Offline Planning with Unknown Dynamics and Implicit Environment Structure}
In STL planning, the goal is to construct a trajectory whose induced state signal satisfies a given STL specification. When the system dynamics are known, this problem can often be addressed by model-based optimization. In contrast, we consider a purely offline setting in which the dynamics mapping
$f:\R^n \times \R^m \to \R^n$
in~\eqref{formula:system} is unavailable, and no explicit analytical representation of the environment constraints is provided, while a dataset of historical trajectories generated by the underlying system is given. These trajectories are collected from previous task-agnostic operations and may have different lengths.

Importantly, although the dynamics model and the environment geometry are not explicitly available, the task predicates that appear in the STL specification are assumed to be queryable. That is, for each atomic predicate $\mu$, its evaluation function $h_\mu(\x)$ is given, so that whether a state satisfies $\mu$ can be determined.

In this setting, satisfying the STL formula alone is not sufficient. Since no explicit dynamics model or environment model is available at test time, a planned trajectory should also remain consistent with the dynamical and geometric patterns supported by the offline dataset, so that it is more likely to be executable by the underlying system and trackable by a downstream controller. Our goal is therefore to leverage the offline dataset to synthesize new state trajectories that not only satisfy previously unseen STL tasks, but also stay close to the data-supported behavior manifold induced jointly by the unknown dynamics and the implicit environment structure.

\begin{problem}
Given a set of trajectories generated by the unknown system~\eqref{formula:system}, without access to an explicit dynamics model or an explicit environment model, and given an STL formula $\varphi$ in the desired form together with queryable predicate evaluators for its atomic predicates, construct a state trajectory (equivalently, a reference signal) $\s$ such that $\s \vDash \varphi$, using only the offline dataset, while promoting consistency between the planned trajectory and the dynamical and geometric support contained in the dataset.
\end{problem}

\section{Overall Framework of DAG-STL}\label{sec:framework}
Before presenting the technical details, we first summarize the overall design of DAG-STL (Figure~\ref{fig:structure}). We consider STL planning in a purely offline setting, where only an offline trajectory dataset is available. The transition dynamics are unknown, and no explicit map or analytical environment model is provided. Thus, executability-relevant dynamical and environmental structure must be inferred implicitly from the offline data. In this setting, satisfying the STL specification at the planning level is necessary but not sufficient: the resulting trajectory must also remain consistent with offline data support so that it is more likely to be executable by the underlying system and trackable by a downstream controller during execution.

To this end, DAG-STL is organized as a three-stage planning-and-repair framework, in which each stage plays a complementary role:
\begin{itemize}[leftmargin=*]
    \item \textbf{Stage I: Basic Planning Backbone.}  
    This stage forms the backbone of the framework by semantically decomposing the STL task, allocating a temporally consistent waypoint skeleton, and generating a nominal plan that satisfies the specification.
    \item
    \textbf{Stage II: Offline Evaluation and Anytime Refinement.}  
    This stage lifts the basic backbone into an anytime refinement framework. 
    Instead of accepting the first feasible allocation, it uses a rollout-free dynamic consistency metric to assess the generated plan and feeds the resulting signal back to the allocation layer, thereby transforming one-shot planning into a multi-hypothesis, score-guided search over temporally consistent waypoint skeletons. In this way, the refinement stage is not merely a post-processing module, but a strict generalization of the basic planner.
    \item \textbf{Stage III: Online Replanning.}  
    This stage performs online replanning during execution to recover from tracking drift, disturbances, and model mismatch, while preserving the temporal semantics and accumulated progress of the original task.
\end{itemize}

\subsection{Stage I: Basic Planning Backbone}
The basic planner forms the foundation of our
\underline{D}ecomposition,
\underline{A}llocation, and
\underline{G}eneration framework for STL planning.

\textbf{Semantics-Based Task Decomposition.}
Given an STL formula, we decompose it into a set of spatiotemporal progress conditions
$\mathbb{P} = \mathbb{P}^\RC \dot{\cup} \mathbb{P}^\IC$,
where $\mathbb{P}^\RC$ contains \emph{reachability progress conditions} and $\mathbb{P}^\IC$ contains \emph{invariance progress conditions}. These conditions are coupled through time constraints $\mathbb{T}$ over shared time variables $\Lambda$.

\textbf{System-Aware Progress Allocation.}
We then allocate a sequence of time-stamped waypoints
\[
(\tilde{\x}_0,t_0), (\tilde{\x}_1,t_1), \dots, (\tilde{\x}_n,t_n),
\]
such that each waypoint satisfies its assigned reachability progress condition and the associated timestamps satisfy the temporal constraints.

\textbf{Diffusion-Based Trajectory Generation.}
Conditioned on the allocated waypoint-time skeleton, we generate trajectory segments between consecutive waypoints while enforcing the invariance conditions active on the corresponding intervals, and concatenate these segments into a full state trajectory.

\subsection{Stage II: Offline Evaluation and Anytime Refinement}
Although Stage~I produces a nominal STL-satisfying plan, a feasible plan may still contain locally inconsistent transitions that are weakly supported by the offline data manifold. We therefore augment the basic planner with an offline evaluation-and-refinement stage.

\textbf{Dynamic Consistency Metric.}
We define a rollout-free dynamic consistency score $S(\bs{\tau})$ to evaluate the step-wise state and transition support of a planned trajectory under the offline dataset.

\textbf{Anytime Refinement Search (ARS).}
Using this score as feedback, we refine the progress allocation itself by exploring multiple admissible state-time hypotheses and revising upstream decisions responsible for poorly supported local transitions. When the branching factor is set to one and backjumping is disabled, ARS reduces to the Stage~I basic planning pipeline.

\subsection{Stage III: Online Replanning}
To handle tracking drift and disturbances during execution, we further introduce an online replanning mechanism.

\textbf{Online Replanning.}
When necessary, the planner performs either local segment repair or global history-consistent re-allocation, together with a fallback strategy when replanning budgets are exhausted.

The following sections detail these three stages and their corresponding algorithms.

\section{STL Planning via Decomposition, Allocation and Generation}\label{sec:basic_planning}
In this section, we present the technical details of the basic planning module of our framework involving task decomposition, progress allocation and diffusion-based trajectory generation. 
\subsection{Decompositions of STL Formulae}\label{sec:STLdecomposition}
\subsubsection{Eliminating Disjunctions}
Given an STL task $\varphi$, our first step is to transform it into a \emph{conservative disjunctive normal form}, written as
\begin{equation}
    \tilde{\varphi}=\varphi_1 \vee \varphi_2 \vee \cdots \vee \varphi_n,
\end{equation}
where each subformula $\varphi_i$ contains no disjunction operator. This transformation is obtained recursively using the following rewriting rules:
\begin{itemize}[leftmargin=*]
    \item replace
    $\F_{[a,b]}(\varphi_1\vee\varphi_2)$
    by
    $\F_{[a,b]}\varphi_1 \vee \F_{[a,b]}\varphi_2$;
    \item replace
    $\G_{[a,b]}(\varphi_1 \vee \varphi_2)$
    by
    $\G_{[a,b]}\varphi_1 \vee \G_{[a,b]}\varphi_2$;
    \item replace
    $(\phi_1\vee\phi_2)\U_{[a,b]}(\varphi_1\vee\varphi_2)$
    by
    $\bigvee_{i,j\in\{1,2\}} \phi_i\U_{[a,b]}\varphi_j$.
\end{itemize}

The first rewriting rule is logically equivalent, whereas the second and third are conservative strengthenings in general. Therefore, the transformed formula $\tilde{\varphi}$ is generally stronger than the original formula $\varphi$: any trajectory satisfying $\tilde{\varphi}$ also satisfies $\varphi$, but the converse may not hold. This conservative normalization simplifies the subsequent allocation procedure by removing disjunctions from each branch, at the cost of potentially excluding some feasible trajectories of the original task.

Since satisfying $\tilde{\varphi}$ only requires satisfying one of its disjunction-free branches, the planning problem can be solved for each subformula $\varphi_i$ individually. In the full framework, we compute a candidate plan for each branch and then select the best executable one according to the downstream evaluation criterion. For ease of exposition, however, we assume in the remainder of this paper that the formula under consideration is already a single disjunction-free branch, still denoted by $\varphi$. Combined with the preceding positive normal form transformation, this means that the formula contains neither negations applied to temporal operators nor disjunctions. Such preprocessing-based simplifications are common in STL planning and control synthesis when one seeks a tractable intermediate representation for subsequent synthesis or search~\cite{yu2024continuous}. Extending the framework to retain disjunctive choices directly during planning, rather than resolving them through preprocessing, is an interesting direction for future work.

\subsubsection{Progress Conditions and Time Constraints}
We next decompose the STL task $\varphi$ into a finite collection of \emph{progress conditions} coupled through \emph{time constraints}. Specifically, for a signal fragment $\s_t=\x_t\x_{t+1}\dots\x_T$, we consider the following two types of progress conditions:
\begin{itemize}[leftmargin=*]
    \item
    \textbf{Reachability Progress Condition:}
    $\RC(a_{\Lambda}+t,b_{\Lambda}+t,\mu)$ denotes that
    $\exists t' \in [a_{\Lambda}+t,b_{\Lambda}+t]\ \text{such that}\ \x_{t'} \vDash \mu.$
    \item
    \textbf{Invariance Progress Condition:}
    $\IC(a_{\Lambda}+t,b_{\Lambda}+t,\mu)$ denotes that
    $\forall t' \in [a_{\Lambda}+t,b_{\Lambda}+t],\ \x_{t'} \vDash \mu.$
\end{itemize}
Unless otherwise specified, we take $t=0$ and omit it from the notation.

The interval endpoints $a_{\Lambda}$ and $b_{\Lambda}$ may depend on a set of time variables $\Lambda$. Accordingly, the decomposition of $\varphi$ produces a triple $(\PP_{\varphi},\TT_{\varphi},\Lambda_{\varphi})$,
where $\PP_{\varphi}$ is the set of progress conditions, $\TT_{\varphi}$ is the set of time constraints, and $\Lambda_{\varphi}$ is the associated set of time variables.

We use a vector
$\bs{\lambda}=[\lambda_1,\lambda_2,\ldots,\lambda_{|\Lambda|}] \in \mathbb{Z}_{+}^{|\Lambda|}$
to denote an assignment of all time variables in $\Lambda$. Under such an assignment, the symbolic interval endpoints $a_{\Lambda}$ and $b_{\Lambda}$ evaluate to concrete integers, i.e., $a_{\Lambda}(\bs{\lambda}),b_{\Lambda}(\bs{\lambda})\in \mathbb{Z}_{+}$. In our decomposition, these endpoints are always 0--1 linear combinations of time variables, and all constraints in $\TT_{\varphi}$ are simple interval constraints of the form $\lambda_i\in[a_i,b_i]$ involving only a single time variable.

We denote by
\[
\mathcal{F}_{\varphi}
=
\left\{
\bs{\lambda}\in\mathbb{Z}_{+}^{|\Lambda_{\varphi}|}
\;\middle|\;
\bs{\lambda}\ \text{satisfies all constraints in}\ \TT_{\varphi}
\right\}
\]
the set of all feasible assignments of time variables.

With a slight abuse of notation, for $\PC\in\{\RC,\IC\}$, we write
$\s_0 \vDash \PC(a_{\Lambda}(\bs{\lambda}),b_{\Lambda}(\bs{\lambda}),\mu)$,
or simply $\s_0 \vDash \PC(\bs{\lambda})$, to denote the satisfaction of a progress condition under the time-variable assignment $\bs{\lambda}$. Formally,
\begin{align}
\s_0 \vDash \RC(a_{\Lambda}(\bs{\lambda}), b_{\Lambda}(\bs{\lambda}), \mu)
&\Leftrightarrow
\exists t \in [a_{\Lambda}(\bs{\lambda}), b_{\Lambda}(\bs{\lambda})]
\ \text{s.t.}\ 
\x_t \vDash \mu,
\label{formula:reachability}\\
\s_0 \vDash \IC(a_{\Lambda}(\bs{\lambda}), b_{\Lambda}(\bs{\lambda}), \mu)
&\Leftrightarrow
\forall t \in [a_{\Lambda}(\bs{\lambda}), b_{\Lambda}(\bs{\lambda})],\ 
\x_t \vDash \mu.
\label{formula:invariance}
\end{align}

\subsubsection{Progress-Based Decomposition}
We now define recursively how a disjunction-free STL formula $\varphi$ is decomposed into a set of progress conditions $\PP_{\varphi}$, a set of time constraints $\TT_{\varphi}$, and a time-variable set $\Lambda_{\varphi}$.

\noindent\textbf{Base cases:} 
\begin{itemize}[leftmargin=15pt]
\item[1)] 
If $\varphi= \mu$, then we have 
$\mathbb{P}_\varphi= \{  \RC(0,0,\mu) \}$ and $\mathbb{T}_\varphi=\varnothing$, $\Lambda_\varphi=\varnothing$.  
\item[2)] 
If $\varphi= \F_{[a,b]}\mu$, then we have 
$\mathbb{P}_\varphi= \{  \RC(\lambda_i,\lambda_i,\mu)  \}$ and $\mathbb{T}_\varphi=\{ \lambda_i \in [a,b]\}$, $\Lambda_\varphi=\{\lambda_i\}$, where $\lambda_i$ is a newly introduced time variable.  
\item[3)] 
If $\varphi= \G_{[a,b]}\mu$, then we have 
$\mathbb{P}_\varphi= \{  \IC(a,b,\mu)  \}$ and $\mathbb{T}_\varphi=\varnothing,\Lambda_\varphi=\varnothing$.  
\item[4)] 
If $\varphi= \mu_1\U_{[a,b]}\mu_2$, then we have 
$\mathbb{P}_\varphi= \{  \IC(0,\lambda_i,\mu_1),$ $\RC(\lambda_i,\lambda_i,\mu_2)  \}$ and $\mathbb{T}_\varphi=\{\lambda_i\in [a,b]\},\Lambda_\varphi=\{\lambda_i\}$, where $\lambda_i$ is a newly introduced time variable.
\end{itemize}
\noindent\textbf{Recursive compositions:} 
\begin{itemize}[leftmargin=15pt]
    \item[1)]
    If $\varphi=\varphi_1\wedge\varphi_2$, then $\PP_\varphi=\PP_{\varphi_1}\uplus\PP_{\varphi_2},\TT_\varphi=\TT_{\varphi_1}\uplus\TT_{\varphi_2},\Lambda_\varphi=\Lambda_{\varphi_1}\uplus\Lambda_{\varphi_2},$
    where $\uplus$ emphasizes that the merged sets are disjoint.

    \item[2)]
    If $\varphi'=\F_{[a,b]}\varphi$, then we:
    (i) introduce a new time variable $\lambda_i$,
    (ii) add the constraint $\lambda_i\in[a,b]$, and
    (iii) shift every progress condition in $\PP_{\varphi}$ by $\lambda_i$. Formally, $\Lambda_{\varphi'}=\Lambda_{\varphi}\uplus\{\lambda_i\},\TT_{\varphi'}=\TT_{\varphi}\uplus\{\lambda_i\in[a,b]\}$, and
    $\PP_{\varphi'}
    =\left\{\PC(c_{\Lambda}+\lambda_i,d_{\Lambda}+\lambda_i,\mu)\;\middle|\;\PC(c_{\Lambda},d_{\Lambda},\mu)\in\PP_{\varphi}
    \right\}.$
    
    In the special case $\varphi'=\F_{[a,a]}\varphi$, no new time variable is introduced; instead, all progress intervals are shifted by the constant $a$.

    \item[3)]
    If $\varphi'=\G_{[a,b]}\varphi$, we decompose it as
    $
    \G_{[a,b]}\varphi \equiv \bigwedge_{k\in[a,b]}\F_{[k,k]}\varphi$
    and apply the previous rule to each copy independently. Let $\PP_{\varphi}^{(k)}$, $\TT_{\varphi}^{(k)}$, and $\Lambda_{\varphi}^{(k)}$ denote the corresponding copies for time shift $k$. Then
    \begin{align} \Lambda_{\varphi'}=&\biguplus_{k\in[a,b]}\Lambda_\varphi^{(k)}, \mathbb{T}_{\varphi'}=\biguplus_{k\in[a,b]}\mathbb{T}_\varphi^{(k)}\\
    \mathbb{P}_{\varphi'}= &\bigcup_{k\in[a,b]}\{   \PC(c_{\Lambda}\! +\! k,d_{\Lambda}\! +\! k,\mu) \mid    \PC(c_{\Lambda},d_{\Lambda},\mu)  \!\in \!  \mathbb{P}_{\varphi}^{(k)} \} \nonumber 
    \end{align}
    In particular, if an invariance progress condition $\IC(c,d,\mu)\in\PP_\varphi$ has constant endpoints (i.e., it contains no time variables), then the family
    $\{\IC(c+k,d+k,\mu)\}_{k\in[a,b]}$
    can be merged into the single condition
    $\IC(c+a,d+b,\mu)$,
    since these shifted intervals overlap consecutively and their union is exactly $[c+a,d+b]$. We do not apply this merge to invariance progress conditions whose endpoints depend on copy-specific time variables, because the resulting intervals may not form a single contiguous interval under feasible assignments.
    \item[4)]
    If $\varphi'=\phi\U_{[a,b]}\varphi$, then we:
    (i) introduce a new time variable $\lambda_i$,
    (ii) add the constraint $\lambda_i\in[a,b]$,
    (iii) shift every progress condition in $\PP_{\varphi}$ by $\lambda_i$, and
    (iv) extend every invariance progress condition in $\PP_{\phi}$ up to the chosen until time. Formally,
    $\Lambda_{\varphi'}=\Lambda_{\varphi}\uplus\{\lambda_i\},
    \TT_{\varphi'}=\TT_{\varphi}\uplus\{\lambda_i\in[a,b]\},
    \PP_{\varphi'}=
    \left\{ \PC(c_{\Lambda}+\lambda_i,d_{\Lambda}+\lambda_i,\mu)
    \;\middle|\; \PC(c_{\Lambda},d_{\Lambda},\mu)\in\PP_{\varphi}
    \right\}
    \cup
    \left\{
    \IC(c,d+\lambda_i,\mu)
    \;\middle|\;
    \IC(c,d,\mu)\in\PP_{\phi}
    \right\}.
    $
\end{itemize}

\begin{remark}
By the restriction introduced in Section~\ref{sec:STL}, for any subformula of the form $\varphi_1\U_{[a,b]}\varphi_2$, the prefix formula $\varphi_1$ does not contain the temporal operators $\F$ or $\U$. This restriction is essential for the decomposition above. After introducing the until-time variable $\lambda_i$, the prefix part must be enforced over a variable interval of the form $[0,\lambda_i]$. If $\varphi_1$ were allowed to contain eventually- or until-type subformulae, then its decomposition would include reachability-type progress conditions. Enforcing such conditions under a variable-horizon always operator would require expanding obligations over a time interval whose upper bound is itself symbolic, so the number and locations of the resulting shifted progress conditions could not be determined a priori. Under the present restriction, this difficulty does not arise: in the case $\phi\U_{[a,b]}\varphi$, the decomposition of $\phi$ introduces neither time variables nor reachability progress conditions. In particular, we have $\Lambda_\phi=\varnothing$, $\TT_\phi=\varnothing$, and $\PP_{\phi}$ consists only of invariance progress conditions.
\end{remark}

% \begin{remark}
% By the restriction introduced in Section~\ref{sec:STL}, for any subformula of the form $\varphi_1\U_{[a,b]}\varphi_2$, the prefix formula $\varphi_1$ does not contain the temporal operators $\F$ or $\U$. Therefore, in the case $\phi\U_{[a,b]}\varphi$, no time variables or reachability progress conditions are introduced by decomposing $\phi$. In particular, we have $\Lambda_\phi=\varnothing$, $\TT_\phi=\varnothing$, and $\PP_{\phi}$ consists only of invariance progress conditions.
% \end{remark}

We provide the following example to illustrate the above decomposition procedure.

\begin{example}\label{example:dec}
\begin{figure}[t]
    \centering
    \includegraphics[width=0.45\textwidth]{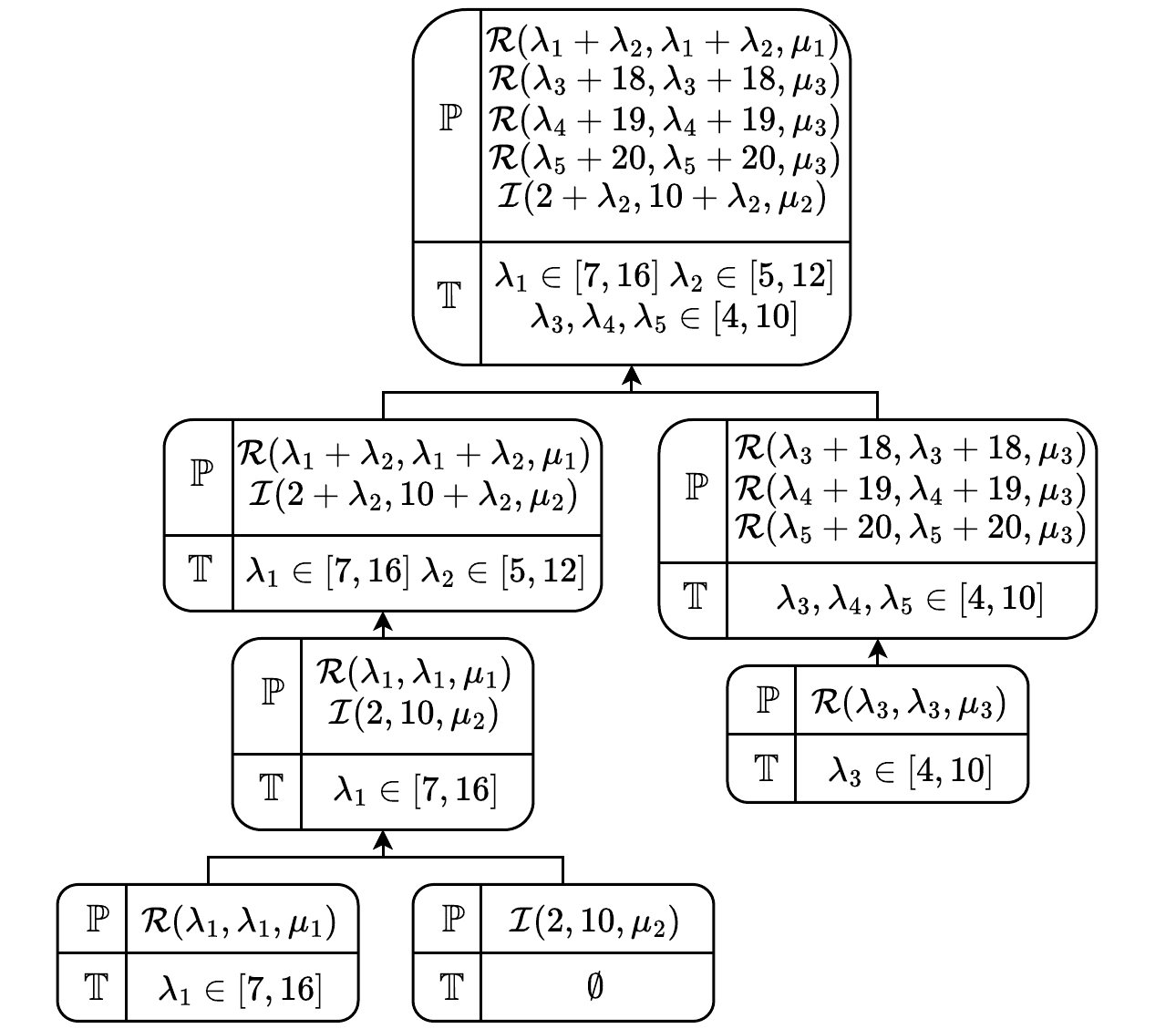}
    \caption{Decomposition process of the STL formula in~\eqref{formula:STL}.}
    \label{fig:decomposition}
\end{figure}
Let us consider the STL formula
\begin{equation}
\varphi=\F_{[5,12]}(\F_{[7,16]}\mu_1 \wedge \G_{[2,10]}\mu_2)\wedge \G_{[18,20]}\F_{[4,10]}\mu_3.
\label{formula:STL}
\end{equation}
The decomposition process is shown in Figure~\ref{fig:decomposition}, where progress conditions and time constraints are constructed incrementally from the leaves to the root. The top node represents the final decomposition
$(\PP_\varphi,\TT_\varphi,\Lambda_\varphi)$
of the STL formula $\varphi$.

One feasible assignment is $\bs{\lambda}=[11,5,4,4,4]$, under which the progress condition $\RC(\lambda_1+\lambda_2,\lambda_1+\lambda_2,\mu_1)$ is satisfied at $t=16$, the conditions
$\RC(\lambda_3+18,\lambda_3+18,\mu_3)$,
$\RC(\lambda_3+19,\lambda_3+19,\mu_3)$, and
$\RC(\lambda_3+20,\lambda_3+20,\mu_3)$
are satisfied at $t=22,23,24$, respectively, and the invariance condition
$\IC(2+\lambda_2,10+\lambda_2,\mu_2)$
is satisfied over the interval $t\in[7,15]$. According to this assignment, one can construct a signal satisfying the STL formula by enforcing
$\x_{16}\vDash \mu_1,
\x_{7:15}\vDash \mu_2,
\x_{22:24}\vDash \mu_3,$
while leaving all other states unconstrained.
\end{example}

The following lemma formalizes the soundness of the above semantics-based decomposition.

\begin{lemma}[(Soundness of the Decomposition)]\label{lemma:task_dec}
Let $\varphi$ be a disjunction-free STL formula in PNF, and let
$(\PP_{\varphi},\TT_{\varphi},\Lambda_{\varphi})$
be the result of the recursive decomposition above. Then, for every signal $\s_0$,
\[
\s_0 \vDash \varphi
 \Leftrightarrow 
\exists \bs{\lambda}\in\mathcal{F}_{\varphi}
\ \text{such that}\
\forall \PC\in\PP_{\varphi},\ 
\s_0 \vDash \PC(\bs{\lambda}).
\]
In particular, when $\Lambda_{\varphi}=\varnothing$, this reduces to
\[
\s_0 \vDash \varphi
 \Leftrightarrow 
\forall \PC\in\PP_{\varphi},\ 
\s_0 \vDash \PC.
\]
\end{lemma}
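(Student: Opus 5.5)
Our plan is structural induction on the disjunction-free formula $\varphi$. We prove a slightly stronger, time-shifted statement: for every $t\ge 0$,
\[
\s_t\vDash\varphi \iff \exists\bs{\lambda}\in\mathcal{F}_\varphi\ \text{s.t.}\ \forall\PC\in\PP_\varphi,\ \s_t\vDash\PC(\bs{\lambda}),
\]
where progress conditions are evaluated relative to $t$, i.e., with intervals $[a_\Lambda(\bs\lambda)+t,\,b_\Lambda(\bs\lambda)+t]$. The shifted form is needed because the recursive rules for $\F$, $\G$ and $\U$ are phrased as shifts of the child's progress conditions. The special case $\Lambda_\varphi=\varnothing$ then follows immediately, since $\mathcal{F}_\varphi$ is the singleton containing the empty assignment.

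\textbf{Base cases.} These follow by direct comparison with the Boolean semantics \eqref{formula:sem_F}--\eqref{formula:sem_U}. For example, $\s_t\vDash\F_{[a,b]}\mu$ holds iff some $\lambda_i\in[a,b]$ has $\x_{t+\lambda_i}\vDash\mu$, which is exactly $\RC(\lambda_i,\lambda_i,\mu)$ with $\lambda_i$ feasible. The until base case is analogous, using $\IC(0,\lambda_i,\mu_1)$ for the prefix.

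\textbf{Recursive cases.}
\begin{itemize}
\item \emph{Conjunction.} Because $\Lambda_{\varphi_1}$ and $\Lambda_{\varphi_2}$ are disjoint and every constraint in $\TT$ involves a single variable, $\mathcal{F}_{\varphi}=\mathcal{F}_{\varphi_1}\times\mathcal{F}_{\varphi_2}$. Witnesses for the two conjuncts can therefore be concatenated, and conversely a joint witness can be projected onto each component.
\item \emph{Eventually.} For $\F_{[a,b]}\varphi$, $\s_t$ satisfies the formula iff there is $\lambda_i\in[a,b]$ with $\s_{t+\lambda_i}\vDash\varphi$. By the induction hypothesis at time $t+\lambda_i$, this holds iff some $\bs\lambda\in\mathcal{F}_\varphi$ satisfies every child condition at time $t+\lambda_i$. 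That is the same as $(\bs\lambda,\lambda_i)$ satisfying the $\lambda_i$-shifted conditions at time $t$. The degenerate $\F_{[a,a]}$ case is identical, with the shift fixed to $a$.
\item \emph{Always.} $\G_{[a,b]}\varphi$ is equivalent to $\bigwedge_k\F_{[k,k]}\varphi$. Since the copies use disjoint variables, the $\G$ case reduces to the $\F_{[k,k]}$ case combined with the conjunction argument. Separately we justify the merge: the intervals $[c+k,d+k]$, $k\in[a,b]$, overlap consecutively (this uses $c\le d$), so their union is $[c+a,d+b]$, and requiring $\mu$ on each interval is equivalent to requiring it on the union.
\end{itemize}

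\textbf{Main obstacle: the until case.} For $\phi\U_{[a,b]}\varphi$, the remark guarantees that $\PP_\phi$ consists only of constant-endpoint invariance conditions and $\Lambda_\phi=\varnothing$. We need a sub-lemma: for every $t'\ge t$,
\[
\forall t''\in[t,t'],\ \s_{t''}\vDash\phi \iff \forall\,\IC(c,d,\mu)\in\PP_\phi,\ \forall\tau\in[t+c,\,t'+d],\ \x_\tau\vDash\mu .
\]
We prove this by applying the induction hypothesis to $\phi$ at each $t''\in[t,t']$ and taking the union of the intervals $[t''+c,t''+d]$. This union equals $[t+c,t'+d]$, again because consecutive shifts overlap.

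The delicate point is that the semantics \eqref{formula:sem_U} requires $\phi$ on the closed range $[t,t']$. The rule $\IC(c,d+\lambda_i,\mu)$ matches this with $t'=t+\lambda_i$, which matches the base-case convention $\IC(0,\lambda_i,\mu_1)$. We must also verify that the prefix decomposition of $\phi$, which may contain nested $\G$ operators and conjunctions, really yields only constant-endpoint $\IC$ conditions with $c\le d$, so that the overlap argument applies. We plan a small auxiliary induction on $\phi$ establishing exactly this. With the sub-lemma in hand, the until case combines the eventually-style witness $\lambda_i$ for $\varphi$ with the prefix sub-lemma, mirroring the $\F$ case.
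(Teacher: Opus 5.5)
Your plan follows the paper's proof almost step for step. Your time-shifted induction hypothesis does the job of the paper's shift identity $\s_k\vDash\PC(c,d,\mu)\Leftrightarrow\s_0\vDash\PC(c+k,d+k,\mu)$. The conjunction case (product of feasible sets), the $\G$ case ($\bigwedge_k\F_{[k,k]}$ plus the overlap argument for the merge) and the until case (the union $\bigcup_{\tau=0}^{\lambda}[c+\tau,d+\tau]=[c,d+\lambda]$) are the same as in the paper.

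\textbf{The auxiliary induction fails at its base case.} The step that breaks is the auxiliary induction you plan for the until case. As the decomposition is defined, $\PP_\phi$ does \emph{not} consist only of constant-endpoint invariance conditions. Already the base case $\phi=\mu$ gives $\PP_\mu=\{\RC(0,0,\mu)\}$. Atoms under $\wedge$, or under a $\G$ with non-atomic body (e.g.\ $\G_{[0,2]}(\mu_1\wedge\mu_2)$), likewise leave point conditions $\RC(c,c,\mu)$ in $\PP_\phi$. Recursive rule 4 extends only the $\IC$ elements of $\PP_\phi$, so these prefix obligations are silently discarded.

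Concretely, take $\mu_1\U_{[a,b]}\F_{[c,d]}\mu_2$, which base case 4 does not cover. It decomposes to the single condition $\RC(\lambda_j+\lambda_i,\lambda_j+\lambda_i,\mu_2)$. Any signal with $\x_0\nvDash\mu_1$ and $\x_{a+c}\vDash\mu_2$ then satisfies every progress condition but violates the formula, so the ``$\Leftarrow$'' direction is false as stated. The paper's own hybrid task $\neg\mu_3\U_{[0,100]}(\G_{[0,5]}\mu_4)$ has exactly this shape.

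\textbf{The missing idea and the repair.} The missing observation is that $\RC(c,c,\mu)\Leftrightarrow\IC(c,c,\mu)$, since the interval has a single point. Rule 4 must then be read as extending every element of $\PP_\phi$, point reachability conditions included, to $\IC(c,d+\lambda_i,\mu)$. This is precisely the convention base case 4 already uses when it turns $\PP_{\mu_1}=\{\RC(0,0,\mu_1)\}$ into $\IC(0,\lambda_i,\mu_1)$.

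Your auxiliary induction should then prove the following: every element of $\PP_\phi$ has constant endpoints $c\le d$ and is either an $\IC$ or a point $\RC$. Atoms give $\RC(0,0,\cdot)$, $\G_{[a,b]}\mu$ gives $\IC(a,b,\cdot)$ with $a\le b$, and constant shifts, conjunction and merging all preserve this. After that, your prefix sub-lemma goes through verbatim. The $c=d$ case is included, because consecutive one-point shifts are adjacent in discrete time.

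The paper's Remark and proof simply assert that $\PP_\phi$ contains only invariance conditions. So this defect is inherited from the paper rather than introduced by you, but as written your auxiliary induction breaks at its base case.
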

\begin{proof}
The proof is provided in Appendix~\ref{apx:proof}.
\end{proof}

This semantics-based decomposition turns STL planning into the search for a signal that satisfies a finite collection of progress conditions under a feasible time-variable assignment, rather than the direct search over the space of all possible signals.

\subsection{Progress Allocation}\label{sec:allocation}
By Lemma~\ref{lemma:task_dec}, STL satisfaction can be reduced, at the logical level, to satisfying all decomposed progress conditions under some feasible time-variable assignment. Accordingly, the role of the allocation module is to construct a temporally consistent timed waypoint skeleton that witnesses the reachability progress conditions while preserving the feasibility of the remaining invariance progress conditions.

In the unknown-dynamics setting considered here, however, such an allocation cannot be performed arbitrarily: even if a logical allocation exists, the resulting transitions between successive waypoints may still be dynamically implausible for the underlying system. To address this issue, we adopt a search-based allocation algorithm whose branching decisions are guided by a data-driven reachability-time predictor learned from offline trajectories.

\subsubsection{Preprocessing}
Before running the allocation search, we further rewrite each invariance progress condition
$\IC(a_{\Lambda},b_{\Lambda},\mu)\in\PP_\varphi$
into a triggering reachability condition
$\RC(a_{\Lambda},a_{\Lambda},\mu)$
and a residual invariance condition
$\IC(a_{\Lambda}+1,b_{\Lambda},\mu)$.
For simplicity, we denote the resulting progress-condition set by
$(\PP,\TT)$ without subscripts, where
$\PP=\PP^\RC \dot{\cup} \PP^\IC$.

For example, in Example~\ref{example:dec}, the condition
$\IC(2+\lambda_2,10+\lambda_2,\mu_2)$
is rewritten as
$\RC(2+\lambda_2,2+\lambda_2,\mu_2)$
and
$\IC(3+\lambda_2,10+\lambda_2,\mu_2)$.
After this rewriting, every invariance progress condition is associated with a unique triggering reachability condition that determines when the invariance requirement starts to take effect.

\begin{remark}
Intuitively, a reachability progress condition captures an instantaneous property achievement, whereas an invariance progress condition captures the maintenance of a property over an interval. The above rewriting makes the start of every invariance requirement explicit as a reachability event, so that the allocation procedure can reason in a unified way about when task-relevant property changes occur.
\end{remark}

\subsubsection{Main Allocation Algorithm}
\renewcommand{\algorithmicrequire}{\textbf{Input:}}
\renewcommand{\algorithmicensure}{\textbf{Output:}}

\begin{algorithm}[t]
\small
\caption{\texttt{MainAllocation}}\label{alg:allocation}
\begin{algorithmic}[1]
\REQUIRE Initial state $\x_0$, reachability progress conditions $\PP^{\RC}$, invariance progress conditions $\PP^{\IC}$, initial time-constraint set $\TT$
\ENSURE A feasible timed waypoint sequence $\tilde{\s}$ together with the final time-constraint set $\TT_f$, or \texttt{None} if no feasible allocation is found

\STATE \textbf{Initialize:}
\STATE current state $\x \leftarrow \x_0$; current time $t \leftarrow 0$
\STATE timed waypoint sequence $\tilde{\s} \leftarrow [(\x, t)]$
\STATE \textit{stack} $\leftarrow [(\x, t, \PP^{\RC}, \TT, \tilde{\s})]$

\WHILE{\textit{stack} is not empty}
    \STATE $(\x, t, \PP^{\RC}_{\mathrm{rem}}, \TT, \tilde{\s}) \leftarrow$ \textit{pop}(\textit{stack})

    \IF{$\PP^{\RC}_{\mathrm{rem}} = \emptyset$}
        \RETURN $\tilde{\s}, \TT$
    \ENDIF
    \STATE Collect the currently determined subset $\PP^{\IC}_{\mathrm{det}} \subseteq \PP^{\IC}$
    \FORALL{$\RC(a_\Lambda, b_\Lambda, \mu) \in \PP^{\RC}_{\mathrm{rem}}$}
        \STATE $(\x', t') \leftarrow \texttt{TimeAssign}\bigl(\RC(a_\Lambda, b_\Lambda, \mu), \x, t, \TT, \PP^{\IC}_{\mathrm{det}}\bigr)$
        \IF{$(\x', t') \neq \texttt{None}$}
            \STATE $\tilde{\s}' \leftarrow \tilde{\s} \cup \{(\x', t')\}$
            \STATE $(\PP^{\RC}_{\mathrm{rem}})' \leftarrow \PP^{\RC}_{\mathrm{rem}} \setminus \{\RC(a_\Lambda, b_\Lambda, \mu)\}$
            \STATE $\TT' \leftarrow \texttt{UpdateConstraint}(a_\Lambda, b_\Lambda, \TT, \x', t')$
            \IF{$\TT'$ remains feasible}
                \STATE \textit{push} $\bigl(\x', t', (\PP^{\RC}_{\mathrm{rem}})', \TT', \tilde{\s}'\bigr)$ onto \textit{stack}
            \ENDIF
        \ENDIF
    \ENDFOR
\ENDWHILE

\RETURN \texttt{None}
\end{algorithmic}
\end{algorithm}

The main allocation algorithm is shown in Algorithm~\ref{alg:allocation}. Its goal is to construct a sequence of timed waypoints such that each reachability progress condition is witnessed at an assigned time, while the evolving time-constraint set remains feasible and no already-activated invariance progress condition is violated at the waypoint level.

Formally, the algorithm performs a depth-first search (DFS) over candidate assignments for the reachability progress conditions in $\PP^\RC$. When the search succeeds, it returns a timed waypoint sequence of the form
\[
(\tilde{\x}_0,t_0),(\tilde{\x}_1,t_1),\dots,(\tilde{\x}_n,t_n),
\]
where each waypoint corresponds to the satisfaction of one reachability progress condition in $\PP^\RC$.
During the search, we maintain:
the current state $\x$,
the current time step $t$,
the set of remaining reachability progress conditions $\PP^\RC$,
the current time-constraint set $\TT$,
and the partial timed waypoint sequence $\tilde{\s}$.

At each step, the algorithm selects a remaining reachability progress condition
$\RC(a_{\Lambda}, b_{\Lambda}, \mu)\in\PP^\RC$
as the next target from the current state-time pair $(\x,t)$.
To realize this progress condition, the function \texttt{TimeAssign} proposes a candidate satisfaction time $t'$ together with a waypoint state $\x'$ such that $\x'\vDash \mu$.
The waypoint $(\x',t')$ is then appended to $\tilde{\s}$, the selected reachability progress condition is removed from $\PP^\RC$, and the current time-constraint set $\TT$ is updated by \texttt{UpdateConstraint} according to the assigned pair $(\x',t')$.
The algorithm then proceeds recursively.
If all reachability progress conditions have been assigned, the search terminates successfully.
If no feasible assignment exists at the current search node, the algorithm backtracks to explore alternative choices.

\begin{remark}\label{remark:more_candidate}
Algorithm~\ref{alg:allocation} is written to terminate immediately upon finding a feasible solution. In practice, one may instead record the current solution as a candidate and continue backtracking, thereby generating multiple feasible timed waypoint skeletons. These candidates can then be ranked according to additional criteria. This modification will be further developed in Section~\ref{sec:mc_planning}.
\end{remark}

\subsubsection{Dynamic Maintenance of Time Constraints}
During the allocation process, we maintain the current time-constraint set $\TT$ incrementally.
Whenever a new reachability progress condition is assigned a satisfaction time, additional constraints over the existing time variables are introduced.
This allows the planner to query, at any intermediate stage, the currently admissible value ranges of symbolic interval bounds such as $a_{\Lambda}$ and $b_{\Lambda}$ in a progress condition $\PC(a_{\Lambda},b_{\Lambda},\mu)$ by solving a small integer linear program (ILP), while remaining consistent with all previously imposed timing decisions.

For example, the minimum possible value of $a_{\Lambda}$ under the current constraint set $\TT$, denoted by $a_{\Lambda,\TT}^{\min}$, is obtained by solving
\[
\begin{aligned}
&\mathrm{minimize} \quad a_\Lambda(\bs{\lambda}) \\
&\text{subject to} \quad \bs{\lambda}\ \text{satisfies all constraints in}\ \TT, \\
&\phantom{\text{subject to}} \quad \bs{\lambda} \in \mathbb{Z}_{+}^{|\Lambda|}.
\end{aligned}
\]
The minimum and maximum feasible values of other symbolic bounds are computed analogously.

Importantly, the set of time variables $\Lambda$ remains fixed throughout the allocation process: the planner only tightens the current constraint set $\TT$, and does not introduce or remove time variables dynamically. As more constraints are added, the feasible assignment set shrinks monotonically. If at any point the resulting feasible set becomes empty, then the current partial allocation is infeasible and the search must backtrack.

\subsubsection{Constraint Updates}
We now describe the function \texttt{UpdateConstraint}.
Let
$\RC(a_{\Lambda}, b_{\Lambda}, \mu)$
be the selected reachability progress condition, and let
$(\x',t')$
be the assigned waypoint and satisfaction time.
To certify that this reachability progress condition can indeed be realized at time \(t'\), we add the constraints
\begin{equation}\label{eq:constraint-add}
   a_{\Lambda} \leq t',
   \qquad
   b_{\Lambda}\geq t' .
\end{equation}
These constraints ensure that the chosen time \(t'\) lies inside the admissible satisfaction window of the selected reachability progress condition.

Recall that, in the preprocessing step, every original invariance progress condition
$\IC(a_{\Lambda},b_{\Lambda},\mu)$
is rewritten as
$\RC(a_{\Lambda},a_{\Lambda},\mu)$
and
$\IC(a_{\Lambda}+1,b_{\Lambda},\mu)$.
Therefore, once constraints~\eqref{eq:constraint-add} have been added for the triggering reachability condition
$\RC(a_{\Lambda},a_{\Lambda},\mu)$,
the start time of the associated residual invariance condition is determined.
We denote by $\PP^\IC_{\mathrm{det}}$ the set of invariance progress conditions whose start times have already been determined in this way.

Next, consider any determined invariance progress condition
$\IC(c,d_\Lambda,\mu)\in \PP^\IC_{\mathrm{det}}$.
If the newly assigned waypoint violates its predicate, i.e., if $\x' \nvDash \mu$, then this invariance condition cannot remain active at time \(t'\).
Hence, we further add the constraint
\[
d_{\Lambda}< t'.
\]
This update truncates the feasible active interval of the invariance progress condition so as to avoid logical inconsistency with the newly assigned waypoint.

\subsubsection{Sampling Timed Waypoints}
\renewcommand{\algorithmicrequire}{\textbf{Input:}}
\renewcommand{\algorithmicensure}{\textbf{Output:}}

\begin{algorithm}[t]
\small
\caption{\texttt{TimeAssign}}\label{algorithm_TA}
\begin{algorithmic}[1]
\REQUIRE Reachability progress condition $\RC(a_\Lambda,b_\Lambda,\mu)$, current state $\x$, current time $t$, current time-constraint set $\TT$, determined invariance progress conditions $\PP^{\IC}_{\mathrm{det}}$
\ENSURE A locally admissible candidate pair $(\x', t_{\mathrm{new}})$, or \texttt{None} if no candidate is found

\STATE $t_{\min}\gets a_{\Lambda,\TT}^{\mathrm{min}}, \qquad t_{\max}\gets b_{\Lambda,\TT}^{\mathrm{max}}$
\FOR{up to $N_{\mathrm{max}}$ attempts}
    \STATE Sample state $\x'$ such that $\x'\vDash \mu$
    \STATE \textbf{Initialize:} conflict interval set $\mathcal{O} \leftarrow \emptyset$
    \FORALL{$\IC(c,d_\Lambda,\mu')\in \PP^{\IC}_{\mathrm{det}}$ with determined start time}
        \IF{$\x' \nvDash \mu'$}
            \STATE $\mathcal{O}\leftarrow \mathcal{O}\cup [c,d_{\Lambda,\TT}^{\mathrm{min}}]$
        \ENDIF
    \ENDFOR
    \STATE $t' \leftarrow t+\texttt{TimePredict}(\x,\x')$
    \IF{$t' > t_{\max}$ \textbf{or} $[\max\{t',t_{\min}\},t_{\max}]\setminus \mathcal{O}=\emptyset$}
        \STATE \textbf{continue} to the next sampling attempt
    \ENDIF
    \STATE $t_{\mathrm{new}}\leftarrow$ earliest time in $[\max\{t',t_{\min}\},t_{\max}] \setminus \mathcal{O}$
    \RETURN $(\x',t_{\mathrm{new}})$
\ENDFOR
\RETURN \texttt{None}
\end{algorithmic}
\end{algorithm}

When a reachability progress condition
$\RC(a_\Lambda, b_\Lambda, \mu)$
is selected, the function \texttt{TimeAssign} proposes a \emph{locally admissible} satisfaction time together with a corresponding waypoint state, while accounting for the currently available timing constraints and the currently determined invariance progress conditions.
The pseudocode is shown in Algorithm~\ref{algorithm_TA}.

The procedure first computes a heuristic admissible time window
\(
[t_{\min},t_{\max}]
\)
for the selected reachability progress condition under the current time-constraint set.
It then attempts, up to \(N_{\max}\) times, to draw a candidate state \(\x'\) from the satisfying set of \(\mu\) using a predicate-conditioned state sampler. The proposed framework does not depend on a specific implementation of this sampler; any mechanism capable of proposing states satisfying \(\mu\), including learned generative samplers, can be used. For each candidate, the algorithm computes a conflict interval set, denoted by \(\mathcal{O}\), that captures time intervals during which assigning \(\x'\) would necessarily conflict with currently determined invariance progress conditions.

Once \(\mathcal{O}\) is obtained, the function \texttt{TimePredict} estimates the transition time from the current state \(\x\) to the sampled state \(\x'\), thereby injecting data-driven dynamical information into the otherwise symbolic allocation process.
If
(i) the predicted arrival time exceeds \(t_{\max}\), or
(ii) the entire candidate interval
\(
[\max(t',t_{\min}),t_{\max}]
\)
is blocked by conflict intervals in \(\mathcal{O}\),
then the sampled state \(\x'\) is rejected and the algorithm proceeds to the next attempt.
Otherwise, the earliest conflict-free time in this interval is selected as the candidate satisfaction time, and the function returns this time together with the sampled state.

It is important to emphasize that \texttt{TimeAssign} is a heuristic candidate generator rather than a complete feasibility solver for the updated constraint system. The quantities
\(t_{\min}\), \(t_{\max}\), and \(\mathcal{O}\)
provide a fast local screening mechanism based on the currently available timing information and the learned transition-time prior. Therefore, the returned pair \((\x',t_{\mathrm{new}})\) should be understood as a locally admissible candidate under the current search state. Global consistency is enforced only after \texttt{UpdateConstraint} is applied and the resulting time-constraint set is checked for feasibility; otherwise, the branch is rejected and the search backtracks.

\begin{remark}
When computing the conflict interval set \(\mathcal{O}\), we only consider invariance progress conditions whose start times have already been determined. This is because, after preprocessing, every invariance progress condition has a unique triggering reachability progress condition. If its start time has not yet been determined, then its trigger has not yet been allocated, and the corresponding invariance requirement can only become active strictly later in the search.
\end{remark}

\subsubsection{Heuristic Ordering}
In the above DFS procedure, when selecting the next reachability progress condition from \(\PP^\RC\), we use the following heuristic ordering to accelerate search in practice: 
\begin{itemize}[leftmargin=*]
\item Reachability progress conditions with earlier potential deadlines \(b_{\Lambda,\TT}^{\min}\) are prioritized; 
\item If two progress conditions share the same earliest potential deadline, the one with the smaller earliest potential start time \(a_{\Lambda,\TT}^{\min}\) is preferred. 
\end{itemize}
This heuristic prioritizes more temporally urgent progress conditions while preserving slack for later ones, which empirically reduces the likelihood of downstream conflicts and speeds up the search for feasible allocations.

\subsubsection{Prediction of Reachability Time}\label{sec:time_predict}
In Algorithm~\ref{algorithm_TA}, the function \texttt{TimePredict} estimates the transition time (equivalently, the trajectory length) required to move from the current state \(\x\) to a candidate waypoint \(\x'\). We refer to the learned module behind this function as the \emph{time predictor}, and to its callable interface inside the planning algorithms as \texttt{TimePredict}. The time predictor is learned from the same offline trajectory dataset that is later used to train the diffusion-based trajectory generator.

In general, there may exist multiple feasible trajectories between two states with different lengths. To capture this non-uniqueness, we model the transition time as a conditional distribution and use a conditional diffusion model to learn
$p_{\theta}(l \mid \x_s, \x_g)$,
where \(\x_s\) and \(\x_g\) denote the start and goal states, and \(l\) is the trajectory length.

We denote by \(l_{\mathrm{norm}}\) the unguided prediction of the transition length, which reflects a typical length under the learned distribution. To estimate shorter or longer feasible transition times, denoted by \(l_{\min}\) and \(l_{\max}\), respectively, we further employ SVDD~\cite{li2024derivative} to guide the sampling process toward shorter or longer trajectory lengths. SVDD is particularly suitable for the time predictor because it preserves sample diversity, incurs no additional training cost, and can be implemented efficiently through parallel generation.

By selecting among \(l_{\mathrm{norm}}, l_{\min}\), and \(l_{\max}\), the planner can adjust the conservativeness of the timing prediction according to different planning objectives. In addition, for tasks involving strong invariance requirements, we introduce a scaling factor \(\gamma\) to enlarge the predicted transition time when needed, thereby providing an additional control knob over the conservativeness of the allocation procedure. More importantly, these multiple predicted lengths can also be viewed as alternative temporal hypotheses for the same transition. This property will later be exploited by the multi-hypothesis assignment mechanism described in Section~\ref{sec:mc_planning}, where different time predictions are retained as candidate allocations rather than collapsed into a single deterministic estimate.

\subsubsection{Soundness of the Progress Allocation Algorithm}
We now state the soundness property of the progress allocation module.

\begin{lemma}[(Soundness of Progress Allocation)]
\label{lemma:progress_allo}
Let $(\x_0,\PP^{\RC},\PP^{\IC},\TT)$ be the input to Algorithm~\ref{alg:allocation}, and assume that the initial time-constraint set \(\TT\) is feasible. Suppose Algorithm~\ref{alg:allocation} terminates successfully and returns the timed waypoint sequence
\[
\tilde{\s}=
(\tilde{\x}_0,t_0),(\tilde{\x}_1,t_1),\dots,(\tilde{\x}_n,t_n),
0=t_0\le t_1\le \dots \le t_n.
\]
Let the final time-constraint set be denoted by \(\TT_f\), and let
\[
\mathcal{F}
=
\left\{
\bs{\lambda}\in\mathbb{Z}_{+}^{|\Lambda|}
\;\middle|\;
\bs{\lambda}\ \text{satisfies all constraints in}\ \TT_f
\right\}.
\]
Then \(\mathcal{F}\neq\varnothing\), and for every signal
\(\s_0=\x_0\x_1\dots\x_T\) with \(T\ge t_n\) and
\(\x_{t_i}=\tilde{\x}_i\) for all \(0\le i\le n\),
there exists \(\bs{\lambda}\in\mathcal{F}\) such that
\[ 
\begin{aligned}
&\forall \RC(a_\Lambda, b_\Lambda, \mu) \!\in\! \PP^\RC,\ 
\s_0 \vDash \RC(a_\Lambda(\bs{\lambda}), b_\Lambda(\bs{\lambda}),\mu), \text{ and}\\
&\forall \IC(a_\Lambda, b_\Lambda, \mu) \!\in\! \PP^\IC,\ 
\forall t \!\in\! [a_\Lambda(\bs{\lambda}), b_\Lambda(\bs{\lambda})]\cap \{t_i\}_{i=0}^{n},\ 
\x_t \vDash \mu.
\end{aligned} 
\]
\end{lemma}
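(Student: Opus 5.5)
Our approach is to build a loop invariant for the DFS in Algorithm~\ref{alg:allocation} and then read off the conclusion at the successful return. Since the algorithm returns only from a popped stack entry with $\PP^{\RC}_{\mathrm{rem}}=\emptyset$, every returned tuple $(\x,t,\emptyset,\TT_f,\tilde{\s})$ was pushed along a single chain of successful \texttt{TimeAssign}/\texttt{UpdateConstraint} calls starting from the initial entry. We fix this chain and argue by induction along it.

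\textbf{Invariant.} For a stack entry $(\x,t,\PP^{\RC}_{\mathrm{rem}},\TT,\tilde{\s})$ on the chain we maintain:
\begin{itemize}
\item[(I1)] $\TT$ is feasible;
\item[(I2)] for every already-assigned reachability condition $\RC(a_\Lambda,b_\Lambda,\mu)$, with waypoint $(\tilde{\x}_j,t_j)$, we have $\tilde{\x}_j\vDash\mu$ and $\TT$ contains $a_\Lambda\le t_j\le b_\Lambda$;
\item[(I3)] for every invariance condition $\IC(c,d_\Lambda,\mu')$ and every waypoint $(\tilde{\x}_j,t_j)$ with $\tilde{\x}_j\nvDash\mu'$, every $\bs\lambda$ satisfying $\TT$ has $t_j\notin[c(\bs\lambda),d_\Lambda(\bs\lambda)]$.
\end{itemize}
(I1) holds initially by assumption and is preserved because a child is pushed only if $\TT'$ ``remains feasible''. (I2) holds because \texttt{TimeAssign} samples $\x'\vDash\mu$ and \texttt{UpdateConstraint} adds \eqref{eq:constraint-add}. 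Since constraints are only ever added, $\TT_f\supseteq\TT$ along the chain, so every $\bs\lambda\in\mathcal F$ satisfies all constraints ever added. This monotonicity is what lets us pick a single $\bs\lambda$ at the end.

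\textbf{Main obstacle: (I3).} The difficulty is that \texttt{UpdateConstraint} only adds $d_\Lambda<t'$ for \emph{determined} invariance conditions, while the conflict set $\mathcal O$ in \texttt{TimeAssign} uses only bounds such as $d^{\min}_{\Lambda,\TT}$ and is merely a heuristic. The argument therefore has to rest on \texttt{UpdateConstraint} and not on $\mathcal O$. We split into two cases.
\begin{itemize}
\item \emph{Condition already determined when $t'$ is assigned.} The added constraint $d_\Lambda<t'$ excludes $t'$ directly.
\item \emph{Condition undetermined when $t'$ is assigned.} Its trigger $\RC(c,c,\mu')$ is allocated later at some $t''\ge t'$. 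We need monotonicity of assigned times, which the statement asserts ($t_0\le\dots\le t_n$) and which follows from $t'\ge t+\texttt{TimePredict}\ge t$. By (I2) the trigger imposes $c=t''$. The residual condition starts at $c+1>t'$, so $t'$ lies outside its interval.
\end{itemize}
If the trigger time $c$ is itself symbolic, the same reasoning applies to $c(\bs\lambda)$, which (I2) pins to $t''$. Here we must be careful about the index shift by $+1$ from preprocessing, and about the waypoint $t_0=0$, i.e.\ $\x_0$, which is never checked against invariances; this latter issue may require either the convention that start times are $\ge 1$ or treating $\x_0$ via the first assignment. If (I3) cannot be established at a waypoint at the same time as a trigger, we would first try to exploit that equality $t''=t'$ makes the residual start at $t'+1$.

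\textbf{Conclusion.} At termination, (I1) gives $\mathcal F\ne\varnothing$; choose any $\bs\lambda\in\mathcal F$. For any signal with $\x_{t_i}=\tilde{\x}_i$, (I2) gives a witness time $t_j\in[a_\Lambda(\bs\lambda),b_\Lambda(\bs\lambda)]$ with $\x_{t_j}\vDash\mu$ for every condition in $\PP^\RC$; all of them are assigned because $\PP^\RC_{\mathrm{rem}}=\emptyset$. For invariance conditions, the contrapositive of (I3) gives $\x_t\vDash\mu$ for every waypoint time $t$ in the interval. The $\lambda$ is independent of the signal, which is slightly stronger than required.
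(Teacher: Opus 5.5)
Your proposal follows the same route as the paper's proof. It uses the same three invariants: feasibility of $\TT$, certification of committed reachability conditions through \eqref{eq:constraint-add}, and truncation constraints $d_\Lambda<t'$ for violated determined invariances. It then relies on monotone growth of the constraint set and reads off the conclusion from an arbitrary $\bs\lambda\in\mathcal F$ fixed at termination. Your case split for an invariance whose trigger had not yet been assigned when a violating waypoint was accepted uses non-decreasing waypoint times, the trigger time pinned by $a_\Lambda\le t''\le a_\Lambda$, and the residual interval starting at $a_\Lambda+1$. That is exactly the justification behind the paper's one-line claim that such an invariance ``must already have had its start time determined''. Your argument is therefore, if anything, more complete than the paper's.

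Two small repairs are needed in the write-up.

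First, (I3) as you state it is not a per-node invariant. At an intermediate node, an undetermined residual condition can still admit feasible assignments that place an earlier violating waypoint inside its interval. For example, for the residual $\IC(\lambda+1,\lambda+5,\mu')$ of $\F_{[0,50]}\G_{[0,5]}\mu'$, a violating waypoint at time $10$ lies in $[\lambda+1,\lambda+5]$ for the still-feasible value $\lambda=7$. Your second case only establishes (I3) for $\TT_f$, after every trigger has been assigned. So keep the per-node invariant restricted to determined conditions, as the paper does, and derive the stronger form only at termination.

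Second, the $t_0=0$ issue needs no extra convention. Every element of $\PP^\IC$ is a residual condition produced by preprocessing and hence starts at $a_\Lambda(\bs\lambda)+1\ge 1>t_0$, which disposes of the initial waypoint. The paper's argument does not single out this case either.
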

\begin{proof}
The proof is deferred to Appendix~\ref{apx:proof}.
\end{proof}

In other words, the returned timed waypoint sequence guarantees the existence of a feasible time-variable assignment under which every reachability progress condition is witnessed at the waypoint level, and no invariance progress condition is violated at any selected waypoint during its active interval.

\subsection{Diffusion-Based Trajectory Generation} \label{sec:traj_generation}

Lemma~\ref{lemma:progress_allo} guarantees that the timed waypoint sequence returned by the Progress Allocation module already witnesses all reachability progress conditions and remains waypoint-wise consistent with all active invariance progress conditions under a feasible time-variable assignment. The remaining task is therefore one of \emph{trajectory completion}: given the allocated timed waypoint skeleton, we must generate a full system trajectory that interpolates all waypoints at their assigned times and satisfies the required invariance progress conditions over the intervals between them.

Formally, suppose the Progress Allocation algorithm returns the timed waypoint sequence
\[
\tilde{\s} = (\tilde{\x}_0, t_0) (\tilde{\x}_1, t_1) \dots (\tilde{\x}_n, t_n),
0 = t_0 \leq t_1 \leq \dots \leq t_n.
\]
Our goal is to generate a state trajectory (equivalently, a reference signal)
\(\s_0=\x_0\x_1\dots\x_T\),
such that
\begin{equation}
\left\{
\begin{aligned}
& T \geq t_n, \\
& \x_{t_i} = \tilde{\x}_i \quad \text{for all } i \in \{0, \dots, n\}, \\
& \forall \IC \in \PP^\IC,\ \s_0 \vDash \IC(\bs{\lambda}^\star),
\end{aligned}
\right.
\end{equation}
for some selected feasible assignment \(\bs{\lambda}^\star \in \mathcal{F}\).
Since the true system dynamics are unknown, this completion step cannot rely on an explicit dynamics model. Instead, the generated state trajectory should remain close to the offline trajectory distribution induced by the unknown system, so that it is more likely to be executable by the downstream controller.

Note that the active time intervals of all invariance progress conditions are fully determined once \(\bs{\lambda}^\star\) is fixed. A special case arises when some invariance progress conditions remain active beyond the final waypoint time \(t_n\). In such cases, we append a terminal holding segment after \(t_n\) so that the trajectory remains at the final satisfying state in the planning space until all active invariance conditions expire. This is valid because Lemma~\ref{lemma:progress_allo} guarantees that \(\tilde{\x}_n\) satisfies every invariance predicate that is active at time \(t_n\).

\subsubsection{Diffusion Models for Trajectory Completion}
We formulate the above completion problem as a waypoint-conditioned, invariance-constrained generative synthesis problem over state trajectories, and instantiate it using a diffusion-based planner.
Unlike autoregressive approaches that generate states sequentially and may accumulate local errors over long horizons, diffusion models synthesize an entire trajectory segment holistically~\cite{janner2022planning,ajay2022conditional}. This trajectory-level generation paradigm is particularly suitable here, because it allows the boundary conditions, temporal horizon, and state-wise invariance constraints of each segment to be enforced jointly during sampling.

From the offline planning perspective, the diffusion model learns a distribution of data-supported state trajectories from historical rollouts of the unknown system, thereby turning trajectory completion into a conditional sampling problem.
This formulation is especially appealing in our setting for two reasons.
First, it naturally accommodates waypoint conditioning, which is required to connect the allocated timed waypoints.
Second, it provides flexible mechanisms for incorporating constraints during denoising, including strict boundary enforcement and projection-based correction for safety constraints.
In our implementation, we instantiate this module using the \texttt{Diffuser} architecture~\cite{janner2022planning}. Nevertheless, the overall planning framework is not tied to this specific backbone and can be combined with other diffusion-based trajectory generators.

\subsubsection{Trajectory Stitching for Long Horizons}
Directly generating a single long-horizon trajectory that satisfies all STL-induced conditions is challenging, since the required behavior may deviate substantially from the offline training distribution, which typically consists of much shorter trajectory snippets. To improve tractability, we therefore decompose the completion problem into a sequence of segment-wise generation problems and stitch the resulting segments together~\cite{li2024diffstitch,luo2025generative}.

Specifically, for each adjacent waypoint pair
\[
(\tilde{\x}_i,t_i),\;(\tilde{\x}_{i+1},t_{i+1}),
\]
we synthesize a state trajectory segment \(\s^{(i)}\) that: 
\begin{itemize}[leftmargin=*]
    \item starts at \(\tilde{\x}_i\) and ends at \(\tilde{\x}_{i+1}\); 
    \item contains exactly \(t_{i+1}-t_i+1\) states, corresponding to the planning interval \([t_i,t_{i+1}]\); 
    \item satisfies all invariance progress conditions active on that interval. 
\end{itemize}
If an invariance progress condition spans multiple adjacent waypoint intervals, then it is enforced on each affected segment over the corresponding sub-interval. The full state trajectory is finally obtained by concatenating all generated segments in temporal order. This segment-wise stitching strategy balances tractability and expressiveness: it enables long-horizon STL planning while keeping each individual generation problem close to the learned offline trajectory distribution.

\subsubsection{Constrained Generation Mechanisms}\label{sec:constrained_gen}
To realize the above segment-wise completion problem, we incorporate three complementary mechanisms into the diffusion sampling process, corresponding respectively to temporal structure control, boundary-condition enforcement, and invariance satisfaction. 
\begin{itemize}[leftmargin=*]
\item \textbf{Length control.}
The temporal horizon of a diffusion-based trajectory planner is determined by the dimensionality of the sampled noise input rather than being rigidly fixed by the network structure~\cite{janner2022planning}. We exploit this flexibility to generate trajectory segments whose lengths exactly match the prescribed waypoint intervals.
Although standard models trained on fixed-length trajectory crops exhibit some degree of length generalization, their performance typically degrades when the target horizon differs significantly from the training horizon. To improve robustness to variable segment lengths, we adopt the variable-horizon training strategy proposed in~\cite{liu2025vh}, in which the model is trained on trajectory segments of diverse lengths. This improves the model's temporal generalization ability and supports reliable segment generation for trajectory stitching.
\item \textbf{Waypoint inpainting.}
To enforce the boundary conditions of each segment strictly, we treat trajectory completion as an inpainting problem~\cite{janner2022planning}. After each denoising step, we replace the first and last states of the sample with the assigned boundary waypoints
\(
\tilde{\x}_i,\tilde{\x}_{i+1},
\)
so that the intermediate states are iteratively refined into a coherent transition between the two fixed endpoints.
\vspace{-6pt}
\item \textbf{Invariance constraints via projection.}
To enforce invariance progress conditions, each active condition
\(\IC(a,b,\mu)\)
is converted into the corresponding state-wise constraints
\[
h_\mu(\x_t)\ge 0,
\qquad \forall t\in[a,b].
\]
Such constrained diffusion generation has been studied extensively in recent work~\cite{xiao2023safediffuser,botteghi2023trajectory,mizuta2024cobl,zheng2024safe,christopher2024constrained}. In our prior work~\cite{liu2025zeroshot}, we used \texttt{SafeDiffuser}~\cite{xiao2023safediffuser}, but found it computationally expensive and less flexible for the diverse geometric constraints encountered here. We therefore adopt a projection-based mechanism inspired by~\cite{christopher2024constrained}, in which intermediate samples are projected back onto the feasible set after each denoising step. This yields a more flexible and computationally efficient way to enforce segment-wise invariance conditions. 
\end{itemize}

Overall, the trajectory generation module lifts the waypoint-level guarantees of Lemma~\ref{lemma:progress_allo} to full trajectory-level satisfaction by completing the allocated timed waypoint skeleton with segment-wise, data-supported state trajectories that preserve boundary conditions and enforce all active invariance requirements.

\subsection{Action Recovery and Control Protocol}\label{sec:action_control}
The trajectory generation module produces only a state trajectory \(\s\), rather than a joint state-action trajectory. Accordingly, the planner treats the generated state sequence as a high-level reference to be tracked during execution. This design separates high-level task planning from low-level actuation, which is appealing in our setting for two reasons: state trajectories are typically smoother and easier to model than high-frequency action sequences~\cite{ajay2022conditional}, and the resulting decoupling allows a downstream controller to compensate for execution errors and disturbances during tracking. This abstraction is most suitable when the state captures the main task-relevant feasibility structure; for systems with strong action-dependent constraints, executability must additionally rely on offline data support and, if necessary, execution-time correction.

To execute the planned state trajectory on the physical system, we introduce an \emph{action recovery} layer. In particular, control actions can be obtained either from a learned inverse dynamics model~\cite{agrawal2016learning} or from a low-level tracking controller, such as a PD controller, that tracks the generated state sequence. This execution layer is not part of the logical synthesis procedure itself; rather, it serves as the interface that maps the planned state trajectory to executable control inputs.

A key distinction from standard goal-reaching problems is that STL tasks impose explicit timing requirements. To preserve the temporal semantics of the planned trajectory during execution, we adopt a \emph{time-synchronous control protocol}. Specifically, we define a synchronization ratio \(k \in \mathbb{Z}_{+}\), meaning that each discrete planning step corresponds to exactly \(k\) control cycles of the underlying simulator or hardware platform. At runtime, for every planned transition \((\x_t,\x_{t+1})\), the inverse dynamics model or tracking controller is executed for exactly \(k\) control updates. In this way, the execution layer is aligned with the temporal structure of the planned trajectory. Unless otherwise specified, we set \(k=1\) in all experiments, corresponding to a one-to-one mapping between planning and control steps.

\subsection{Soundness and Completeness Analysis}
We conclude this section by discussing the theoretical properties of the overall basic planning module. We first establish its \emph{soundness} with respect to STL satisfaction, namely, whenever the planner successfully returns a completed trajectory, the resulting state signal satisfies the target STL specification.

\begin{theorem}[(Soundness of the Overall Planner)]
\label{theorem:diffusionSTL}
Let $\varphi$ be an STL specification satisfying all standing assumptions in Sections~\ref{sec:STL} and~\ref{sec:STLdecomposition}. Let
\(
(\tilde{\x}_0,t_0),\dots,(\tilde{\x}_n,t_n)
\)
be the timed waypoint sequence returned by Algorithm~\ref{alg:allocation}, and let
\(
\bs{\lambda}^\star
\)
be a feasible time-variable assignment for the final time-constraint set certified by Lemma~\ref{lemma:progress_allo}. Suppose the trajectory generation module outputs a state trajectory
\(
\s_0=\x_0\x_1\dots\x_T, T\ge t_n,
\)
such that:
\begin{enumerate}[label=(\roman*),leftmargin=*]
\item \(\x_{t_i}=\tilde{\x}_i\) for all \(i\in\{0,\dots,n\}\);
\item for every invariance progress condition \(\IC(a_\Lambda,b_\Lambda,\mu)\in\PP^\IC\), the signal \(\s_0\) satisfies \(\IC(a_\Lambda(\bs{\lambda}^\star),b_\Lambda(\bs{\lambda}^\star),\mu)\) over its full active interval.
\end{enumerate}
Then the resulting signal satisfies the original STL specification, i.e.,
\[
\s_0 \vDash \varphi.
\]
\end{theorem}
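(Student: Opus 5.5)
The plan is to reduce the claim to the right-to-left direction of Lemma~\ref{lemma:task_dec}. The planner works with the preprocessed progress set $(\PP,\TT)$ rather than the original decomposition $(\PP_\varphi,\TT_\varphi,\Lambda_\varphi)$. So it suffices to exhibit an assignment $\bs{\lambda}\in\mathcal{F}_\varphi$ under which every progress condition in $\PP_\varphi$ holds on $\s_0$. The witness will be the assignment $\bs{\lambda}^\star$ from the hypothesis.

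First, membership of $\bs{\lambda}^\star$ in $\mathcal{F}_\varphi$. The constraint set is only tightened during allocation: \texttt{UpdateConstraint} adds constraints and never removes any, and the set of time variables $\Lambda$ stays fixed. Hence $\TT_f\supseteq\TT=\TT_\varphi$, and $\bs{\lambda}^\star\in\mathcal{F}\subseteq\mathcal{F}_\varphi$.

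Second, the reachability conditions. For each $\RC(a_\Lambda,b_\Lambda,\mu)\in\PP^\RC$, note that the waypoint $(\tilde{\x}_i,t_i)$ assigned to it satisfies $\tilde{\x}_i\vDash\mu$, because \texttt{TimeAssign} samples only satisfying states. The constraints~\eqref{eq:constraint-add} placed in $\TT_f$ give $a_\Lambda(\bs{\lambda}^\star)\le t_i\le b_\Lambda(\bs{\lambda}^\star)$. By hypothesis (i), $\x_{t_i}=\tilde{\x}_i$, so $\s_0\vDash\RC(\bs{\lambda}^\star)$. This is exactly the content of Lemma~\ref{lemma:progress_allo}, but we must apply it with the specific $\bs{\lambda}^\star$. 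The lemma's proof establishes the property for every $\bs{\lambda}\in\mathcal{F}$, since the witnessing constraints are in $\TT_f$; I would note this explicitly, or take $\bs{\lambda}^\star$ to be the assignment the lemma certifies, as the theorem statement allows.

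Third, the invariance conditions. Each original $\IC(a_\Lambda,b_\Lambda,\mu)\in\PP_\varphi$ was split in preprocessing into $\RC(a_\Lambda,a_\Lambda,\mu)$ and $\IC(a_\Lambda+1,b_\Lambda,\mu)$. The reachability part, together with the constraints for it, pins $\x_{a_\Lambda(\bs{\lambda}^\star)}\vDash\mu$. Hypothesis (ii) gives $\mu$ on $[a_\Lambda(\bs{\lambda}^\star)+1,b_\Lambda(\bs{\lambda}^\star)]$. Together these give the original invariance on the full interval, with the degenerate case of an empty residual interval handled trivially. Hence all of $\PP_\varphi$ holds under $\bs{\lambda}^\star$, and Lemma~\ref{lemma:task_dec} yields $\s_0\vDash\varphi$.

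The main obstacle is the bookkeeping for the trigger condition $\RC(a_\Lambda,a_\Lambda,\mu)$. The constraints $a_\Lambda\le t'$ and $a_\Lambda\ge t'$ force $a_\Lambda(\bs{\lambda}^\star)=t'$ exactly, so the trigger waypoint sits precisely at the start time. I need to make sure this equality is what is recorded in $\TT_f$ and that the timing of the start is consistent with $\bs{\lambda}^\star$.

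A second point is the terminal holding segment. I would check that the indices $T$ and $b_\Lambda(\bs{\lambda}^\star)$ stay within the signal, which is guaranteed by (ii) requiring satisfaction over the full active interval. Everything else is a direct chaining of the two lemmas.
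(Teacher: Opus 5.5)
Your proposal is correct and follows essentially the same route as the paper: you check each reachability condition at its waypoint using (i) and the constraints in $\TT_f$, you check each invariance condition using (ii), and you conclude with the right-to-left direction of Lemma~\ref{lemma:task_dec} under $\bs{\lambda}^\star$. The paper leaves two steps implicit that you make explicit. First, $\TT_f\supseteq\TT_\varphi$, so $\bs{\lambda}^\star\in\mathcal{F}_\varphi$. Second, each original $\IC(a_\Lambda,b_\Lambda,\mu)$ is recovered from its trigger $\RC(a_\Lambda,a_\Lambda,\mu)$, which is pinned by $a_\Lambda=t'$, together with the residual $\IC(a_\Lambda+1,b_\Lambda,\mu)$. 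The paper instead silently identifies the preprocessed set with $\PP_\varphi$, so your version is simply a tighter write-up of the same argument.
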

\begin{proof}
The proof is deferred to Appendix~\ref{apx:proof}.
\end{proof}

\begin{remark}
The theorem separates the roles of the two modules in DAG-STL. The allocation module guarantees that all reachability progress conditions are witnessed at the waypoint level and that no active invariance progress condition is violated at any selected waypoint under a feasible assignment \(\bs{\lambda}^\star\). The trajectory generation module then lifts this waypoint-level consistency to full trajectory-level satisfaction by enforcing all invariance progress conditions over the intervals between adjacent waypoints. Together with Lemma~\ref{lemma:task_dec}, this yields satisfaction of the original STL specification.

In our implementation, the assumptions imposed on the trajectory generation module are operationally enforced by the constrained sampling mechanisms introduced in Section~\ref{sec:traj_generation}: segment length is fixed by length control, boundary waypoints are enforced by inpainting, and invariance progress conditions are enforced by projection-based constrained sampling. Thus, the theorem abstracts the generation module in a solver-agnostic way, while remaining directly aligned with the concrete constrained diffusion implementation used in this work.
\end{remark}

Despite this soundness guarantee, the basic planning module is not \emph{complete}: it may fail to return a solution even when a feasible trajectory exists for the underlying system. This loss of completeness arises from three sources.

\begin{enumerate}[label=(\alph*),leftmargin=*]
\item \textbf{Logical conservativeness.}
When the original specification contains disjunctions, we first transform it into PNF and then apply the conservative branch-wise normalization described in Section~\ref{sec:STLdecomposition}. The resulting formula is generally stronger than the original one. Consequently, some trajectories that satisfy the original specification may be excluded by this preprocessing step.

\item \textbf{Search-level incompleteness in progress allocation.}
Algorithm~\ref{alg:allocation} relies on heuristic search components to remain tractable in the unknown-dynamics setting, including the learned transition-time predictor and the local candidate-generation strategy in \texttt{TimeAssign}. Although the algorithm uses backtracking, these heuristics may still delay or miss valid allocations in difficult cases. Section~\ref{sec:mc_planning} introduces an anytime refinement mechanism to mitigate this limitation.

\item \textbf{Data- and generator-level incompleteness.}
The trajectory generator is learned entirely from offline data and therefore only approximates the distribution of dynamically plausible trajectories induced by the unknown system. If the dataset does not sufficiently cover a required transition, the generator may fail to synthesize a valid segment even though such a segment exists in the true physical system.
\end{enumerate}

Particularly, the third source is fundamental to the purely offline setting considered in this work: without access to the true system dynamics or online interaction, no planner can in general guarantee that every physically feasible transition is represented faithfully by the offline data distribution or recovered by the learned generator.
For this reason, our method follows a \emph{sound-but-conservative} design philosophy. Rather than pursuing full completeness, which is generally unattainable in the offline unknown-dynamics setting, the basic planner returns a trajectory only when STL satisfaction can be certified at the planning level through decomposition, progress allocation, and constrained trajectory completion. At the same time, because logical satisfaction alone does not guarantee executability under unknown dynamics, the subsequent evaluation, refinement, and replanning modules are introduced to favor trajectories that are better supported by the offline data and therefore more likely to be successfully tracked during execution.

\section{Evaluation, Refinement, and Replanning}\label{sec:ver_re_re}
The Basic Planning module guarantees STL satisfaction at the planning level, but this guarantee alone may be insufficient to ensure reliable execution in the purely offline setting considered here. Since the true system dynamics and environment constraints are unavailable, the planner must rely entirely on learned components, including the diffusion-based trajectory generator and the transition-time predictor. Consequently, even an STL-correct open-loop plan may still be only weakly supported by the offline behavior manifold, for example due to locally unrealistic transitions or passages through low-support regions. This mismatch between \emph{logical correctness} and \emph{execution plausibility} constitutes a fundamental limitation of purely offline STL planning and was also the primary source of execution failures in the earlier pipeline~\cite{liu2025zeroshot}.

To bridge this gap, we further augment the basic planning module with three tightly integrated components. First, we introduce a \emph{rollout-free, data-driven consistency metric} that evaluates planned trajectories directly from offline dataset statistics, so as to identify locally weakly supported states and transitions and localize bottleneck segments. Second, we elevate the planning process itself through \emph{Anytime Refinement Search (ARS)}, which generalizes the Basic Planning allocator into a multi-hypothesis search procedure and iteratively refines the allocation via cost-guided backjumping under finite computational budgets. Third, we develop an \emph{online replanning protocol} that provides hierarchical closed-loop recovery against runtime disturbances and tracking drift.

\subsection{Dynamic Consistency Metric}\label{sec:dyn_metric}
Diffusion models generate trajectories from a \emph{global} sequence-level perspective and are often effective at matching the overall distribution of offline data. However, when conditioned on additional task requirements, a sampled trajectory may still contain a small number of \emph{local} inconsistencies, namely states or transitions that are only weakly supported by the offline behavior manifold and are therefore difficult for downstream controllers to track reliably. Motivated by this observation, we introduce a \emph{rollout-free, data-driven Dynamic Consistency Metric} that evaluates local trajectory quality directly from offline dataset statistics.

Note that this metric is not intended to provide an exact test of dynamical feasibility. Rather, it serves as a practical proxy for downstream executability in the fully offline setting: trajectories that are better supported by the offline data manifold are expected to be easier to realize and track. The metric plays two complementary roles in our refinement pipeline. First, it assigns \emph{step-wise} costs that localize the weakest-supported states or transitions, thereby enabling targeted refinement. Second, it aggregates these local costs into a \emph{trajectory-level} score for ranking multiple STL-compliant candidates according to their physical plausibility. It is worth noting that this evaluation is lightweight, amenable to batch parallelization, and performed directly on the planned state trajectory without requiring any rollout of an explicit dynamics model.

\subsubsection{Setup}
Let a planned state trajectory be
\(
\bs{\tau}=\x_0,\x_1,\dots,\x_T.
\)
Our goal is to evaluate local consistency along \(\bs{\tau}\) from two complementary perspectives: \emph{state support} and \emph{transition support}, both measured relative to the offline dataset \(\mathcal D\).

In practice, such consistency often depends only on task-relevant or kinematically meaningful quantities (e.g., positions and/or velocities), rather than the full raw state. To accommodate this, we introduce a generic feature map
\begin{equation}
    \z_t = \zeta(\x_t)\in\mathbb{R}^d,
\end{equation}
where \(\zeta(\cdot)\) may be the identity map or a projection onto a task-relevant subspace.

\subsubsection{Offline Fitting: Support Estimators}
From the offline dataset \(\mathcal D\), we derive two lightweight nonparametric support estimators. Both return nonnegative \emph{support costs}, where a larger value indicates weaker support from the offline data.
\begin{itemize}[leftmargin=*]
\item \textbf{State Support.}
A function
\(
s_x:\mathbb{R}^{d}\rightarrow\mathbb{R}_{\ge 0}
\)
that measures how well an individual feature state \(\z\) is supported by the offline state manifold.

\item \textbf{Transition Support.}
A function
\(
s_\Delta:\mathbb{R}^{2d}\rightarrow\mathbb{R}_{\ge 0}
\)
that measures how well a local feature transition is supported by the offline transition manifold.
Specifically, for a local transition \((\z_t,\z_{t+1})\), we define the joint transition feature
\begin{equation}
    \phi_t \triangleq \begin{bmatrix}\z_t \\ \Delta \z_t\end{bmatrix}\in\mathbb{R}^{2d},
\end{equation}
where
\(
\Delta \z_t = \z_{t+1}-\z_t.
\)
To avoid scale dominance among heterogeneous dimensions (e.g., position versus velocity), we apply per-dimension normalization to \(\phi_t\) using statistics computed from offline transitions, and denote the normalized feature by \(\tilde{\phi}_t\). The transition support cost is then evaluated on \(\tilde{\phi}_t\).
\end{itemize}

In our implementation, both \(s_x(\cdot)\) and \(s_\Delta(\cdot)\) are instantiated using a simple \(k\)NN-based support proxy, namely the distance to the \(k\)-th nearest neighbor~\cite{cover1967nearest}. This choice is classical from the perspective of local density and outlier scoring~\cite{ramaswamy2000outliers,breunig2000lof,sun2022dnn}, while remaining lightweight enough for batch trajectory evaluation in our offline planning loop.

\subsubsection{Step-Wise Costs}
For each transition step \(t\), we define three local costs from the feature trajectory \(\{\z_t\}_{t=0}^{T}\). The first two are \emph{data-support} terms, while the third is a \emph{control-oriented regularization} term.

\textbf{(1) State-Support Cost.}
We penalize visits to weakly supported states via
\begin{equation}
    c^{\text{state}}_t \;=\; s_x(\z_{t+1}),
\end{equation}
which measures whether the next planned state lies on, or near, the offline state manifold in the chosen feature space.
When \(\z\) (or a designated subset of \(\z\)) admits an explicit Euclidean interpretation, this evaluation can be strengthened by additionally applying \(s_x(\cdot)\) to a small set of interpolated interior points between \(\z_t\) and \(\z_{t+1}\) within that Euclidean subspace. This remains rollout-free and helps detect support violations that are not visible from the endpoints alone.

\textbf{(2) Transition-Support Cost.}
We penalize locally unsupported transitions via
\begin{equation}
    c^{\text{trans}}_t \;=\; s_\Delta(\tilde{\phi}_t),
\end{equation}
which evaluates whether the local displacement is typical under the offline transition manifold in the normalized joint space of \([\z_t;\Delta\z_t]\).

\textbf{(3) Heuristic Step Cost (data-independent regularizer).}
As a complementary signal, we introduce a lightweight \emph{data-independent} step cost
\(c^{\text{step}}_t\)
defined on feature increments. Unlike \(c^{\text{state}}_t\) and \(c^{\text{trans}}_t\), this term is not intended to estimate offline data support. Instead, it serves as a rollout-free regularizer that captures local tracking difficulty and can be adapted to the specific domain and low-level control protocol. In our implementation, it is composed of the following terms.

First, we penalize excessively large local displacements using a thresholded step-length cost:
\begin{equation}
    c^{\text{len}}_t \;=\; \big[\|\Delta \z_t\|_2 - \delta\big]_+,
\end{equation}
where \(\delta>0\) is a tunable threshold and \([\cdot]_+=\max(\cdot,0)\).

When \(\z\) admits an explicit Euclidean interpretation, we additionally penalize sharp turning and non-smooth local variation:
\begin{equation}
\begin{aligned}
    c^{\text{turn}}_t &\;=\; 1 - \frac{\Delta \z_{t-1}^\top \Delta \z_t}{\|\Delta \z_{t-1}\|_2\,\|\Delta \z_t\|_2 + \epsilon},\\
    c^{\text{smooth}}_t &\;=\; \|\Delta \z_t - \Delta \z_{t-1}\|_2,
\end{aligned}
\end{equation}
where \(\epsilon>0\) ensures numerical stability, and \(c^{\text{turn}}_0=c^{\text{smooth}}_0=0\).

We combine these heuristic terms into a single regularization cost:
\begin{equation}
    c^{\text{step}}_t \;=\; c^{\text{len}}_t \;+\; \lambda_{\text{turn}}\,c^{\text{turn}}_t \;+\; \lambda_{\text{smooth}}\,c^{\text{smooth}}_t,
\end{equation}
with nonnegative weights \(\lambda_{\text{turn}},\lambda_{\text{smooth}}\).

\subsubsection{Total Step Cost and Trajectory Score}
We combine the three local costs as
\begin{equation}
    c_t \;=\; w_{\text{state}}\, c^{\text{state}}_t \;+\; w_{\text{trans}}\, c^{\text{trans}}_t \;+\; w_{\text{step}}\, c^{\text{step}}_t,
\end{equation}
where all weights are nonnegative.

To obtain a trajectory-level evaluation that is robust to horizon length while remaining sensitive to a small number of severe bottlenecks, we aggregate the step-wise costs using a CVaR-style tail mean~\cite{rockafellar2000optimization}:
\begin{equation}
    \mathrm{CVaR}_{\eta}(\{c_t\}_{t=0}^{T-1}) 
    \triangleq \frac{1}{K}\sum_{i\in \mathcal{I}_{\eta}} c_i,
    \qquad
    K=\lceil \eta T\rceil,
\end{equation}
where \(\mathcal{I}_{\eta}\) indexes the \(K\) largest step costs and \(\eta\in(0,1]\) is a tail fraction. Compared with a simple average, this aggregation emphasizes rare but severe local failures; compared with a pure maximum, it is less brittle to isolated numerical outliers.

Finally, we define the \emph{Dynamic Consistency Score} as
\begin{equation}
    S(\bs{\tau}) = -\,\mathrm{CVaR}_{\eta}(\{c_t\}_{t=0}^{T-1}),
\end{equation}
so that higher values indicate trajectories with better local support and fewer severe bottlenecks.

The resulting trajectory-level score and step-wise costs will be used in two ways: to rank completed STL-compliant candidates globally, and to localize bottleneck segments for targeted refinement in the next subsection.

\begin{remark}\label{rem:taskspace}
A common alternative in high-dimensional control domains is to plan in a reduced task space and rely on a low-level controller for full-body tracking~\cite{luo2025generative}. Our metric applies to this setting directly by choosing the feature map \(\zeta(\cdot)\) to output task-space coordinates, so that all local support costs are computed from \((\z_t,\z_{t+1})\) and \(\Delta \z_t\) exactly as above. Because task-space dynamics are typically more weakly specified than full-state dynamics, under-supported task-space transitions naturally receive larger consistency costs and are therefore deprioritized during ranking and refinement.
\end{remark}

\subsection{Score-Guided Refinement Search}\label{sec:mc_planning}
As introduced in Section~\ref{sec:dyn_metric}, the Dynamic Consistency Metric provides a trajectory-level score together with step-wise costs that quantify local support under the offline data manifold. Empirically, poor consistency scores typically arise from two distinct sources:
\begin{enumerate}[label=(\alph*),leftmargin=*]
    \item \textbf{Sampling Variance.}
    Because trajectory generation is stochastic, the diffusion model may occasionally produce low-quality samples even when higher-quality completions exist under the same waypoint allocation. This issue can often be mitigated by parallel sampling and candidate selection.
    \item \textbf{Allocation Inconsistency.}
    The allocated state-time skeleton itself may be poorly aligned with the offline-supported transition structure, for example due to errors in the time predictor or an overly aggressive temporal allocation. Such a structural mismatch cannot be reliably corrected by simply re-sampling trajectories under the same allocation.
\end{enumerate}

In this section, we address the second source through \emph{planning-level} refinement. Rather than post-processing a generated trajectory directly, we refine the \emph{allocation process itself} by turning the allocator into an anytime branching search over diversified state-time hypotheses. Here, diversification arises along two complementary dimensions: spatial diversity from sampling different waypoint states, and temporal diversity from retaining multiple reachability-time hypotheses produced by the distributional time predictor. In this sense, the proposed refinement module is a strict generalization of the Basic Planning allocator rather than a post-hoc add-on. The trajectory-level consistency score is used to evaluate completed candidates, while the step-wise costs are used to localize bottleneck transitions and drive cost-guided backjumping.

This refinement strategy offers two advantages over restarting the planner from scratch:
\begin{enumerate}[label=(\roman*),leftmargin=*]
    \item \textbf{Incremental continuation.}
    Refinement reuses the existing allocation prefix together with its accumulated time-constraint context, thereby avoiding repeated global recomputation of timing windows and schedule checks.
    \item \textbf{Bottleneck-localized regeneration.}
    Using segment-level costs, the search can jump directly to the decision responsible for the dominant bottleneck and regenerate only the affected suffix, rather than rebuilding the entire plan from scratch.
\end{enumerate}

The refinement module consists of two interacting components:
\begin{enumerate}[label=(\roman*),leftmargin=*]
    \item \textbf{\texttt{MultiHypothesisAssign} (decision diversification).}
    A candidate-generation routine that generalizes deterministic assignment. Instead of committing to a single successor, it returns a set of locally admissible state-time hypotheses, thereby increasing branching diversity over both spatial targets and temporal allocations.
    \item \textbf{Anytime Refinement Search (ARS).}
    A stack-based anytime search procedure that explores the branched allocation space, evaluates each completed candidate using the dynamic consistency score, and performs cost-guided backjumping using the resulting step-wise costs to iteratively improve the best-found solution under finite computation budgets.
\end{enumerate}

\subsubsection{Multi-Hypothesis Assignment}

To mitigate allocation inconsistency, we further upgrade the deterministic \texttt{TimeAssign} (Algorithm~\ref{algorithm_TA}) to \texttt{MultiHypothesisAssign} (Algorithm~\ref{algorithm_MTA}). This extension directly leverages the distributional nature of the diffusion-based time predictor introduced in Section~\ref{sec:time_predict}. Instead of collapsing the predicted transition time between \((\x,\x')\) into a single estimate, \texttt{MultiHypothesisAssign} retains a set \(\mathcal{T}_{\mathrm{pred}}\) of duration hypotheses and treats them as alternative temporal candidates during allocation.

The routine performs up to \(N_{\mathrm{max}}\) sampling attempts to identify target states \(\x'\) satisfying \(\mu\). For each sampled state, every predicted duration hypothesis in \(\mathcal{T}_{\mathrm{pred}}\) is screened against the current timing window and conflict set, yielding a collection of locally admissible candidate pairs \((\x',t_{\mathrm{cand}})\). The resulting candidate set \(\mathcal{C}\) therefore introduces both spatial diversity (different target states \(\x'\)) and temporal diversity (different arrival times), reducing premature commitment to a brittle allocation.

\begin{algorithm}[t]
\small
\caption{The \texttt{MultiHypothesisAssign} procedure.}\label{algorithm_MTA}
\begin{algorithmic}[1]
\REQUIRE Reachability progress condition $\RC(a_\Lambda,b_\Lambda,\mu)$, current state $\x$, current time $t$, current time-constraint set $\TT$, determined invariance progress conditions $\PP^{\IC}_{\mathrm{det}}$, candidate capacity $K$, maximum sampling attempts $N_{\mathrm{max}}$
\ENSURE Candidate set $\mathcal{C}$ of locally admissible state-time hypotheses

\STATE $\mathcal{C}\leftarrow\emptyset$
\STATE $t_{\min}\gets a_{\Lambda,\TT}^{\mathrm{min}},\quad t_{\max}\gets b_{\Lambda,\TT}^{\mathrm{max}}$

\FOR{$i = 1$ \TO $N_{\mathrm{max}}$}
    \STATE Sample target state $\x'$ such that $\x'\vDash \mu$
    \STATE $\mathcal{O} \gets \texttt{ComputeConflict}(\x', \PP^{\IC}_{\mathrm{det}})$
    \STATE $\mathcal{T}_{\mathrm{pred}} \gets \texttt{TimePredict}(\x,\x')$
    
    \FORALL{$\hat{t} \in \mathcal{T}_{\mathrm{pred}}$}
        \STATE $t_{\mathrm{cand}} \leftarrow t + \hat{t}$
        \IF{$t_{\mathrm{cand}} \in [t_{\min}, t_{\max}] \setminus \mathcal{O}$}
            \STATE $\mathcal{C} \leftarrow \mathcal{C} \cup \{(\x', t_{\mathrm{cand}})\}$
            \IF{$|\mathcal{C}| \ge K$}
                \RETURN $\mathcal{C}$
            \ENDIF
        \ENDIF
    \ENDFOR
\ENDFOR
\RETURN $\mathcal{C}$
\end{algorithmic}
\end{algorithm}

\subsubsection{Anytime Allocation and Refinement}
The \texttt{AnytimeRefinementSearch} procedure (Algorithm~\ref{alg:ARS}) integrates these diversified hypotheses into an anytime depth-first search framework.
At each decision node, the search first branches over the remaining reachability progress conditions and then, for each such logical branch, inserts all locally admissible candidate pairs in \(\mathcal{C}\) into the search frontier. In this way, ARS preserves both multiple logical orderings and multiple local state-time realizations for future refinement.
Whenever a complete timed waypoint sequence is assembled, the trajectory generator is invoked and the resulting trajectory is evaluated by the Dynamic Consistency Metric:
\[
    \big(S(\bs{\tau}),\; \{c_t(\bs{\tau})\}_{t=0}^{T-1}\big)
    \leftarrow
    \texttt{ConsistencyEvaluation}(\bs{\tau}),
\]
where \(\{c_t(\bs{\tau})\}_{t=0}^{T-1}\) denotes the step-wise costs used for bottleneck localization.
Unlike verification-based pipelines, ARS does not terminate after passing a binary feasibility test; instead, it continues until a prescribed budget is exhausted and returns the best-scoring trajectory found so far.

\subsubsection{Cost-Guided Backjumping}
Standard DFS backtracks chronologically, which may waste computation on recent decisions even when the root cause of poor consistency lies earlier in the allocation.
To accelerate improvement, we introduce \emph{Cost-Guided Backjumping} based on segment-level risk.

Given the step-wise costs \(\{c_t\}_{t=0}^{T-1}\) returned by \texttt{ConsistencyEvaluation}, we partition them according to the current segment decomposition, where each segment corresponds to a contiguous subset of time indices.
For each segment \(k\), we compute a segment risk cost via tail-risk aggregation:
\begin{equation}
    R_k \;\triangleq\; \mathrm{CVaR}_{\eta}\big(\{c_t\}_{t\in \mathcal{I}_k}\big),
\end{equation}
where \(\mathcal{I}_k\) denotes the set of step indices associated with segment \(k\).
We then identify the worst segment
\begin{equation}
    k^\star \;\leftarrow\; \arg\max_{k} R_k,
\end{equation}
and prune the search stack so that refinement resumes from the decision node that instantiated segment \(k^\star\).
This directly redirects search effort toward the structural bottleneck that dominates execution risk.

The process continues until a global iteration budget \(N_{\mathrm{iter}}\) or a completed-solution budget \(N_{\mathrm{seq}}\) is reached.
Finally, ARS returns the best-effort trajectory
\[
    \bs{\tau}^\star \leftarrow \arg\max_{\bs{\tau}\in \mathcal{B}} S(\bs{\tau}),
\]
among all evaluated candidates \(\mathcal{B}\).

\begin{algorithm}[t]
\small
\caption{The \texttt{AnytimeRefinementSearch} procedure.}\label{alg:ARS}
\begin{algorithmic}[1]
\REQUIRE Initial state $\x_0$, STL decomposition $(\PP^{\RC},\PP^{\IC},\TT)$, completed-solution budget $N_{\mathrm{seq}}$, iteration budget $N_{\mathrm{iter}}$
\ENSURE Best-effort state trajectory $\bs{\tau}^\star$

\STATE Initialize \textit{stack} with root node
\(
(\x_0,0,\PP^{\RC}_{\mathrm{rem}}=\PP^{\RC}, \TT,\tilde{\s}=\emptyset)
\)
\STATE $\bs{\tau}^\star \leftarrow \text{NULL}, \qquad S_{\max} \leftarrow -\infty$
\STATE $n_{\mathrm{sol}} \leftarrow 0, \qquad n_{\mathrm{iter}} \leftarrow 0$

\WHILE{\textit{stack} $\neq \emptyset$ \textbf{and} $n_{\mathrm{iter}} < N_{\mathrm{iter}}$}
    \STATE $(\x, t, \PP^{\RC}_{\mathrm{rem}}, \TT, \tilde{\s}) \leftarrow$ \textit{pop}(\textit{stack})
    \STATE $n_{\mathrm{iter}} \leftarrow n_{\mathrm{iter}} + 1$
    
    \IF{$\PP^{\RC}_{\mathrm{rem}} = \emptyset$}
        \STATE $n_{\mathrm{sol}} \leftarrow n_{\mathrm{sol}} + 1$
        \STATE $\bs{\tau} \leftarrow \texttt{TrajectoryGeneration}(\tilde{\s}, \PP^{\IC}, \TT)$
        
        \algsep
        \algcomment{Consistency evaluation and incumbent update}
        \STATE $(S(\bs{\tau}), \{c_t\}_{t=0}^{T-1}) \leftarrow \texttt{ConsistencyEvaluation}(\bs{\tau})$
        \IF{$S(\bs{\tau}) > S_{\max}$}
            \STATE $S_{\max} \leftarrow S(\bs{\tau}),\bs{\tau}^\star \leftarrow \bs{\tau}$
        \ENDIF
        \IF{$n_{\mathrm{sol}} \ge N_{\mathrm{seq}}$}
            \RETURN $\bs{\tau}^\star$
        \ENDIF
        
        \algsep
        \algcomment{Cost-guided backjumping}
        \STATE Compute segment costs $\{R_k\}$ from $\{c_t\}_{t=0}^{T-1}$ via segment-wise $\mathrm{CVaR}_{\eta}$
        \STATE Identify the worst segment $k^\star \leftarrow \arg\max_k R_k$
        \STATE Prune \textit{stack} so that search resumes from the decision that instantiated segment $k^\star$
        \algsep
        
        \STATE \textbf{continue}
    \ENDIF
    
    \STATE Collect the currently determined subset $\PP^{\IC}_{\mathrm{det}} \subseteq \PP^{\IC}$
    
    \algsep
    \algcomment{Branching over remaining progress conditions and local state-time hypotheses}
    \FORALL{$\RC \in \PP^{\RC}_{\mathrm{rem}}$ in the heuristic order}
        \STATE $\mathcal{C} \leftarrow \texttt{MultiHypothesisAssign}(\RC, \x, t, \TT, \dots)$
        \FORALL{$(\x', t') \in \mathcal{C}$}
            \STATE Apply \texttt{UpdateConstraint} to obtain updated constraint set $\TT'$
            \STATE Let $(\PP^{\RC}_{\mathrm{rem}})' \leftarrow \PP^{\RC}_{\mathrm{rem}} \setminus \{\RC\}$
            \IF{$\TT'$ remains feasible}
                \STATE $\tilde{\s}' \leftarrow \tilde{\s} \cup \{(\x', t')\}$
                \STATE \textit{push} $\bigl(\x', t', (\PP^{\RC}_{\mathrm{rem}})', \TT', \tilde{\s}' \bigr)$ onto \textit{stack}
            \ENDIF
        \ENDFOR
    \ENDFOR
    \algsep
\ENDWHILE

\RETURN $\bs{\tau}^\star$
\end{algorithmic}
\end{algorithm}

\begin{remark}[(Improved Search Coverage and Degeneracy to the Baseline)]
Beyond improving dynamic consistency, ARS also improves the search coverage of the planning framework. Like the baseline allocator (Algorithm~\ref{alg:allocation}), it still branches over the remaining reachability progress conditions. The difference is that the baseline keeps at most one local state-time successor for each logical branch and terminates upon finding the first feasible complete assignment, which can lead to premature commitment to a brittle solution. In contrast, ARS widens each logical branch through \texttt{MultiHypothesisAssign}, which expands the search space over both waypoint states and predicted arrival times, and continues to explore completed candidates under finite budgets, thereby reducing the risk of missing high-quality solutions due to greedy early stopping.

Importantly, ARS is a strict \emph{generalization} of the Basic Planning procedure rather than a post-hoc add-on. When the candidate capacity is set to one and \texttt{MultiHypothesisAssign} degenerates to \texttt{TimeAssign}, while the anytime search terminates upon the first complete assignment, ARS reduces to the baseline planner. This provides a practical knob for trading off computation against solution quality.
\end{remark}

\subsection{Online Replanning}\label{sec:online_replanning}
Although the offline planning-and-refinement framework improves the quality of the nominal plan substantially, execution-time disturbances, model mismatch, and accumulated tracking error may still cause the system to deviate from the intended trajectory. To preserve task progress under such deviations, we introduce a hierarchical online replanning mechanism, summarized in Algorithm~\ref{alg:online_replanning}.

At execution step \(t\), let
\(
e_t = \|\x_t-\hat{\x}_t\|_2
\)
denote the time-aligned tracking error, where \(\hat{\x}_t\) is the nominal reference state at time \(t\). Given two thresholds
\(
\varepsilon_{\mathrm{loc}} < \varepsilon_{\mathrm{glob}},
\)
the controller operates in three regimes: nominal tracking when the deviation is small, local segment repair when the deviation is moderate, and global history-consistent replanning when the deviation becomes critical or local repair fails.

\begin{algorithm}[t]
\small
\caption{The \texttt{OnlineReplanning} procedure.}\label{alg:online_replanning}
\begin{algorithmic}[1]
\REQUIRE Current execution state $(\x_t,t)$, nominal reference trajectory $\hat{\s}$, nominal timed waypoint sequence $\tilde{\s}$, current progress/constraint context, thresholds $\varepsilon_{\mathrm{loc}}, \varepsilon_{\mathrm{glob}}$
\ENSURE Updated execution plan: repaired local suffix, globally replanned suffix, or fallback action

\algcomment{Tracking-error monitoring}
\STATE Compute tracking error $e_t = \|\x_t-\hat{\x}_t\|_2$

\algsep
\algcomment{Nominal tracking regime}
\IF{$e_t \le \varepsilon_{\mathrm{loc}}$}
    \STATE Continue tracking the nominal reference trajectory
    \RETURN nominal suffix of $\hat{\s}$
\ENDIF

\algsep
\algcomment{Local replanning: segment repair}
\IF{$\varepsilon_{\mathrm{loc}} < e_t \le \varepsilon_{\mathrm{glob}}$}
    \STATE Identify the next scheduled waypoint $(\tilde{\x}_k,t_k)$ from $\tilde{\s}$
    \STATE Query $\hat{t} \leftarrow \texttt{TimePredict}(\x_t,\tilde{\x}_k)$
    \IF{$t+\hat{t} \le t_k$}
        \STATE Attempt local segment repair from $\x_t$ to $\tilde{\x}_k$
        \IF{local repair succeeds}
            \RETURN repaired local segment + unchanged nominal suffix
        \ENDIF
    \ENDIF
\ENDIF

\algsep
\algcomment{Global replanning: history-consistent re-allocation}
\STATE Reconstruct the executed allocation prefix using scheduled waypoint times from $\tilde{\s}$
\STATE Inject the current execution state $(\x_t,t)$ into the reconstructed constraint context
\STATE Resume allocation and trajectory generation for the remaining suffix
\IF{global replanning succeeds}
    \RETURN globally replanned suffix
\ENDIF

\algsep
\algcomment{Fallback protocol}
\STATE Trigger fallback policy
\IF{Persistence mode}
    \STATE Continue tracking the current nominal suffix for a short horizon and retry replanning
    \RETURN persistence action
\ELSE
    \STATE Abort execution and declare task failure
    \RETURN abort action
\ENDIF
\end{algorithmic}
\end{algorithm}

\subsubsection{Local Replanning (Segment Repair)}\label{sec:local_replanning}
When the deviation is moderate, we attempt to repair only the current segment from the true execution state \(\x_t\) to the next scheduled waypoint
\(
(\tilde{\x}_k,t_k),
\)
namely, the next uncompleted reachability progress condition on the nominal timeline. As a lightweight pre-check, we use \texttt{TimePredict} to estimate the residual travel time \(\hat{t}\) to \(\tilde{\x}_k\). If
\(
t+\hat{t}\le t_k,
\)
the nominal deadline is still regarded as recoverable, and we invoke the trajectory generation module to synthesize a repaired segment from \(\x_t\) to \(\tilde{\x}_k\) while enforcing all invariance progress conditions active on the corresponding interval. If this pre-check fails, or if the segment repair attempt is unsuccessful, the procedure escalates to global replanning.

\subsubsection{Global Replanning (History-Consistent Re-allocation)}\label{sec:global_replanning}
When local repair is insufficient, we recompute the remaining allocation of progress conditions from the current execution state. Rather than restarting from scratch, we first reconstruct the allocator state induced by the executed prefix of the nominal plan in a \emph{history-consistent} manner.

Concretely, global replanning re-runs the allocation algorithm (Algorithm~\ref{alg:allocation}) from the beginning, but with the decisions along the already executed nominal prefix fixed in advance. At replay step \(j\), the allocator is forced to commit the previously selected reachability progress condition together with its nominal waypoint \(\tilde{\x}_j\) and its scheduled satisfaction time \(\tilde{t}_j\), and the same constraint-update operations as in the original search are re-applied. Thus, replay deterministically reconstructs the original allocation prefix together with its induced timing context and constraint updates.

A crucial design choice is that the replayed prefix is anchored to the \emph{scheduled} times \(\tilde{t}_j\) of the nominal plan rather than to the actual physical arrival times observed during execution. This realizes a time-triggered execution semantics that preserves the temporal structure of the original allocation. In particular, if the system reaches \(\tilde{\x}_j\) earlier than \(\tilde{t}_j\) and can remain within the corresponding satisfying region without violating the currently active invariance progress conditions, then it waits there until \(\tilde{t}_j\) before proceeding. If such waiting is infeasible, then the nominal temporal anchoring is no longer maintainable and global replanning must be triggered immediately. This scheduled-time anchoring is essential: the remaining symbolic timing relations were derived relative to the original schedule, and replacing scheduled completion times by earlier physical arrivals may invalidate the consistency of future allocations.

After reconstructing this history-consistent prefix, we inject the current execution state \((\x_t,t)\) as the effective starting node for the unfinished suffix and resume the standard allocation-and-generation pipeline for the remaining progress conditions. In this way, the method preserves already completed task progress and the temporal semantics of the nominal plan, while allowing the unfinished suffix to adapt to the observed disturbance.

\subsubsection{Fallback Protocol}
If global replanning fails within the available computational budget, we invoke a configurable fallback policy. In \emph{persistence mode}, the system continues tracking the current nominal suffix for a short horizon and retries replanning, which is useful when the deviation is transient or solver variability is significant. In \emph{abort mode}, the system terminates execution immediately and declares task failure, which is appropriate in safety-critical settings.

\begin{remark}
This hierarchy balances computational efficiency and robustness. Local replanning modifies at most a single segment and preserves the nominal schedule whenever possible. Global replanning preserves already completed task progress by reconstructing a history-consistent allocation prefix before replanning the unfinished suffix. The fallback layer ensures well-defined behavior when online replanning budgets are exhausted.
\end{remark}

\subsection{Computational Complexity Analysis}\label{sec:complexity}
The dominant test-time cost of DAG-STL lies in the symbolic search over progress allocations. Let \(m \triangleq |\PP^{\RC}|\) be the number of reachability progress conditions after preprocessing. Since the exact cost of local modules depends on implementation details, here we focus on the \emph{symbolic search-node complexity}, namely, how the number of visited allocation nodes scales with the logical structure of the task.

\paragraph{Search Complexity for a Fixed \(m\)}
For the basic allocator (Algorithm~\ref{alg:allocation}), a node at depth \(d\) corresponds to a partial assignment of \(d\) reachability progress conditions. In the worst case, all orderings of the remaining conditions remain feasible, so the number of visited nodes is bounded by
\[
\sum_{d=0}^{m} P(m,d)
=
\sum_{d=0}^{m}\frac{m!}{(m-d)!}
=
O(m!),
\]
where \(P(m,d)\) denotes the number of length-\(d\) permutations of \(m\) items. Thus, the basic planner has factorial worst-case complexity in the number of symbolic progress stages.

ARS retains the same explicit branching over the remaining progress conditions, and each logical branch can additionally produce up to \(K\) locally admissible state-time hypotheses through \texttt{MultiHypothesisAssign}. A loose worst-case upper bound on the corresponding search tree is therefore
\[
\sum_{d=0}^{m} P(m,d)\,K^d
=
O(m!K^m).
\]
The same reasoning applies to online replanning. If only \(s \le m\) reachability progress conditions remain unfinished, then the corresponding suffix search has worst-case node complexity \(O(s!)\) for the basic planner and \(O(s!K^s)\) for unbudgeted ARS-based replanning, while the implemented ARS remains capped at the search level by \(N_{\mathrm{iter}}\) and \(N_{\mathrm{seq}}\).

\paragraph{How Large Is \(m\) in Practice}
To interpret these bounds, it is crucial to understand how \(m\) depends on the STL formula. By construction, preprocessing converts each invariance progress condition into one triggering reachability condition plus one residual invariance condition, so \(m\) equals the number of original reachability progress conditions plus the number of original invariance progress conditions produced by decomposition. Let \(|V_{\varphi}|\) denote the number of nodes in the expression tree of \(\varphi\), and let \(M(\varphi)\) denote the number of reachability progress conditions induced by \(\varphi\) after preprocessing.

For a broad class of practical tasks, \(m\) remains of the same order as the expression-tree size. Conjunction simply adds the progress conditions of the two subtrees; an outer eventually operator \(\F_{[a,b]}\) only shifts existing conditions; and under our syntactic restriction on bounded-until prefixes, \(\phi\U_{[a,b]}\psi\) does not create variable-horizon replication of reachability obligations in the prefix. Consequently, when the formula does not contain outer-global eventuality structures that repeatedly replicate inner reachability obligations, one typically has
\[
m = M(\varphi) = O(|V_{\varphi}|),
\]
up to modest constant factors introduced by preprocessing. In this regime, the symbolic search dimension is governed mainly by the logical size of the specification rather than by the numerical length of its time intervals.

The situation changes for outer-global eventually patterns, typified by \(\G\F\)-type tasks. For a subformula of the form \(\G_{[a,b]}\psi\), the decomposition unfolds it into shifted copies indexed by \(k\in[a,b]\). If \(\psi\) itself contains reachability-type obligations, then these obligations are replicated across the outer interval. At a coarse level, this yields the scaling
\[
M\bigl(\G_{[a,b]}\psi\bigr)
\approx
(b-a+1)\,M(\psi),
\]
except that some constant-endpoint invariance copies can be merged and therefore do not fully replicate. Thus, without such outer-global recurrence, \(m\) is typically tied to the tree size; with \(\G\F\)-type structures, it additionally depends on interval length. This is exactly the phenomenon observed in the recurrent reachability case studies of Section~\ref{sec:casestudy}, where the two examples induce 125 and 202 reachability progress conditions, respectively.

Taken together, these observations highlight a key structural advantage of the DAG representation. Many robotic tasks are more naturally described as finite combinations of region visits, dwell requirements, ordering constraints, and safety conditions than as recurrent \(\G\F\)-type obligations repeated throughout the full horizon. For such long-horizon but milestone-like tasks, enlarging interval bounds mainly changes the timing constraints and does not necessarily enlarge the symbolic search tree. This relative insensitivity to the numerical horizon is particularly beneficial for high-frequency systems, in which even simple motion between distant regions may require many low-level time steps and therefore long STL intervals; the Maze2D and AntMaze experiments later in the paper are representative examples of this regime. At the same time, DAG-STL is not limited to it: when outer-global recurrence is present, planning becomes more demanding because \(m\) grows with interval length, but the recurrent reachability case studies show that the framework still retains meaningful capability to handle such \(\G\F\)-type tasks.

\paragraph{Why Practical Runtime Is Often Much Smaller Than the Loose Worst-Case Bound}
The conservative bounds above are rarely approached in practice because several mechanisms sharply reduce the effective search effort. At the formula level, disjunctions are removed before planning, constant-endpoint invariance copies under \(\G_{[a,b]}\) are merged whenever possible, and the syntactic restriction on bounded-until prefixes prevents variable-horizon replication inside the until prefix. At the search level, progress conditions are ordered by temporal urgency, \texttt{TimeAssign} and \texttt{MultiHypothesisAssign} apply strong local screening, \(N_{\max}\) and \(K\) cap local branching, \texttt{UpdateConstraint} immediately prunes temporally infeasible branches, and invariance conditions are activated lazily through their triggering reachability events.

Finally, both the baseline planner and ARS use depth-first exploration, which helps them reach a first complete allocation quickly. ARS then further improves practical efficiency through score-guided backjumping: after a full candidate is generated, refinement resumes from the ancestor responsible for the dominant bottleneck rather than chronologically enumerating many irrelevant sibling subtrees. As a result, the practical behavior of DAG-STL is typically much closer to a heavily pruned, budgeted search over a small symbolic skeleton than to the raw worst-case tree implied by \(O(m!)\) or \(O(m!K^m)\).

\section{Experiments}\label{sec:experiments}
In this section, we comprehensively evaluate DAG-STL. We first describe the common experimental setup, including the environments, datasets, implementation details, and shared trajectory-evaluation protocol used throughout all experiments (Section~\ref{sec:env}). Next, we present representative case studies to illustrate the end-to-end workflow and practical applicability of the framework in both navigation and complex manipulation tasks (Section~\ref{sec:casestudy}). Finally, we conduct large-scale quantitative experiments on both randomly generated STL tasks and controlled custom-built studies to evaluate DAG-STL under diverse algorithmic configurations and comparative settings (Section~\ref{sec:exp}).

\begin{figure*}[t]
    \centering
    \begin{subfigure}[c]{0.25\textwidth}
        \centering
        \includegraphics[width=\textwidth]{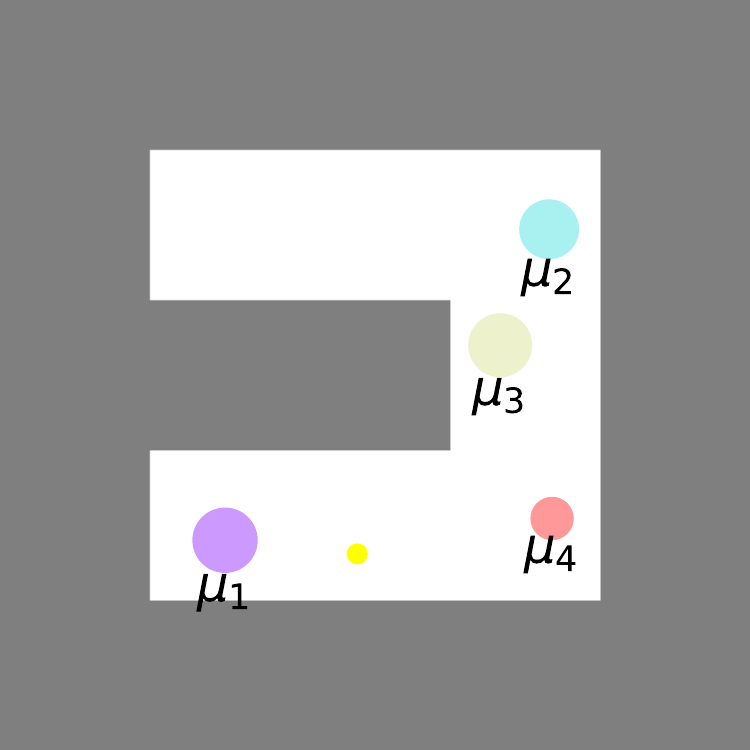}
    \end{subfigure}
    \begin{subfigure}[c]{0.25\textwidth}
        \centering
        \includegraphics[width=\textwidth]{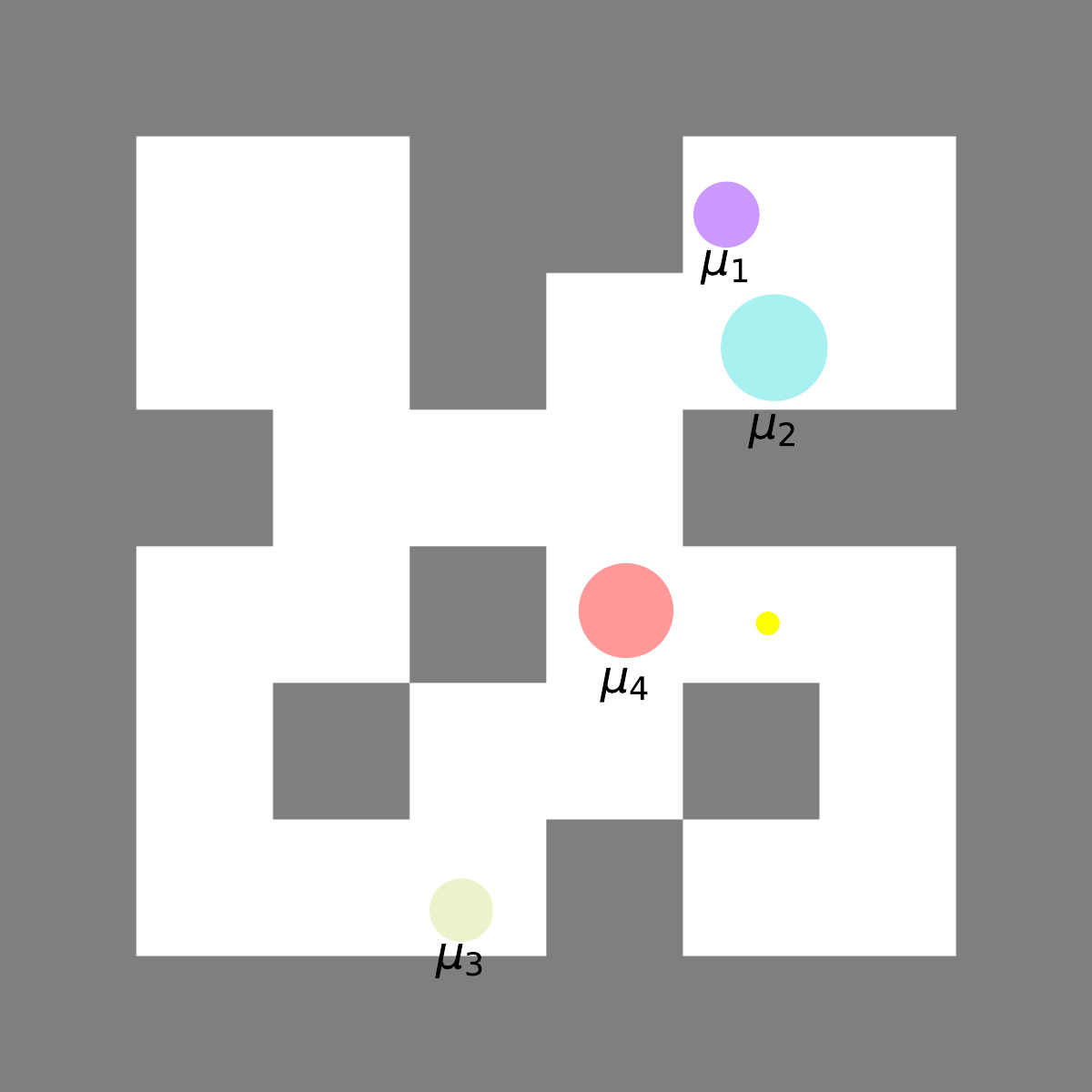}
    \end{subfigure}
    \begin{subfigure}[c]{0.25\textwidth}
        \centering
        \includegraphics[width=\textwidth]{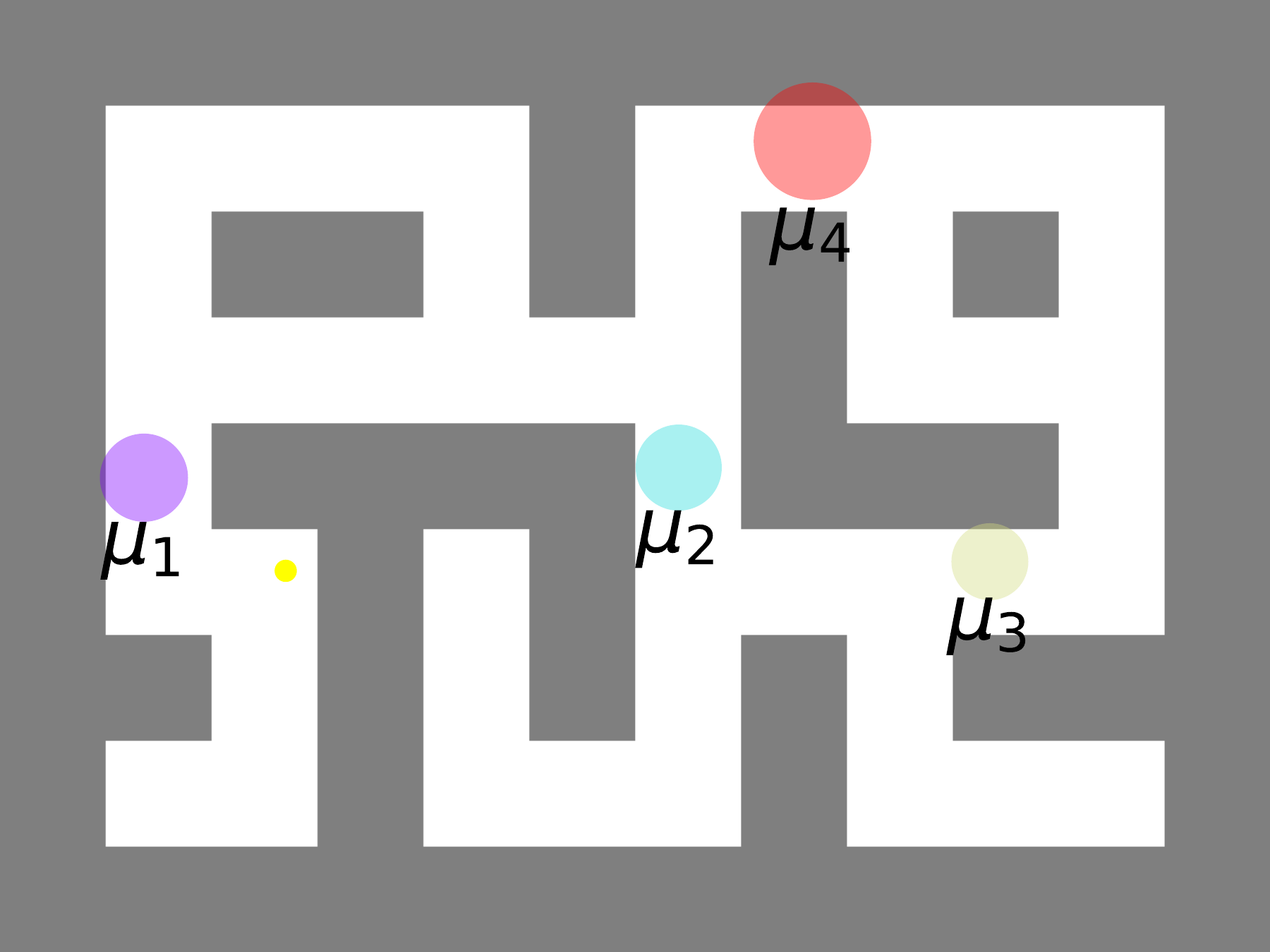}
    \end{subfigure}
    \begin{subfigure}[c]{0.25\textwidth}
        \centering
        \includegraphics[width=\textwidth]{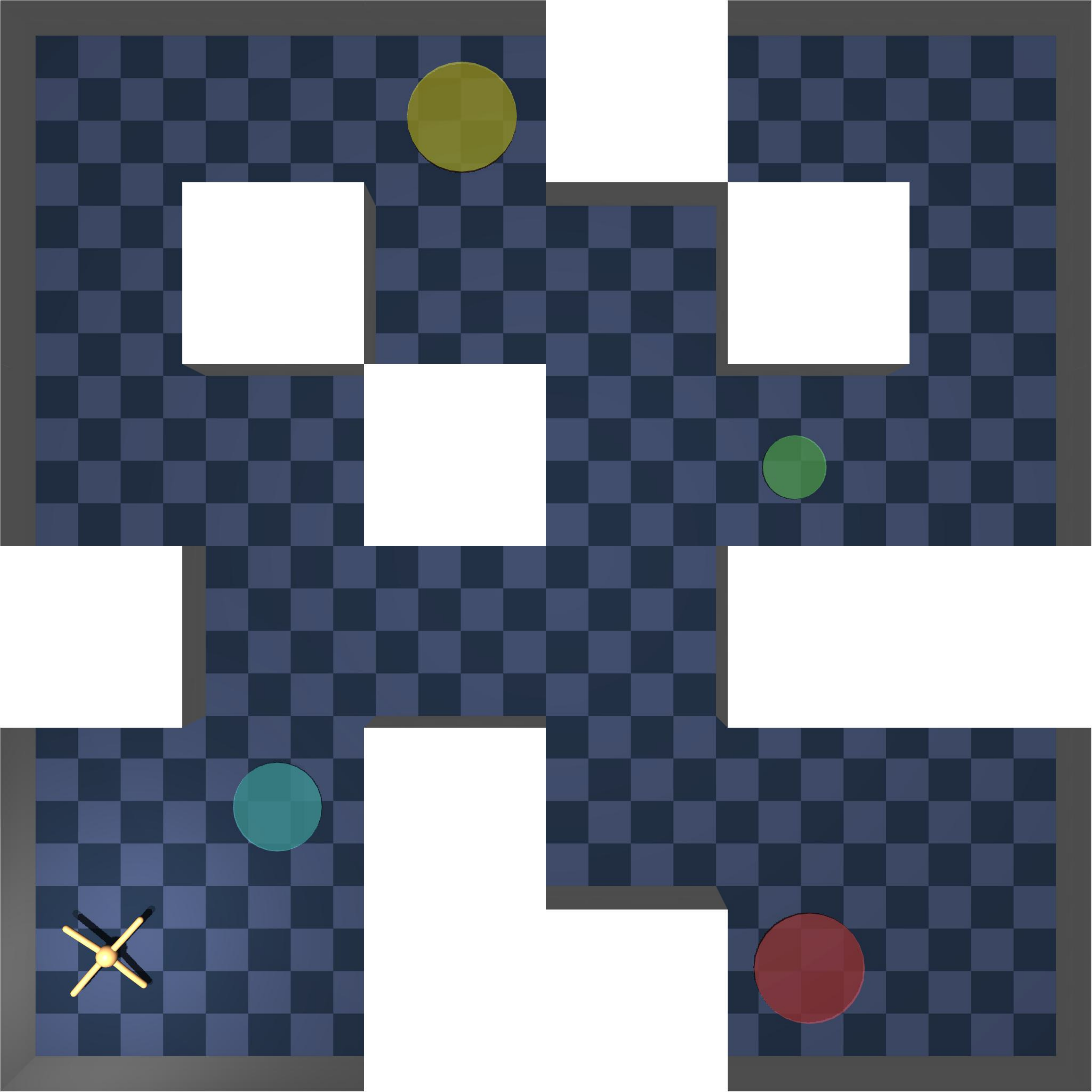}
    \end{subfigure}
    \begin{subfigure}[c]{0.25\textwidth}
        \centering
        \includegraphics[width=\textwidth]{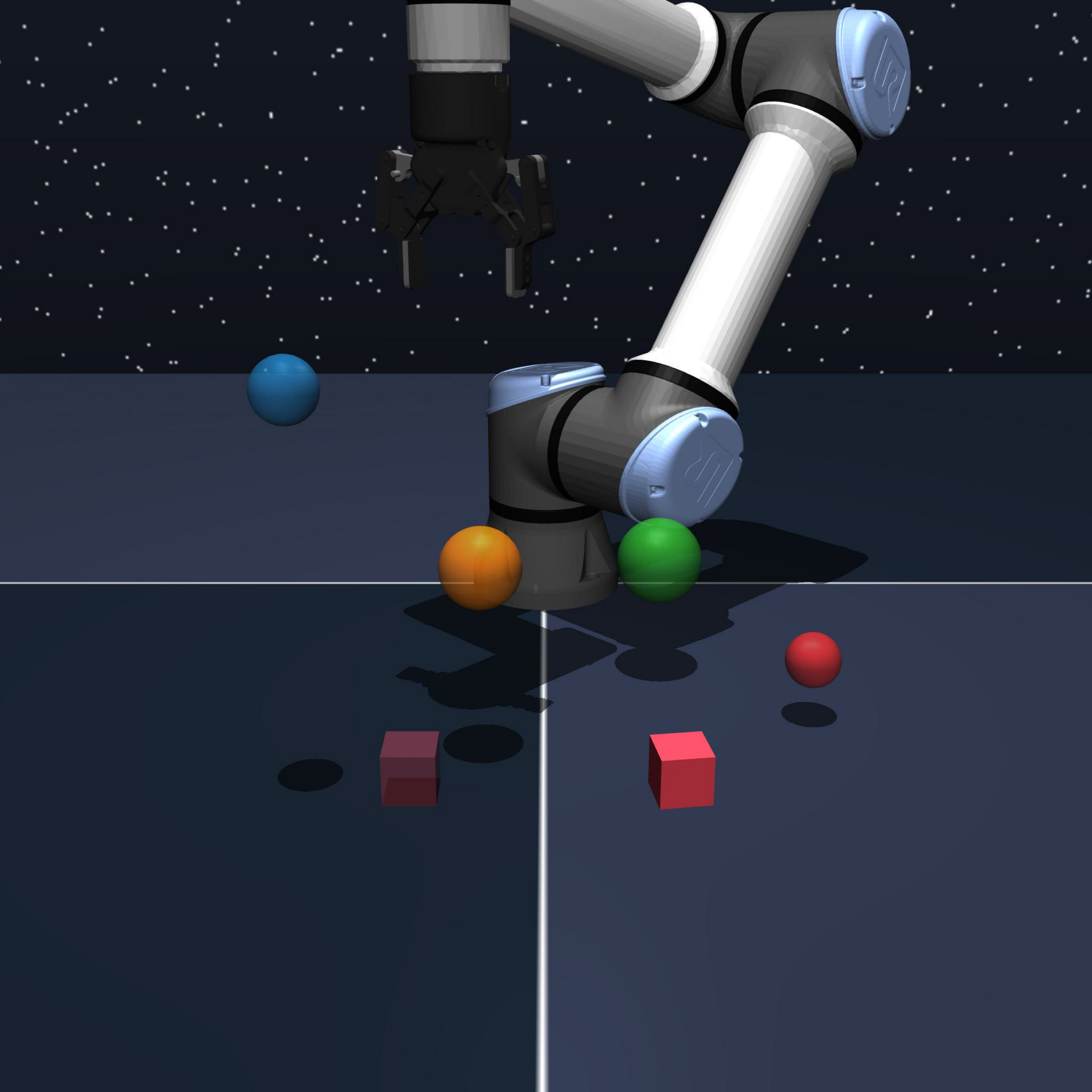}
    \end{subfigure}
    \begin{subfigure}[c]{0.25\textwidth}
        \centering
        \includegraphics[width=\textwidth]{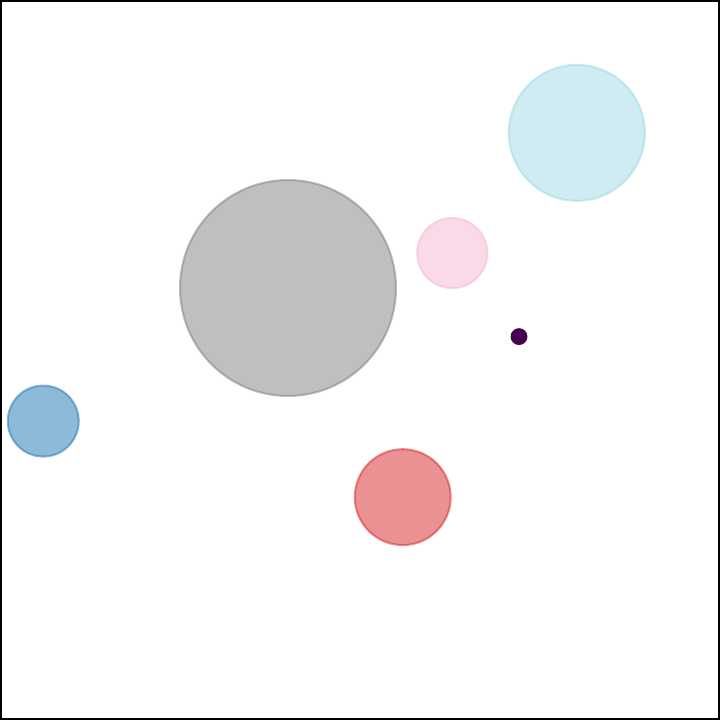}
    \end{subfigure}
    \caption{Visualization of the environments used in our experiments. From left to right and top to bottom: the three Maze2D environments (\textit{Umaze}, \textit{Medium}, \textit{Large}), the AntMaze environment, the Cube environment 
and the custom-built environment.}
    \label{fig:env}
\end{figure*}

\subsection{Experimental Setup}\label{sec:exp_setup}

\subsubsection{Environments and Datasets}\label{sec:env}
We evaluate DAG-STL on three benchmark domains spanning navigation and robotic manipulation, together with a custom-built environment designed for controlled algorithmic comparisons and supplementary stress tests. For the three benchmark domains, we use only their simulation environments and the provided task-agnostic offline trajectory datasets. We do not use the original benchmark task definitions or evaluation protocols; instead, all tasks considered in this paper are newly specified using STL. Visualizations of all environments are shown in Figure~\ref{fig:env}.

\begin{itemize}[leftmargin=*]
\item \textbf{D4RL~\cite{fu2020d4rl} Maze2D (\textit{umaze}, \textit{medium}, \textit{large})}.  
This domain consists of a point-mass agent navigating two-dimensional maze layouts of different scales and geometric complexities. The original benchmark focuses on standard goal-reaching tasks with predefined start and goal positions. In contrast, we formulate new STL tasks that require the agent to sequentially visit multiple designated regions subject to temporal ordering constraints, thereby transforming the problem from simple goal reaching into long-horizon temporally structured navigation.

\item \textbf{OGBench~\cite{ogbench_park2025} AntMaze (\textit{medium})}.  
This environment features an 8-DoF quadrupedal ant navigating a large two-dimensional maze. Compared with Maze2D, the underlying locomotion dynamics are substantially more complex. We again replace the original benchmark objective by STL specifications that require the ant to visit multiple target regions within prescribed time windows, thereby emphasizing compositionality, temporal coordination, and long-horizon planning under complex dynamics.

\item \textbf{OGBench Cube}.  
This domain is built on a 6-DoF UR5e robotic arm originally designed for dexterous object manipulation, such as cube rearrangement. For the large-scale quantitative experiments, we abstract away explicit grasping and object-contact reasoning, and define STL tasks directly on the Cartesian position of the end-effector. Concretely, the end-effector is required to reach a sequence of spatial target regions within designated time intervals, resulting in an STL-constrained waypoint-tracking problem. This abstraction enables systematic large-scale evaluation, while the manipulation case study in Section~\ref{sec:case_manipulation} demonstrates that the same framework also extends naturally to genuine object-interaction tasks.

\item \textbf{Custom Environment}.  
The benchmark domains above involve complex maps and hard-to-model dynamics, making direct comparison with classical model-based STL planners difficult or intractable. To enable a controlled comparison with an optimization-based reference and to isolate the behavior of our progress allocation module, we construct a custom two-dimensional environment consisting of a bounded square workspace with a central circular obstacle and simple double-integrator dynamics. In this setting, we compare DAG-STL against an optimization-based STL planner implemented in \texttt{stlpy}~\cite{kurtz2022mixed}, following the smooth robustness formulation of~\cite{gilpin2020smooth}. The baseline has full access to the analytical system dynamics and obstacle geometry, whereas our method uses only an offline trajectory dataset. For training, we generate around 90{,}000 collision-free trajectories by solving simple single reach-avoid problems with the optimization-based planner and then use these trajectories as task-agnostic offline data for our models. This controlled environment therefore serves as a testbed for evaluating how closely the data-driven planner can approach a full-model optimization-based reference while remaining within the offline unknown-dynamics setting.
\end{itemize}

\subsubsection{Implementation Details}\label{sec:implementation}
All experiments are implemented in Python and are conducted on a workstation running Ubuntu 22.04, equipped with an Intel i7-13700K CPU, an NVIDIA RTX 4090 GPU, and 64\,GB RAM. DAG-STL is realized as a modular framework, and several components in the pipeline admit multiple possible instantiations. In this paper, we adopt relatively simple and standard module choices so that the empirical evaluation focuses on the overall framework rather than on aggressive tuning of any individual component.

The diffusion-based trajectory generator is implemented based on the \texttt{Diffuser} framework of~\cite{janner2022planning}, with a U-Net backbone. Relative to the original implementation, we modify the training procedure by introducing the variable-horizon training strategy described in Section~\ref{sec:constrained_gen}~\cite{liu2025vh} to improve trajectory-length generalization, and adapt the trajectory dimensionality to match the state representation of each environment. The reachability-time predictor is implemented as a diffusion model with an MLP backbone. For each environment, both the trajectory generator and the time predictor are trained on the same corresponding offline dataset, and all testing results are obtained using the same trained models without any task-specific finetuning or retraining.

For execution, all environments follow the time-synchronous control protocol described in Section~\ref{sec:action_control}, in which each planning step corresponds to a fixed number \(k\) of control updates. In Maze2D and the custom-built environment, planning is performed directly in the low-dimensional state space, and execution is carried out by a PD controller with \(k=1\). In Cube and AntMaze, by contrast, planning is performed in a lower-dimensional task space and the planned trajectory is converted into executable actions at runtime. For Cube, we plan in the Cartesian end-effector space and use a PD controller with \(k=1\). For AntMaze, we plan in the \((x,y)\) task space and train an inverse dynamics model that takes as input the full current state together with the next \((x,y)\) target position and outputs the corresponding 8-dimensional action. Due to the greater difficulty of this inverse mapping, we use \(k=2\) in AntMaze. These environment-specific execution interfaces are used only as lightweight realization layers, and all compared methods within the same environment share the same rollout protocol, synchronization ratio, controller or inverse model, and evaluation procedure.

Unless otherwise specified, all compared methods are evaluated on the same hardware platform, and all reported planning times are measured using the same execution protocol. Additional hyperparameter settings and other environment-specific implementation details are provided in Appendix~\ref{apx:implementation}.

\subsubsection{Trajectory Resolution and Robustness Evaluation}\label{sec:robustness}
In all experiments, we use STL robustness as the common post-hoc criterion for determining whether a planned or executed trajectory satisfies the task specification. In particular, a trajectory is counted as successful if its STL robustness is nonnegative. The robustness value is \emph{not} used as a module of DAG-STL itself; it is introduced solely for experimental evaluation. Robustness values are computed using the open-source library \texttt{stlpy}~\cite{kurtz2022mixed}, which implements quantitative semantics for STL.

In the continuous-control environments considered in this paper, the generated and executed trajectories can become very long for complex long-horizon STL tasks. In our experiments, we found that directly evaluating STL robustness on such full-resolution trajectories using existing tools is often computationally prohibitive, and in many cases cannot be completed within an acceptable runtime budget.

To address this issue, we introduce a trajectory resolution factor \(\eta \in \mathbb{Z}_{+}\). In our implementation, one planning-scale time step corresponds to \(\eta\) trajectory-level steps during trajectory generation and execution. The STL specification, progress allocation, and waypoint timestamps all remain defined on the original planning scale. When computing robustness, we uniformly downsample the executed trajectory by the same factor \(\eta\) and evaluate STL robustness on the resulting planning-scale state sequence. In this way, \(\eta\) affects only the implementation resolution of trajectory realization and post-hoc robustness evaluation, without changing the planning-level STL task itself.

We emphasize that \(\eta\) is distinct from the synchronization ratio \(k\) introduced in Section~\ref{sec:action_control}. The former controls the trajectory resolution used for execution and post-hoc evaluation, whereas the latter specifies how planning steps are synchronized with low-level control updates.

\subsection{Case Studies}\label{sec:casestudy}
\subsubsection{Navigation Tasks in the Maze2D Environment}
To illustrate how DAG-STL handles temporally structured navigation problems, we present a set of representative case studies in the Maze2D environment~\cite{fu2020d4rl}. The first example demonstrates the end-to-end decomposition--allocation--generation workflow on a simple sequential visit task, the second considers a more challenging hybrid task involving bounded-until structure, temporal nesting, and invariance constraints, the third examines recurrent temporal obligations with outer-global eventually operators, and the final example highlights how the proposed Anytime Refinement Search can identify a more dynamically consistent solution under the same STL task. In all of these cases, all learned modules in the planner are trained solely on the STL task-agnostic offline trajectory dataset from D4RL, without access to the environment map or analytical system dynamics.

\paragraph{Sequential Visit Task}
We first consider a sequential navigation task in the Large Maze2D layout, where the agent is required to visit three target regions in order while avoiding two obstacle regions throughout the episode. The corresponding STL specification is
\begin{equation}\label{formula:case_study1}
\F_{[0,40]}\bigl(\mu_1 \wedge \F_{[0,40]}(\mu_2 \wedge \F_{[0,40]}\mu_3)\bigr)\wedge\G_{[0,120]}(\neg\mu_4 \wedge \neg\mu_5),    
\end{equation}
where each predicate \(\mu_i\) represents membership in a circular region, i.e.,
\[
\mu_i:\; r_i-\|\x-\bs{c}_i\|_2\ge 0,
\]
with center \(\bs{c}_i\) and radius \(r_i\). Intuitively, the task requires the agent to reach \(\mu_1\), \(\mu_2\), and \(\mu_3\) sequentially within prescribed temporal windows, while avoiding \(\mu_4\) and \(\mu_5\) at all times.

\begin{table}[tbp]
\centering
\begin{tabular}{cp{6cm}}
\toprule
\textbf{Type} & \textbf{Details} \\
\midrule
$\PP^{\RC}$ & 
$\RC(\lambda_1,\lambda_1,\mu_1)$, $\RC(\lambda_1+\lambda_2,\lambda_1+\lambda_2,\mu_2)$, \newline
$\RC(\lambda_1+\lambda_2+\lambda_3,\lambda_1+\lambda_2+\lambda_3,\mu_3)$, \newline
$\RC(0,0,\neg\mu_4)$, $\RC(0,0,\neg\mu_5)$ \\
\midrule
$\PP^{\IC}$ & 
$\IC(1,120,\neg \mu_4)$, $\IC(1,120,\neg \mu_5)$ \\
\midrule
$\TT$ & 
$\lambda_1 \in [0,40]$, $\lambda_2 \in [0,40]$, $\lambda_3 \in [0,40]$ \\
\bottomrule
\end{tabular}
\caption{Progress decomposition of the sequential visit task.}
\label{tab:constraints1}
\end{table}

Following Section~\ref{sec:STLdecomposition}, the formula is decomposed into progress conditions and time-variable constraints, as shown in Table~\ref{tab:constraints1}. Note that, after the preprocessing step in Section~\ref{sec:allocation}, the avoid requirement \(\G_{[0,120]}(\neg\mu_4\wedge\neg\mu_5)\) gives rise to reachability triggers together with residual invariance progress conditions. The progress allocation module then assigns both waypoint states and their corresponding satisfaction times to the reachability progress conditions, thereby producing a timed waypoint sequence; the assigned times are shown by the numbers next to the start state and regions \(\mu_1\), \(\mu_2\), and \(\mu_3\) in Figure~\ref{fig:case_1}. The trajectory generation module subsequently realizes this planning-scale skeleton by generating and stitching higher-resolution trajectory segments between adjacent waypoints while enforcing the active invariance conditions. The planned trajectory is shown in the left panel of Figure~\ref{fig:case_1}, and the corresponding executed trajectory under the control protocol of Section~\ref{sec:action_control} is shown on the right.

Using the common robustness evaluation protocol described in Section~\ref{sec:robustness}, the planned and executed trajectories obtain STL robustness values of \(0.180\) and \(0.115\), respectively. Since both values are nonnegative, both trajectories satisfy the specification. This example provides a direct visualization of the full DAG-STL workflow, from symbolic decomposition and timed waypoint allocation to trajectory generation and execution.

\begin{figure}[tbp]
    \centering
    \begin{subfigure}[b]{0.235\textwidth}
        \centering
        \includegraphics[width=\textwidth]{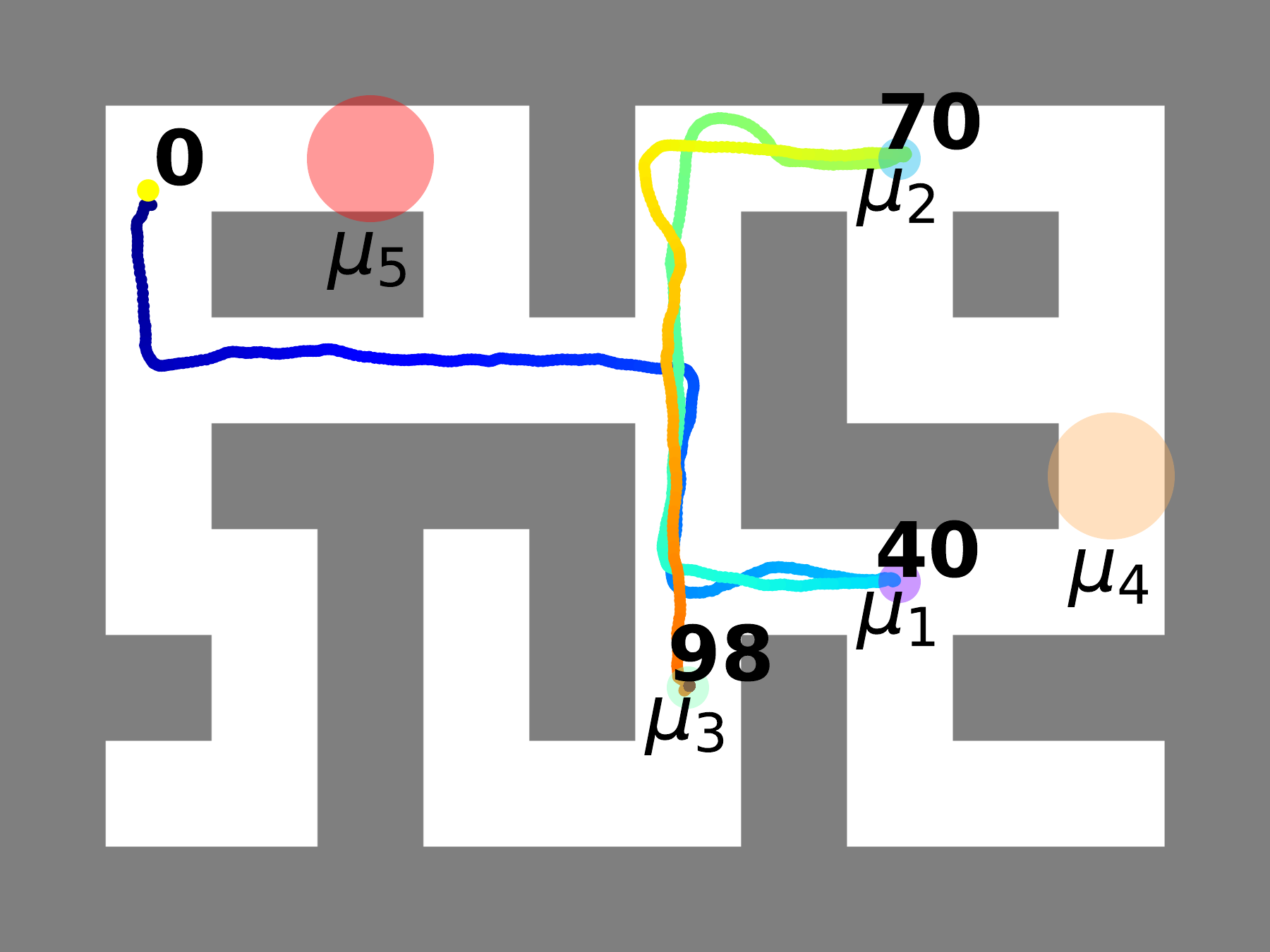}
    \end{subfigure}
    \begin{subfigure}[b]{0.235\textwidth}
        \centering
        \includegraphics[width=\textwidth]{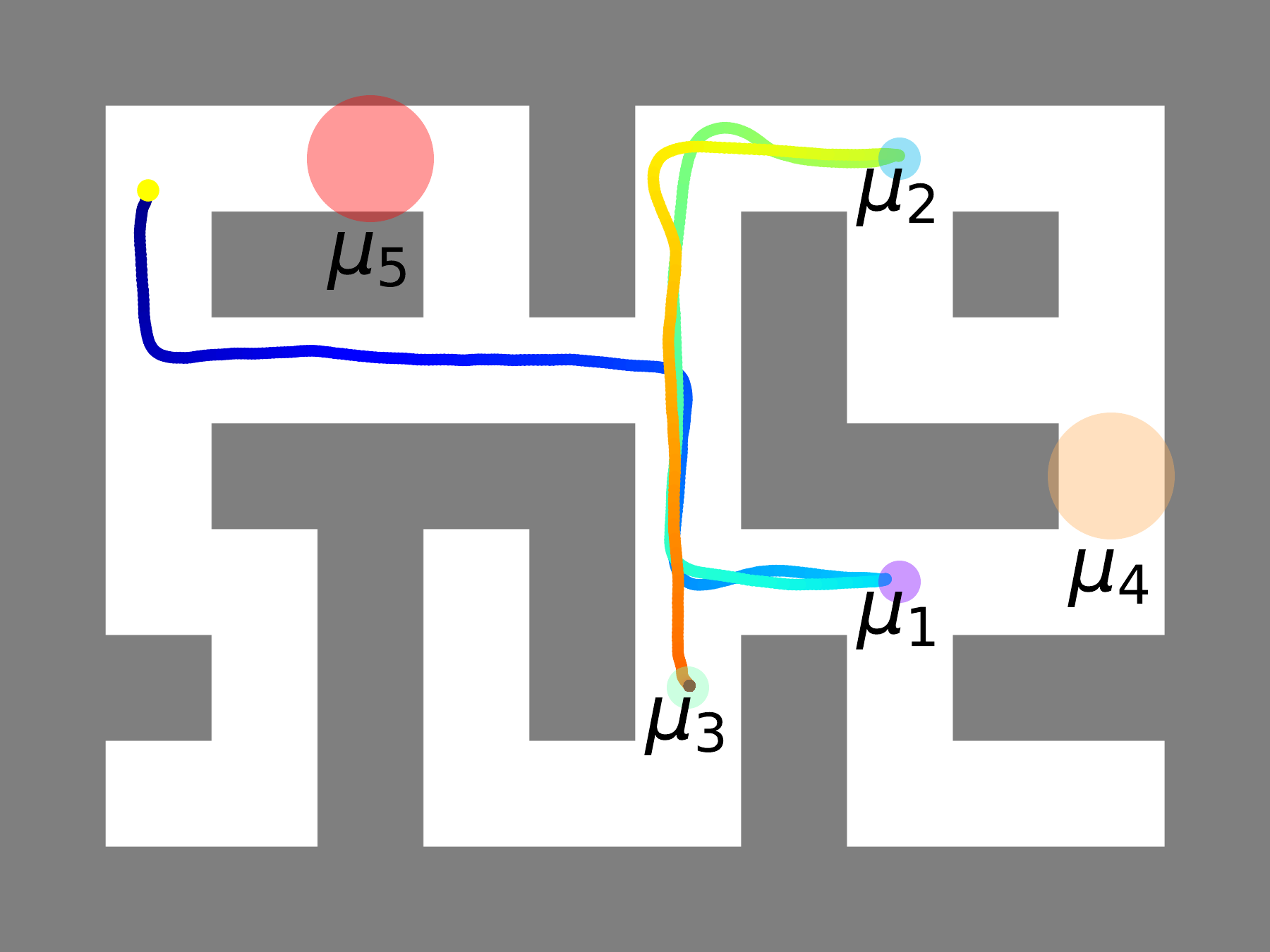}
    \end{subfigure}
    \caption{Planned trajectory (left) and executed trajectory (right) for the sequential visit task~\eqref{formula:case_study1}. The numbers next to the start point and target regions indicate the satisfaction times assigned by the progress allocation module on the planning time scale. The plotted trajectories are shown at the finer trajectory resolution (\(\eta=8\)) used for generation and execution; see Section~\ref{sec:robustness} for details.}
    \label{fig:case_1}
\end{figure}

\paragraph{Hybrid Task}
We next consider a more complex hybrid task in the Medium Maze2D layout. The scenario can be interpreted as an autonomous inspection-and-delivery mission: the agent must remain in corridor \(\mu_1\) until reaching checkpoint \(\mu_2\), then stay in buffer region \(\mu_4\) for 5 steps before entering assembly region \(\mu_3\), where it must again remain for 5 steps. Finally, it must reach exit region \(\mu_5\), while avoiding restricted region \(\mu_6\) throughout the mission. This task combines reachability, invariance, and bounded-until constraints within a single specification.

The corresponding STL formula is
\begin{equation}\label{formula:hybrid_task}
\Phi = \varphi_1 \wedge \varphi_2 \wedge \varphi_3 \wedge \varphi_4 \wedge \varphi_5,
\end{equation}
where
\[
\begin{aligned}
\varphi_1 &= \mu_1 \U_{[0,30]} \mu_2,\quad
\varphi_2 = \F_{[0,100]}\G_{[0,5]}\mu_3,\\
\varphi_3 &= \neg \mu_3 \U_{[0,100]}(\G_{[0,5]}\mu_4),\\
\varphi_4 &= \F_{[0,100]}\mu_5,\quad
\varphi_5 = \G_{[0,105]}\neg \mu_6.
\end{aligned}
\]

This example is substantially more challenging than the previous one, since it involves both nested temporal structure and nontrivial ordering constraints induced by the bounded-until operators. The progress allocation module first resolves these coupled temporal dependencies by assigning waypoint states and their corresponding satisfaction times, thereby producing a planning-scale timed waypoint sequence. The trajectory generation module then realizes this timed waypoint skeleton by generating and stitching higher-resolution trajectory segments between adjacent waypoints while enforcing the active invariance conditions. The resulting planned and executed trajectories are shown in Figure~\ref{fig:case_2}, where the assigned satisfaction times are again marked next to the corresponding regions.

Using the same post-hoc robustness evaluation protocol as above, the planned and executed trajectories obtain robustness values of \(0.150\) and \(0.079\), respectively. Both are nonnegative, indicating that the generated plan and the resulting execution satisfy the STL specification. This case highlights that DAG-STL can coordinate multiple reachability and invariance requirements simultaneously, and remains effective even when the task contains nested temporal dependencies and sustained dwell constraints.

\begin{figure}[tbp]
    \centering
    \begin{subfigure}[b]{0.235\textwidth}
        \centering
        \includegraphics[width=\textwidth]{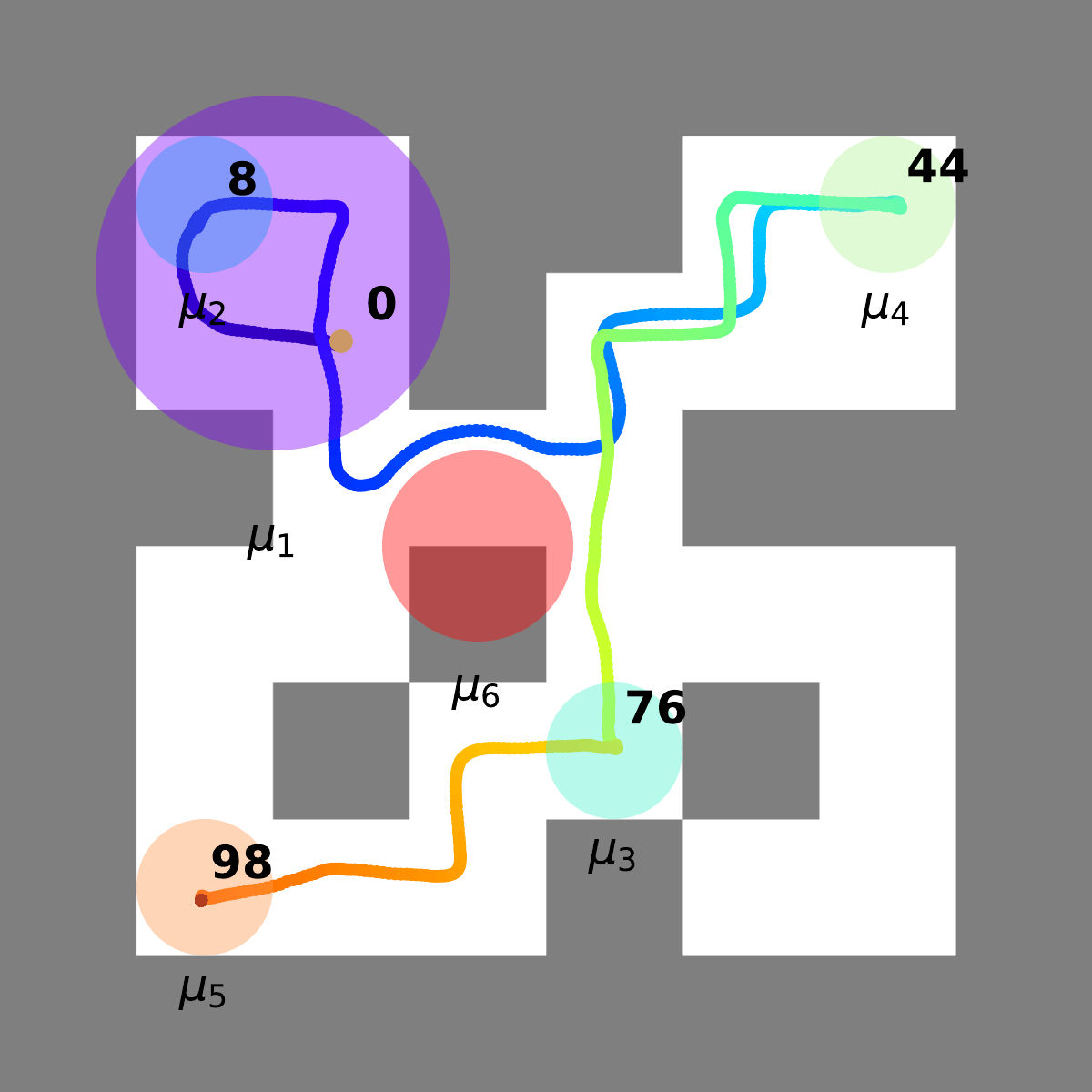}
    \end{subfigure}
    \begin{subfigure}[b]{0.235\textwidth}
        \centering
        \includegraphics[width=\textwidth]{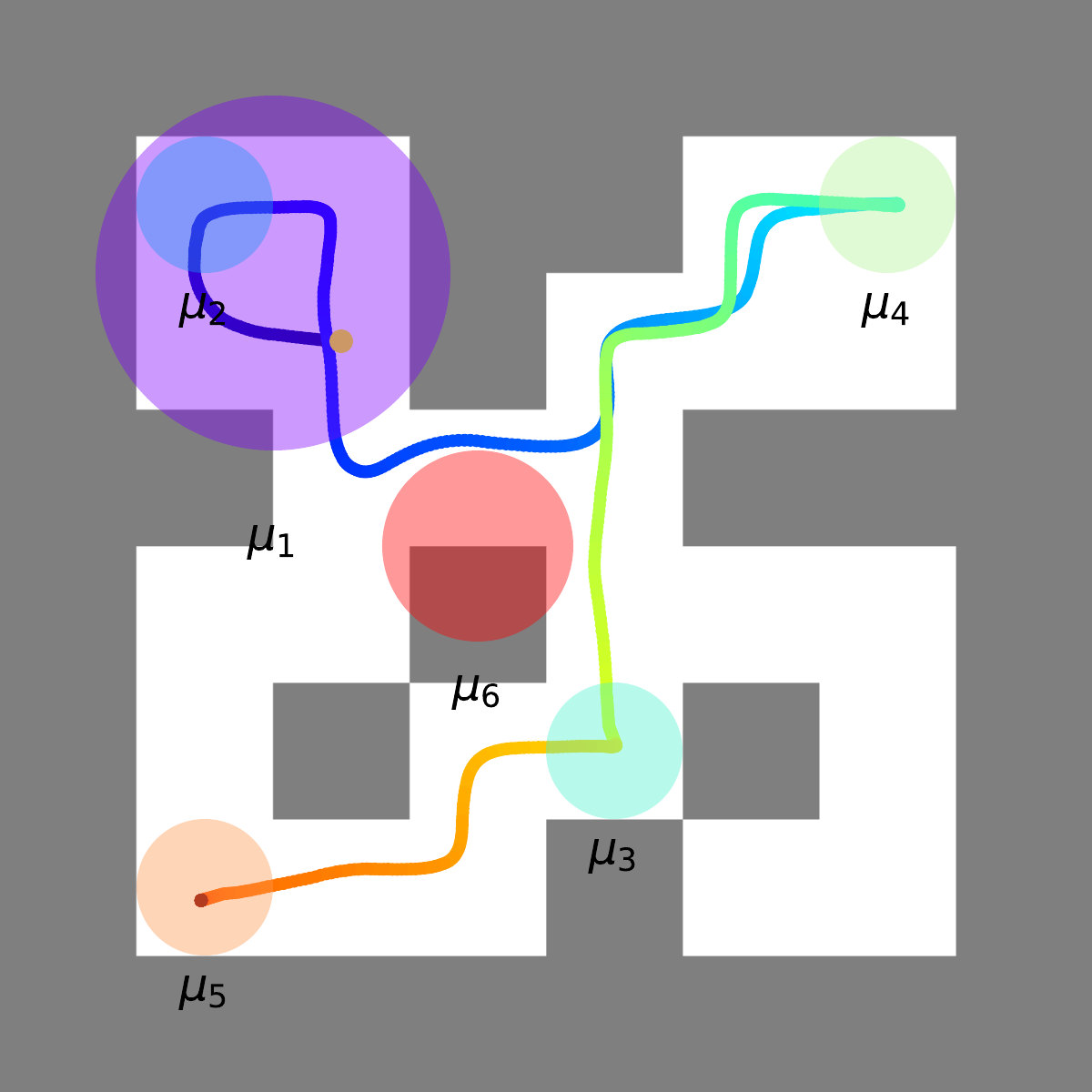}
    \end{subfigure}
    \caption{Planned trajectory (left) and executed trajectory (right) for the hybrid task~\eqref{formula:hybrid_task}. The numbers next to the start point and target regions indicate the satisfaction times assigned by the progress allocation module on the planning time scale. The plotted trajectories are shown at the finer trajectory resolution (\(\eta=8\)) used for generation and execution.}
    \label{fig:case_2}
\end{figure}

\paragraph{Recurrent Reachability Cases}
The previous case studies focus on STL tasks that can be naturally interpreted as finite collections of temporally ordered milestones. We next consider two additional cases with outer-global eventually constraints to illustrate that the proposed framework also supports recurrent temporal obligations of this type at the level of task decomposition and allocation. The main practical difficulty of such tasks is not their semantic incompatibility with DAG-STL, but the fact that, after decomposition, they can induce a large number of progress conditions and time variables, which substantially increases the complexity of planning.

The first task is
\begin{equation}\label{formula:case_study3_1}
\begin{aligned}
\G_{[0,120]}(\F_{[0,40]}\mu_1)\ \wedge\ \F_{[0,100]}(\mu_2 \wedge \F_{[0,60]}\mu_3)\\
\wedge\ \F_{[0,150]}\G_{[0,3]}\mu_4\ \wedge\ \G_{[0,160]}(\neg \mu_5).
\end{aligned}
\end{equation}
This specification combines a recurrent reachability requirement of the form \(\G\F\mu_1\) with a one-shot sequential reachability task, a dwell requirement, and a global avoidance constraint. In this case, the decomposition produces 125 reachability progress conditions, 2 invariance progress conditions, and 124 time variables. Despite the recurrent obligation, the planner is still able to find a valid trajectory within approximately \(6\) seconds, and the executed trajectory achieves STL robustness \(0.081\), confirming successful task satisfaction.

The second task is
\begin{equation}\label{formula:case_study3_2}
\G_{[0,100]}\bigl(\F_{[0,30]}(\mu_1 \wedge \F_{[0,30]}\mu_2)\bigr),
\end{equation}
which places a more deeply nested eventually structure inside the outer global operator. Compared with the previous case, this formula induces a substantially denser recurrent obligation structure after decomposition, yielding 202 reachability progress conditions and 202 time variables. Although the framework can still solve this instance successfully, the planning time increases substantially to approximately \(27\) seconds. The executed trajectory attains STL robustness \(0.192\), which is again nonnegative and therefore satisfies the specification.

Figure~\ref{fig:case_3} visualizes the two cases. The first two rows show the timed waypoint allocations on the planning horizon for the two tasks, illustrating how the allocation module schedules repeated and nested visitation requirements over time. The third row shows the corresponding executed trajectories. Taken together, these two cases show that nontrivial STL tasks with outer-global eventually obligations can be successfully solved within the proposed framework. They also provide an empirical indication of how recurrent temporal complexity affects the practical difficulty of planning in the current setting.

\begin{figure}[tbp]
    \centering
    \begin{subfigure}[b]{0.48\textwidth}
        \centering
        \includegraphics[width=\textwidth]{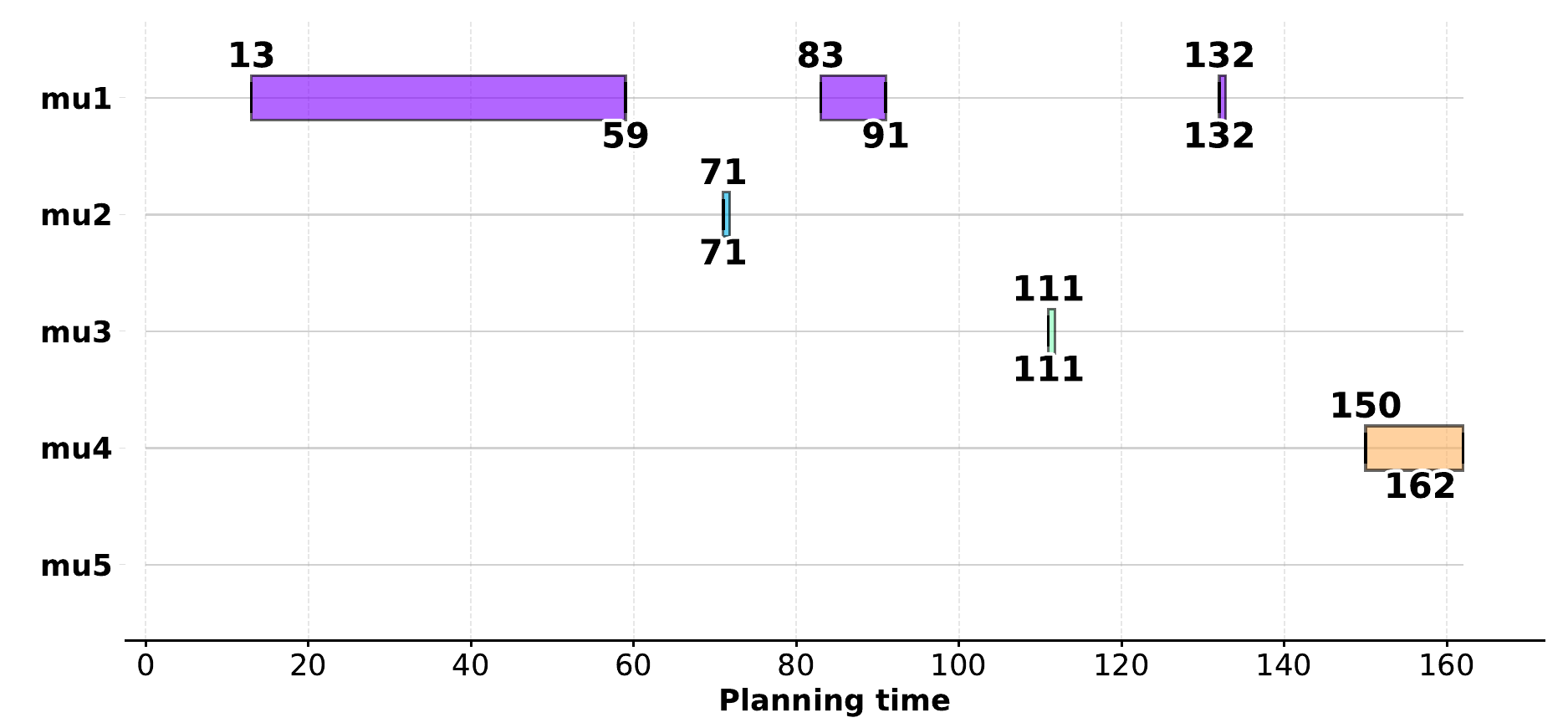}
    \end{subfigure}
    \begin{subfigure}[b]{0.48\textwidth}
        \centering
        \includegraphics[width=\textwidth]{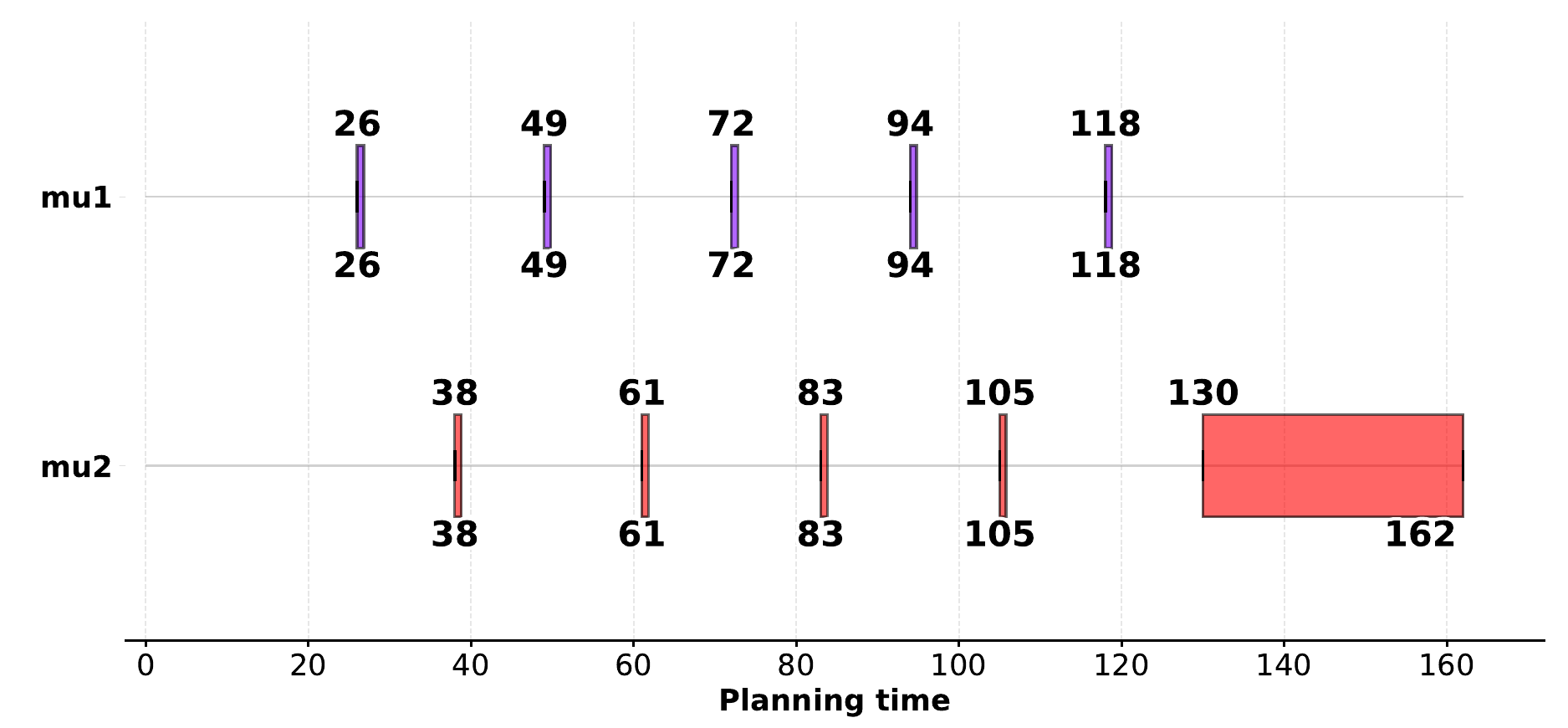}
    \end{subfigure}
    \begin{subfigure}[b]{0.235\textwidth}
        \centering
        \includegraphics[width=\textwidth]{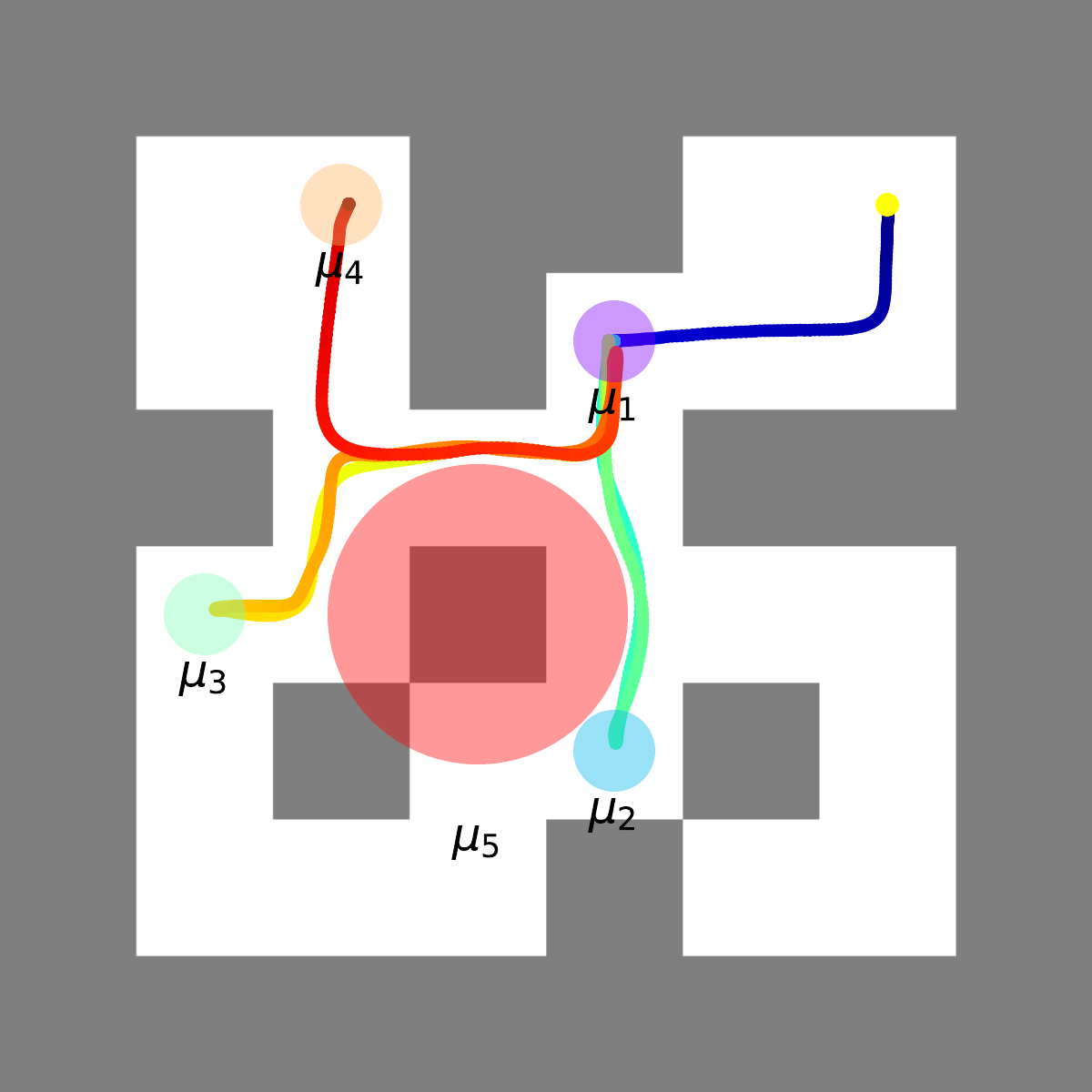}
    \end{subfigure}
    \begin{subfigure}[b]{0.235\textwidth}
        \centering
        \includegraphics[width=\textwidth]{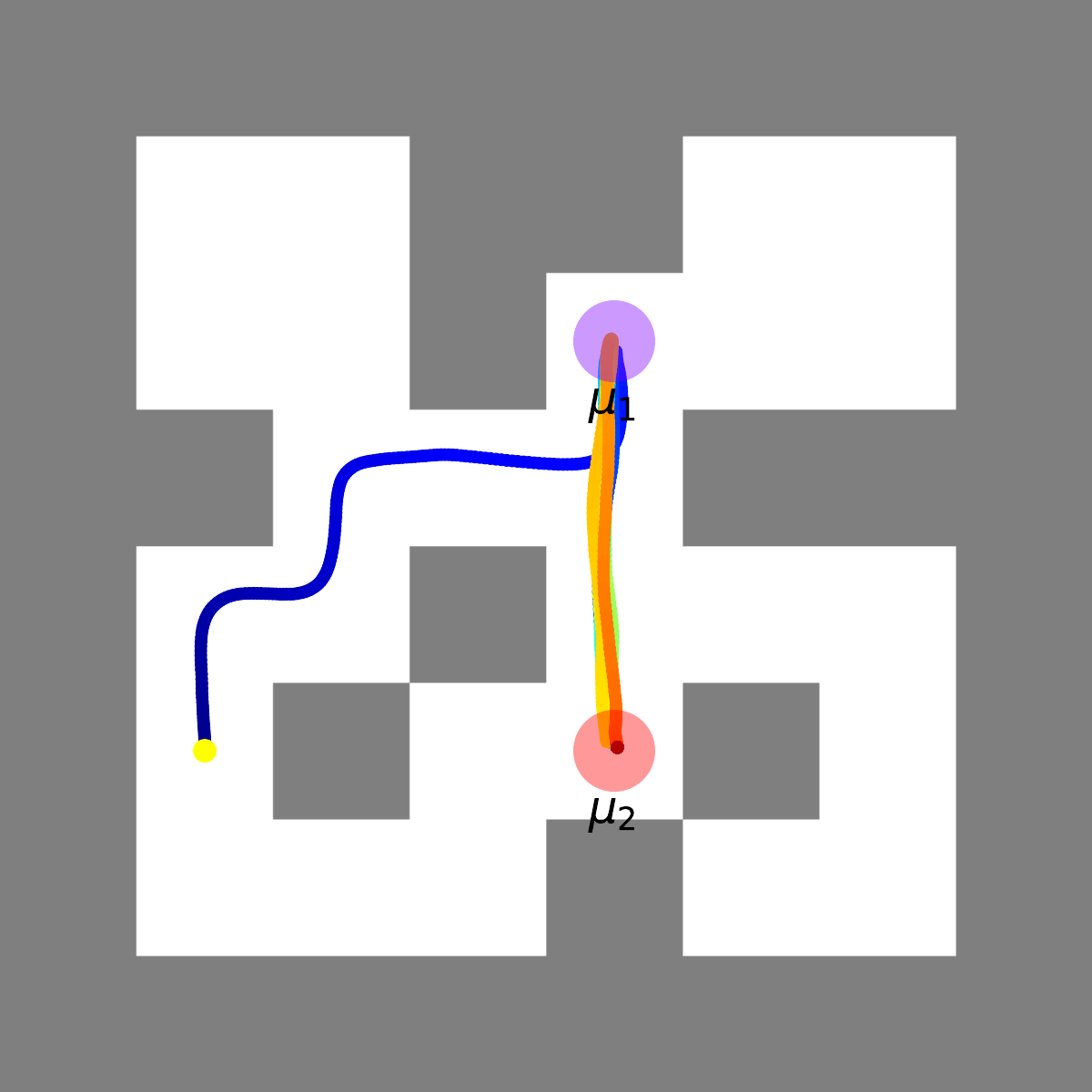}
    \end{subfigure}
    \caption{Two recurrent-reachability cases with outer-global eventually constraints. The first two rows show the timed waypoint allocations on the planning time scale for the two tasks, illustrating how the progress allocation module schedules repeated and nested visitation obligations over the horizon. The third row shows the corresponding executed trajectories at the finer execution resolution. The left-hand task corresponds to the mixed recurrent case~\eqref{formula:case_study3_1}, while the right-hand task corresponds to the more deeply nested recurrent case~\eqref{formula:case_study3_2}.}
    \label{fig:case_3}
\end{figure}

\paragraph{Refinement Case: Allocation Improvement with ARS}
The previous case studies focus on the basic decomposition--allocation--generation pipeline. We finally present a case in the Umaze layout to illustrate how ARS finds a better plan by exploring a richer allocation search space under the guidance of the dynamic consistency metric. We consider the task
\begin{equation}\label{formula:case_study4}
\F_{[0,20]}\bigl(\mu_1 \wedge \F_{[0,20]}\mu_2\bigr)\;\wedge\; \G_{[0,40]}(\neg \mu_3),
\end{equation}
which requires the agent to first reach region \(\mu_1\) within 20 steps, then reach region \(\mu_2\) within an additional 20 steps, while avoiding region \(\mu_3\) throughout the execution horizon.

Figure~\ref{fig:case_ars_compare} compares the trajectory returned by the basic planning backbone with the one returned by ARS. Although the basic planner finds an STL-feasible solution, the resulting trajectory is dynamically brittle. In particular, the first segment is assigned a relatively short time budget and therefore passes very close to the maze wall, leaving little clearance for reliable tracking. At the same time, both segments stay close to the avoidance region \(\mu_3\), and the basic plan attempts to bend around it using a locally unnatural arc-shaped motion that is weakly supported by the offline trajectory distribution and therefore difficult for a downstream controller to follow robustly.

By contrast, ARS returns a different timed waypoint allocation and associated trajectory realization. It assigns a longer time budget to the first segment, allowing the trajectory to more safely bypass the wall. Meanwhile, under the guidance of the dynamic consistency metric, ARS selects a trajectory that is more consistent with the offline-supported dynamics: instead of producing an unnecessary curved detour near \(\mu_3\), it chooses a much smoother path that passes almost straight along the boundary of the avoidance region.

This example highlights the practical role of ARS. It does not simply smooth a fixed nominal trajectory after planning; rather, it performs a more effective search over an expanded allocation space, uses the trajectory-level consistency evaluation to localize the weakest segment of the current solution, and then revises the corresponding allocation and trajectory jointly. The final output is therefore the best STL-feasible trajectory found within the available search budget, balancing logical correctness with execution-oriented dynamic consistency.

\begin{figure}[tbp]
    \centering
    \begin{subfigure}[b]{0.235\textwidth}
        \centering
        \includegraphics[width=\textwidth]{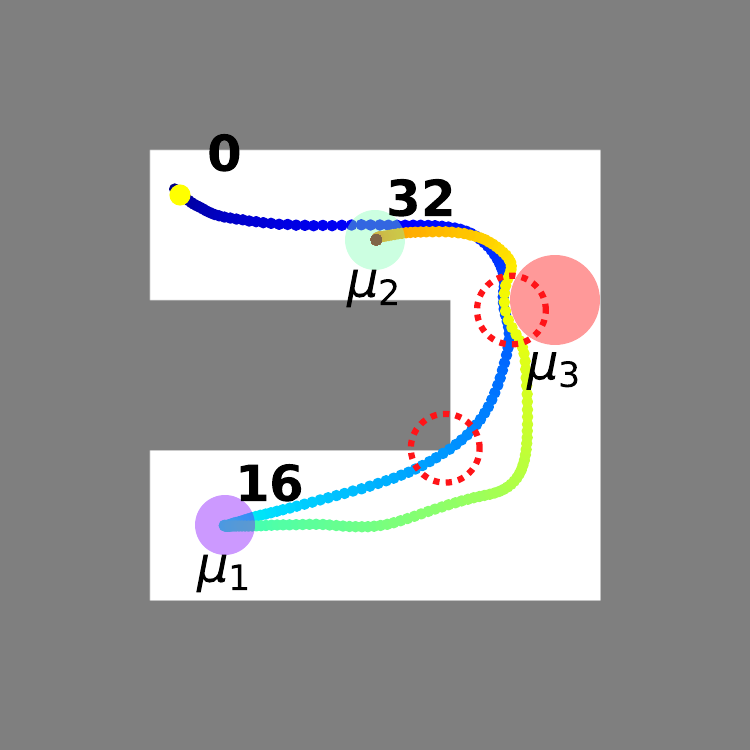}
    \end{subfigure}
    \begin{subfigure}[b]{0.235\textwidth}
        \centering
        \includegraphics[width=\textwidth]{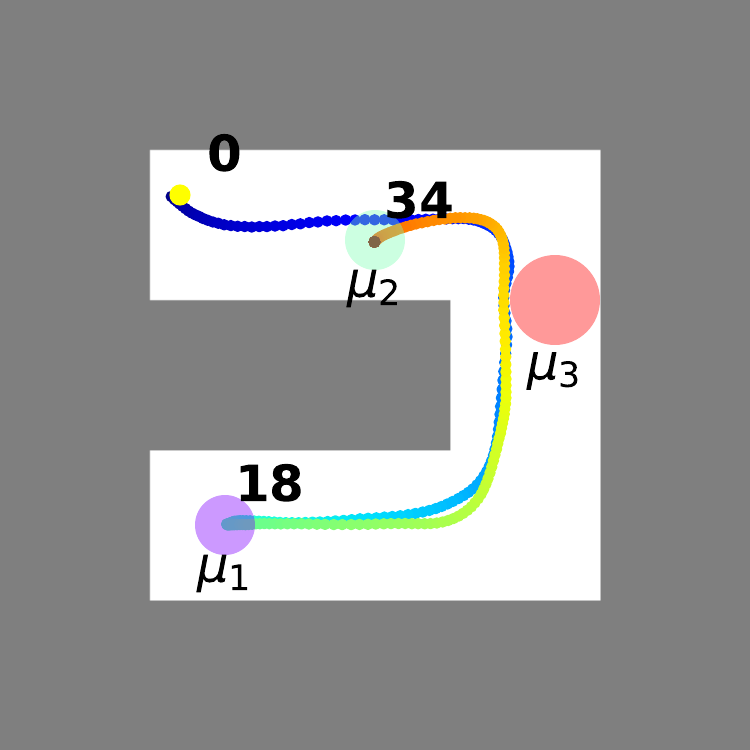}
    \end{subfigure}
    \caption{Comparison between the trajectory returned by the basic planning backbone (left) and the trajectory returned by ARS (right) for task~\eqref{formula:case_study4} in the Umaze layout. ARS assigns a longer time budget to the first segment and, under the dynamic consistency metric, selects a more natural trajectory realization that passes more safely around the wall and more directly along the boundary of the avoidance region \(\mu_3\). The plotted trajectories are shown at the finer trajectory resolution (\(\eta=8\)) used for generation and execution.}
    \label{fig:case_ars_compare}
\end{figure}

\subsubsection{Manipulation Tasks of a 6-DoF Robot Arm}\label{sec:case_manipulation}

To further demonstrate that DAG-STL extends beyond navigation, we consider manipulation tasks in the ``Cube'' scenario from OGBench~\cite{ogbench_park2025}. As illustrated in Figure~\ref{fig:case_cube_demo}, a 6-DoF UR5e robotic arm is required to rearrange cubes from their initial configurations to designated goal configurations. This case study provides representative examples of whether the proposed planning framework, trained only on single-cube task-agnostic data, can generalize in a zero-shot manner to more complex multi-stage and multi-object manipulation behaviors.

\begin{figure}[htbp]
    \centering
    \begin{subfigure}[b]{0.235\textwidth}
        \centering
        \includegraphics[width=\textwidth]{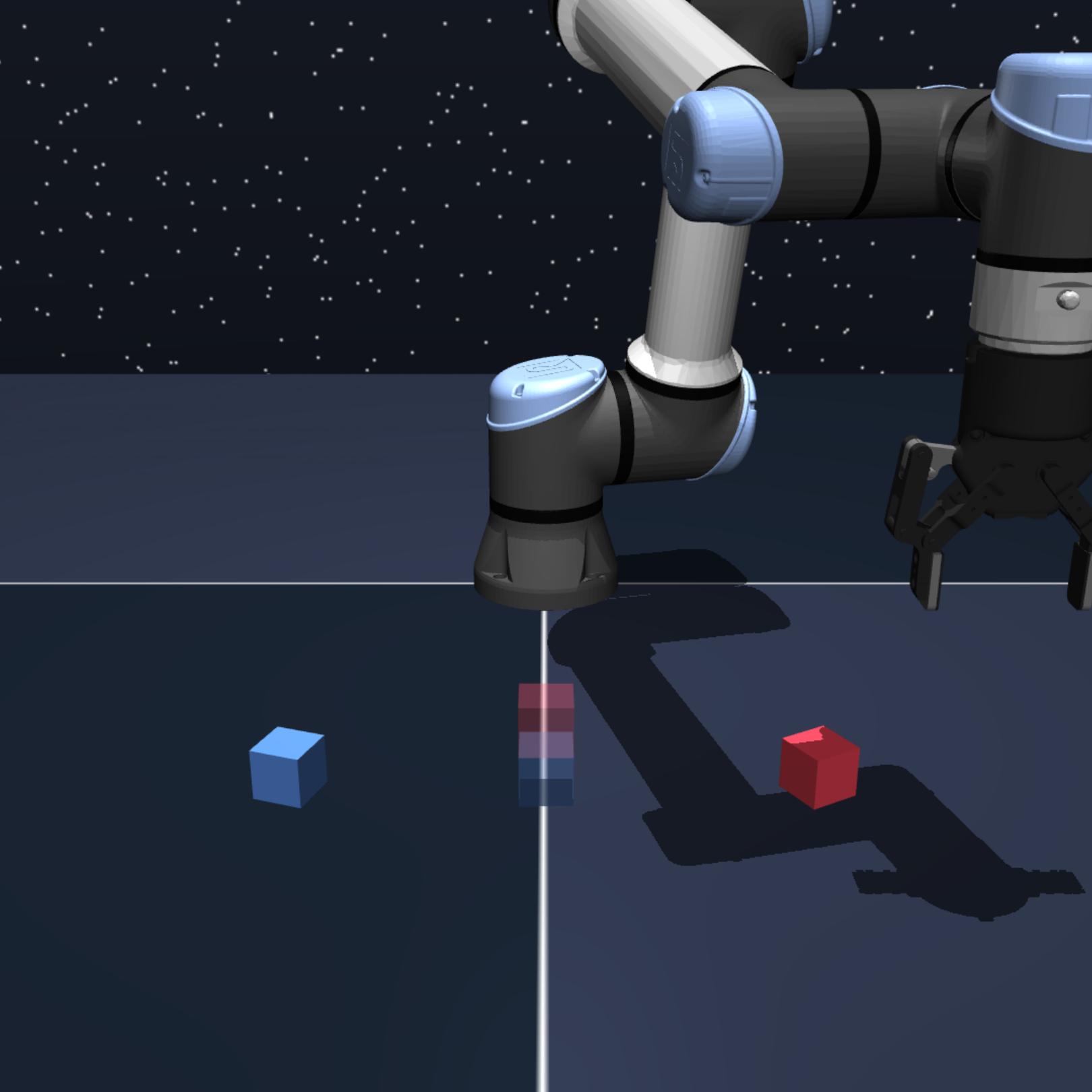}
    \end{subfigure}
    \begin{subfigure}[b]{0.235\textwidth}
        \centering
        \includegraphics[width=\textwidth]{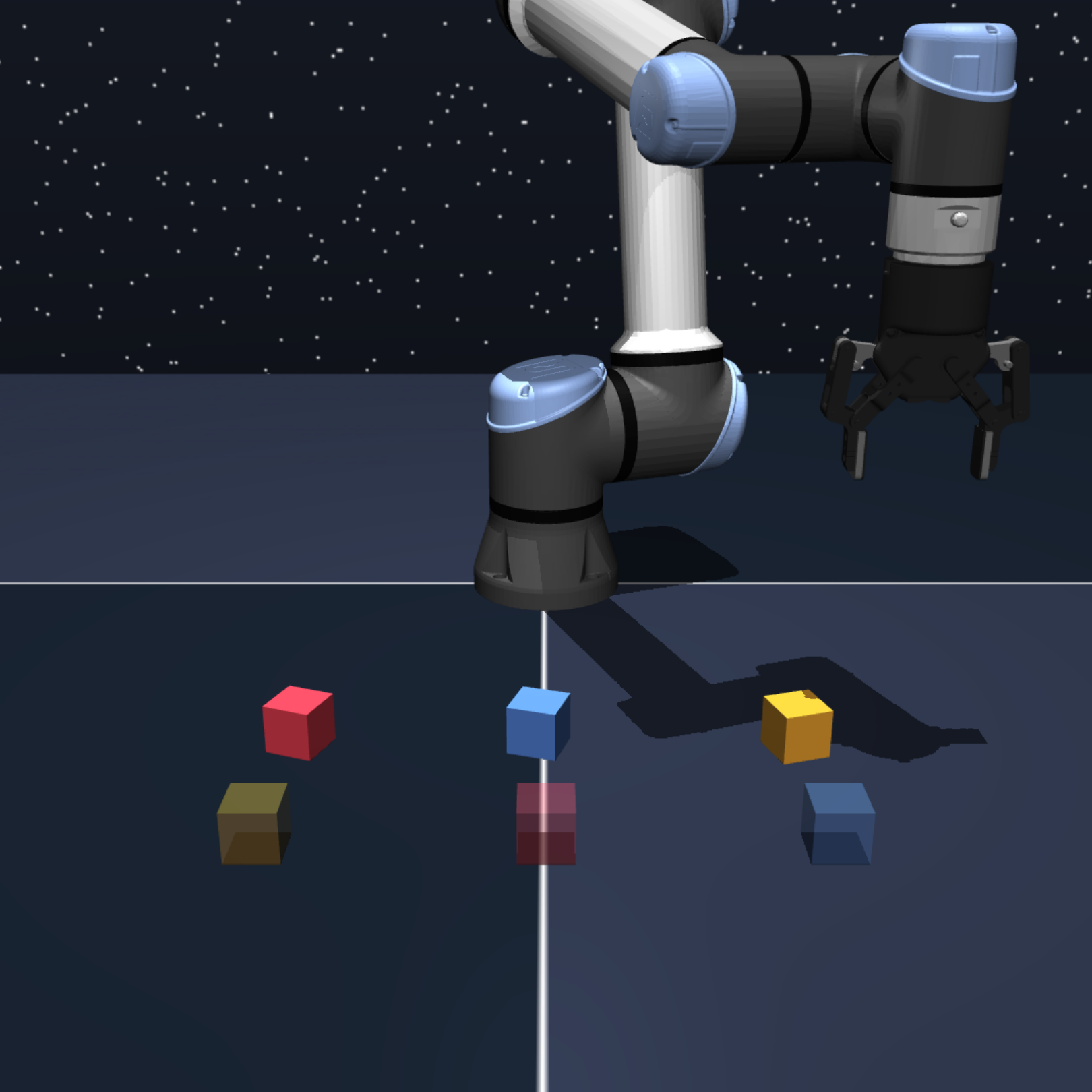}
    \end{subfigure}
    \caption{Example multi-stage manipulation tasks in the Cube environment. Solid cubes indicate the initial configuration and translucent cubes indicate the target configuration.}
    \label{fig:case_cube_demo}
\end{figure}

We formulate the manipulation objectives using STL over the Cartesian position \(\x\) of the end effector. For the \(i\)-th cube, let \(\bs{s}_i\) and \(\bs{g}_i\) denote its initial and target positions, respectively, and let \(r_i\) be the tolerance radius for successful grasping or placing. We define the pick and place predicates as
\[
pick_i:\; r_i-\|\x-\bs{s}_i\|_2 \ge 0,
\;
place_i:\; r_i-\|\x-\bs{g}_i\|_2 \ge 0.
\]
A single-cube manipulation task can then be written as
\[
\varphi_i = \F_{\mr{I}_i}\bigl(pick_i \wedge \F_{\mr{I}_i} place_i\bigr),
\]
where \(\mr{I}_i\) specifies the desired completion interval. For tasks involving multiple cubes without a prescribed order, the overall specification can be expressed as the conjunction of cube-specific subformulas,
\[
\varphi = \bigwedge_{i=1}^{N_{\mathrm{cube}}}\varphi_i.
\]
When a specific order of operations is required, we instead use nested STL expressions to encode the corresponding temporal dependencies. For example, a two-cube sequential manipulation task can be written as
\[
\F_{\mr{I}_1}\Bigl(pick_1 \wedge \F_{\mr{I}_1}\bigl(place_1 \wedge \F_{\mr{I}_2}(pick_2 \wedge \F_{\mr{I}_2} place_2)\bigr)\Bigr),
\]
which requires the robot to complete the manipulation of cube 1 before starting that of cube 2. More generally, multi-stage tasks can be represented by deeper nested formulas of the same form. This is particularly useful for manipulation tasks such as stacking, where the lower cube must be placed before the upper cube, or structured rearrangement tasks such as swapping, where one cube must first be moved away before another can be placed into its target location.

In this case study, STL is used only to specify the end-effector motion. We intentionally do not encode gripper actions and object-contact events directly into the STL formula, since doing so would introduce a heterogeneous hybrid specification over both continuous motion variables and discrete manipulation modes, which is beyond the scope of the present framework. Instead, grasping is handled by a simple control heuristic: the gripper closes automatically when the end effector enters a pick region and opens when it enters a place region. This simplification allows us to focus on the core question of whether the proposed logic-guided planning framework can generate temporally structured manipulation motions from task-agnostic trajectory data. All learned modules are trained exclusively on the \textit{cube-single-play} dataset from OGBench, which contains only single-cube manipulation trajectories with randomly sampled start and goal positions. No multi-cube demonstrations or task-specific retraining are used. During planning, we use the basic planning configuration (without ARS or online replanning), and the resulting trajectories are executed via a PD controller.
\begin{figure*}[t]
    \centering
    % Row 1: initial configurations
    \begin{subfigure}[b]{0.235\textwidth}
        \centering
        \includegraphics[width=\textwidth]{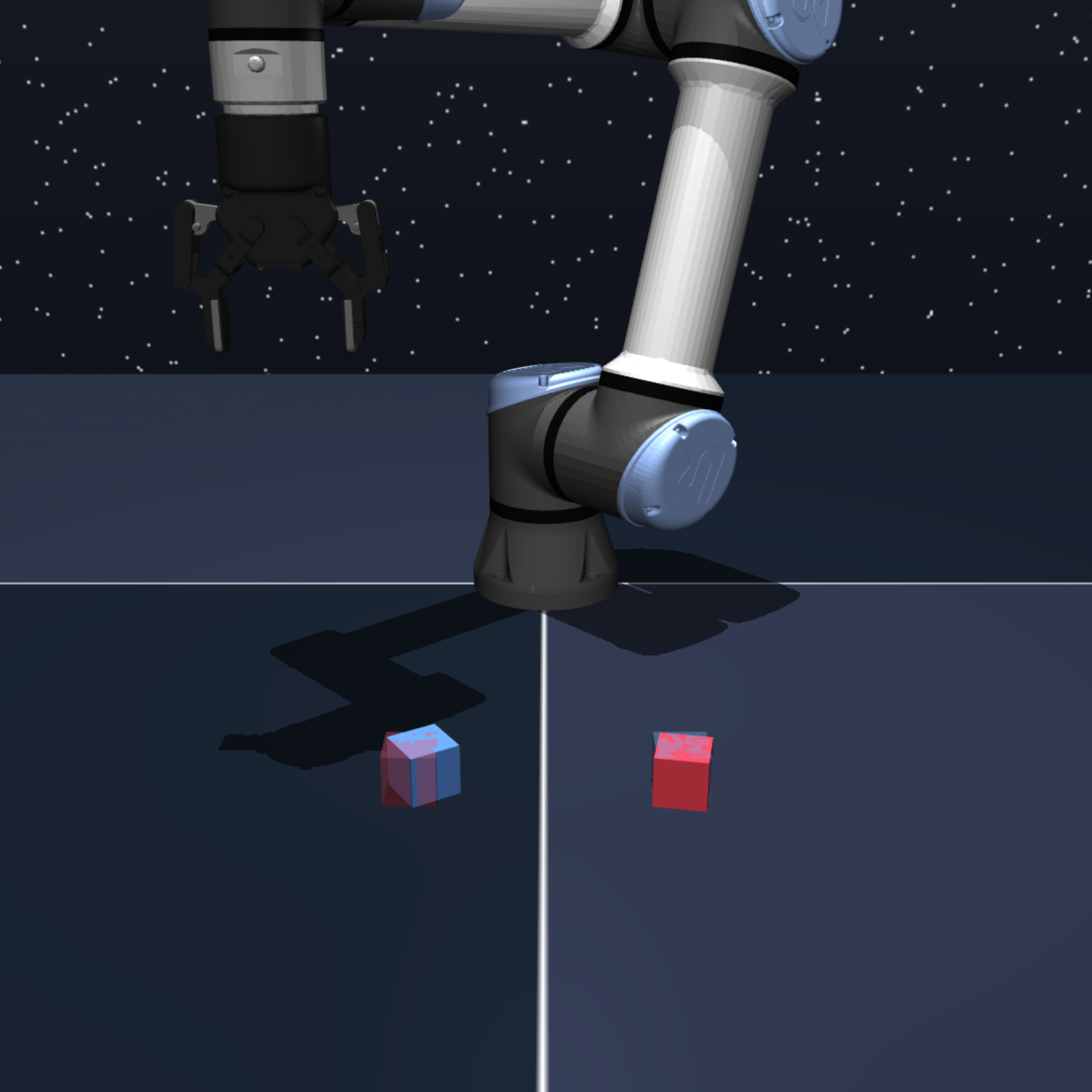}
    \end{subfigure}
    \begin{subfigure}[b]{0.235\textwidth}
        \centering
        \includegraphics[width=\textwidth]{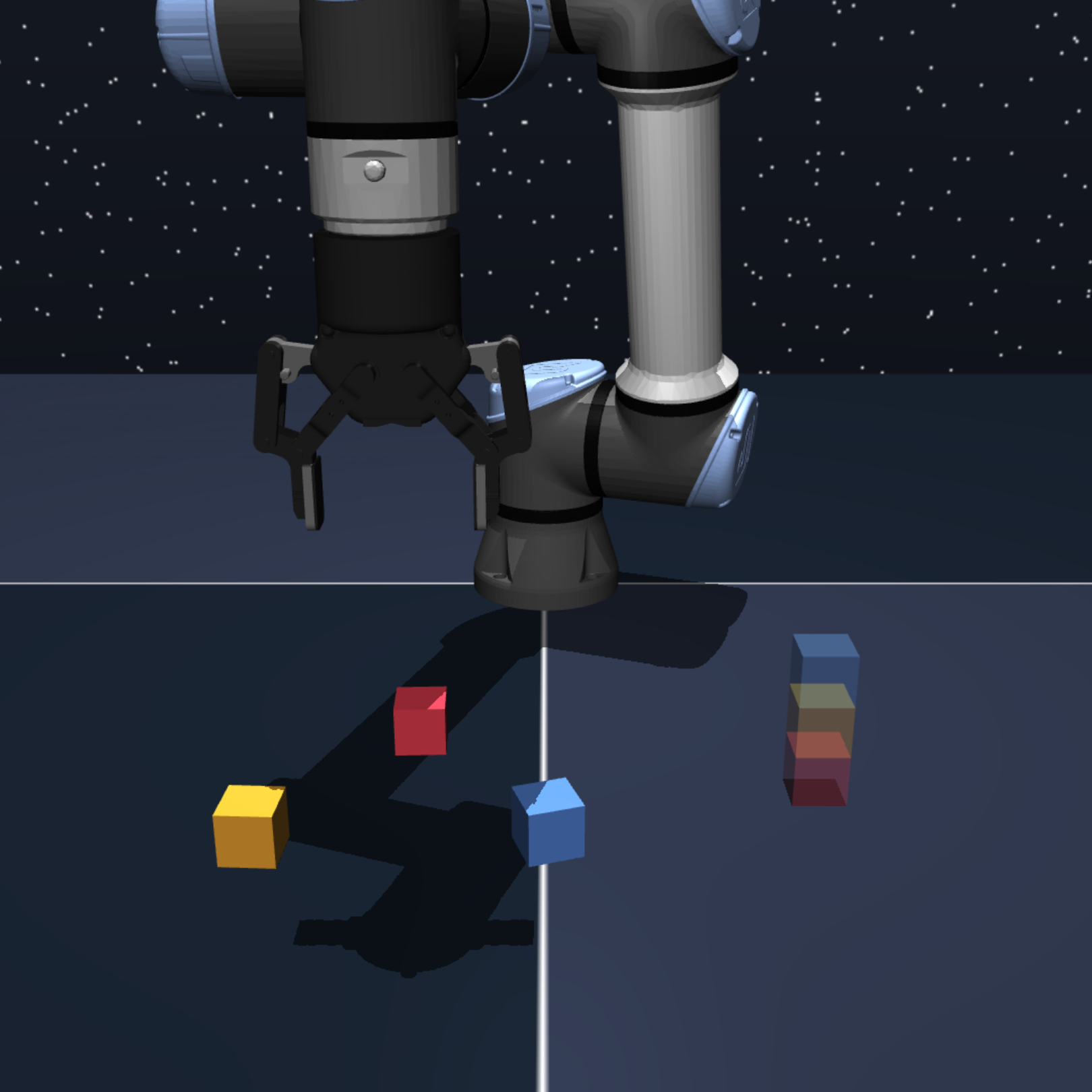}
    \end{subfigure}
    \begin{subfigure}[b]{0.235\textwidth}
        \centering
        \includegraphics[width=\textwidth]{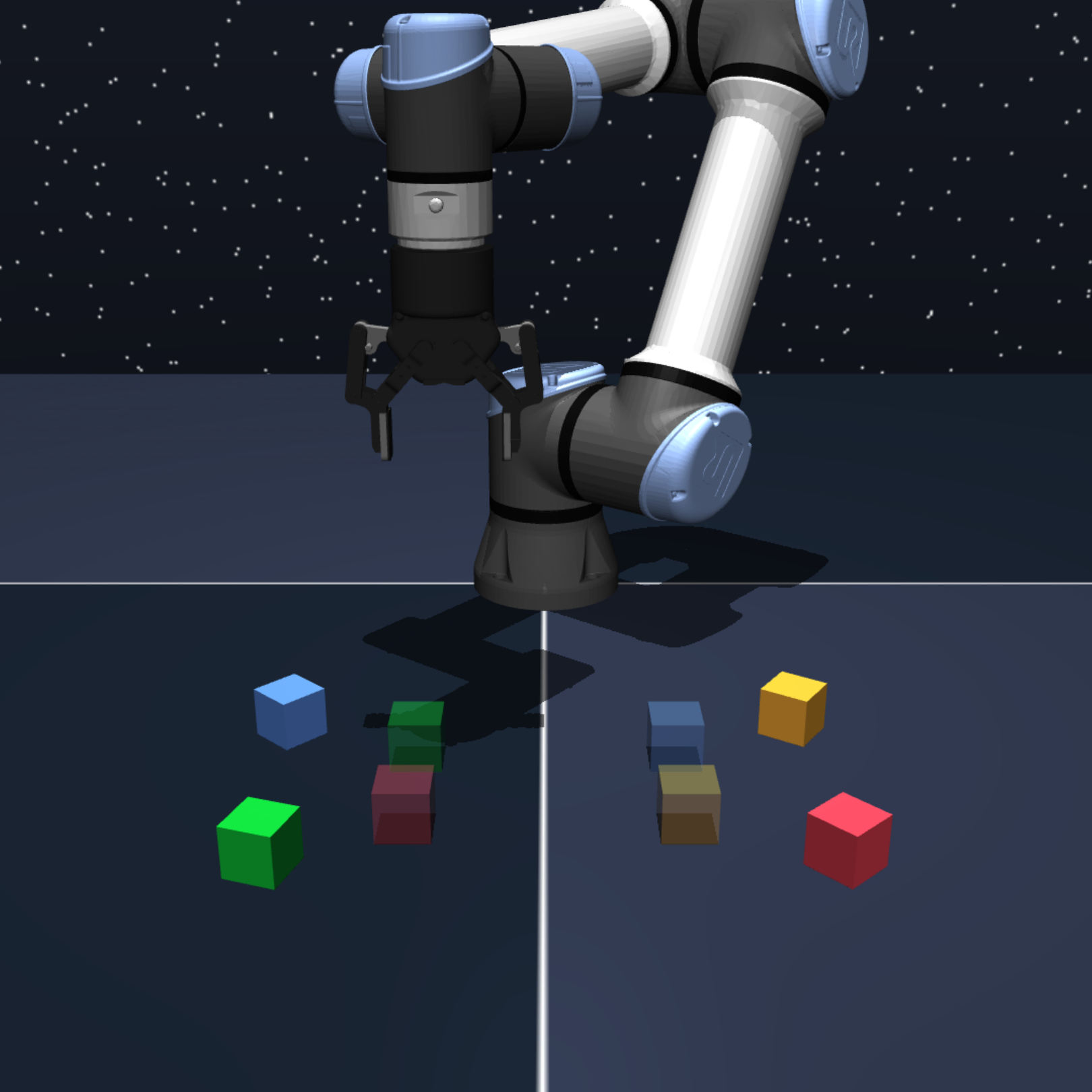}
    \end{subfigure}
    \begin{subfigure}[b]{0.235\textwidth}
        \centering
        \includegraphics[width=\textwidth]{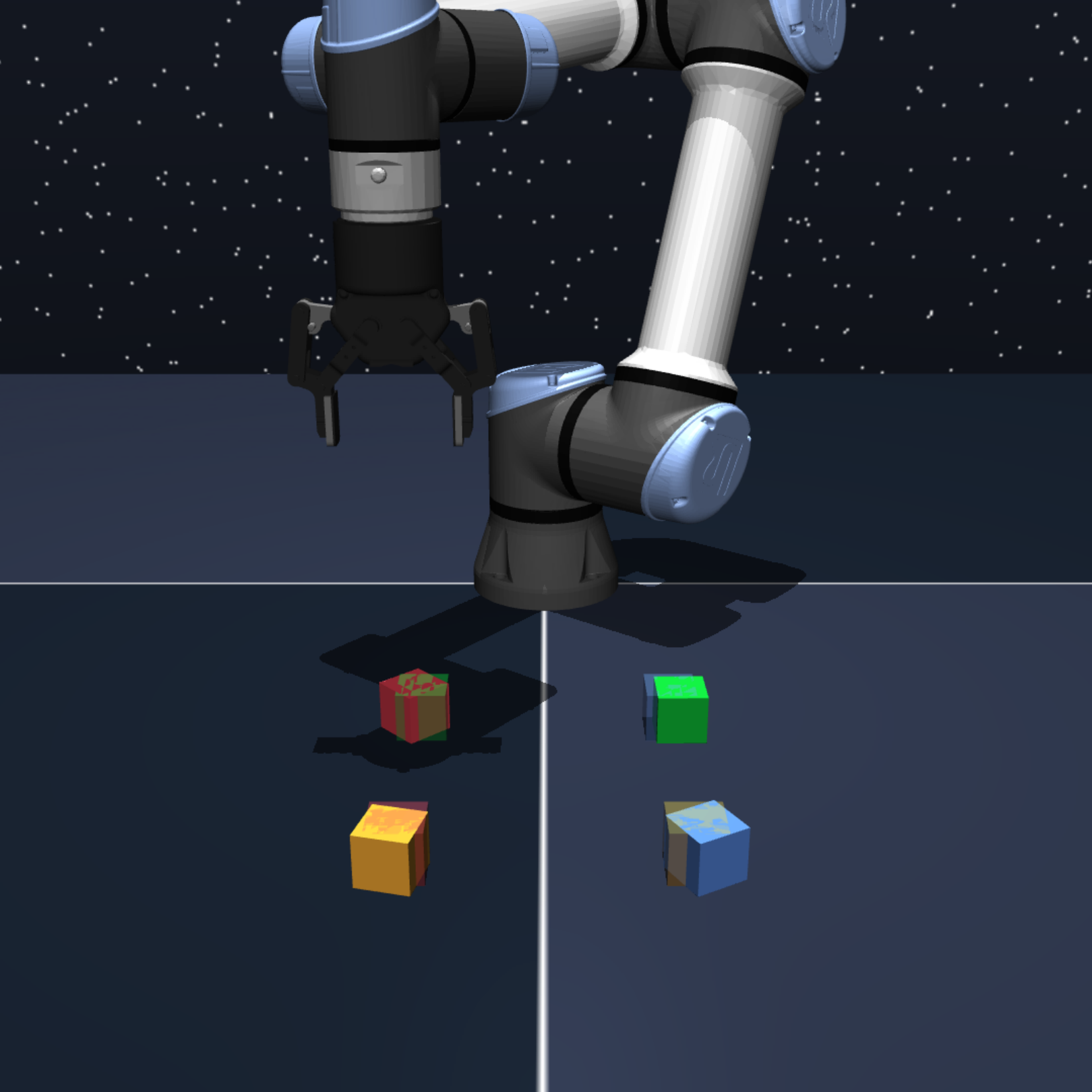}
    \end{subfigure}

    % Row 2: final configurations
    \begin{subfigure}[b]{0.235\textwidth}
        \centering
        \includegraphics[width=\textwidth]{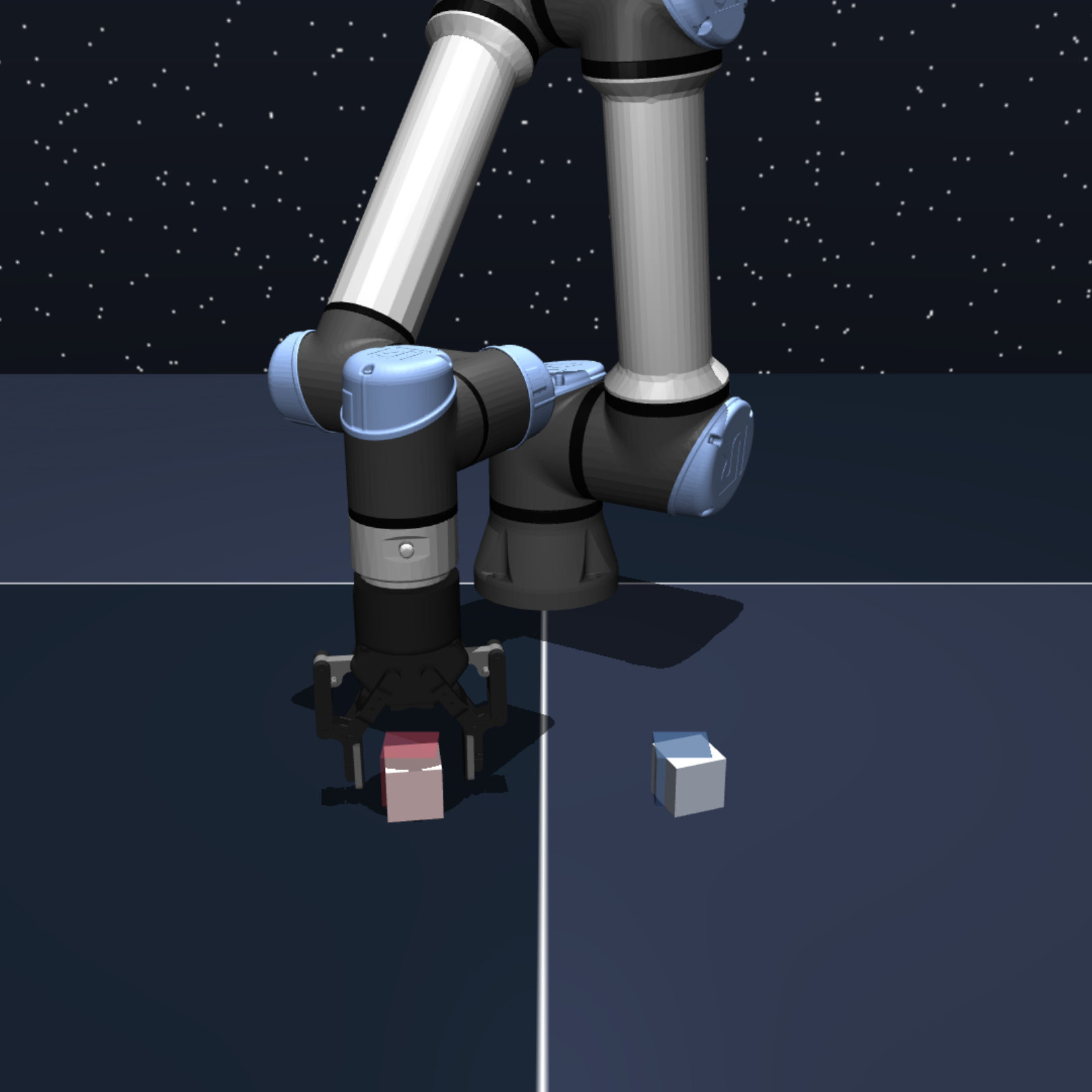}
    \end{subfigure}
    \begin{subfigure}[b]{0.235\textwidth}
        \centering
        \includegraphics[width=\textwidth]{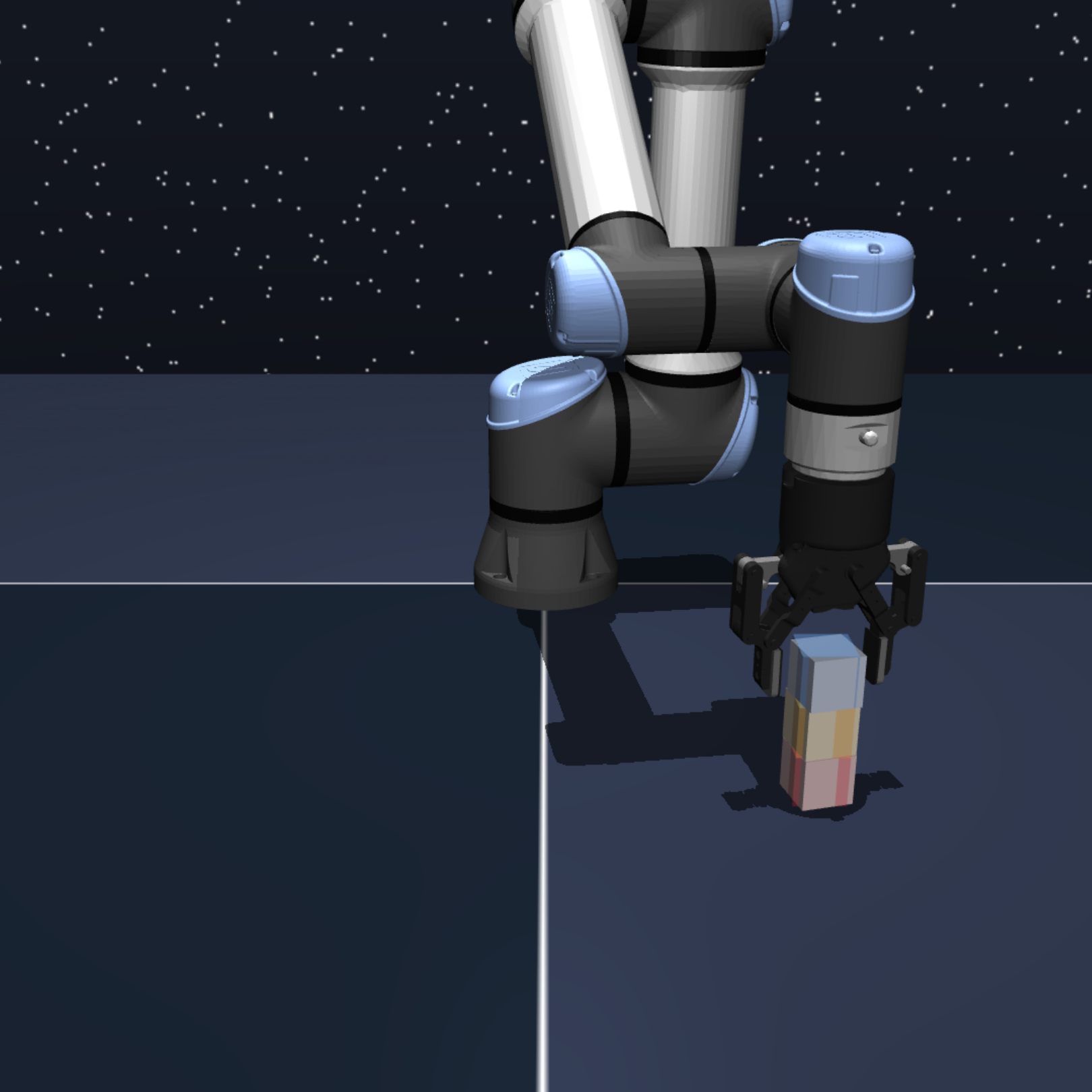}
    \end{subfigure}
    \begin{subfigure}[b]{0.235\textwidth}
        \centering
        \includegraphics[width=\textwidth]{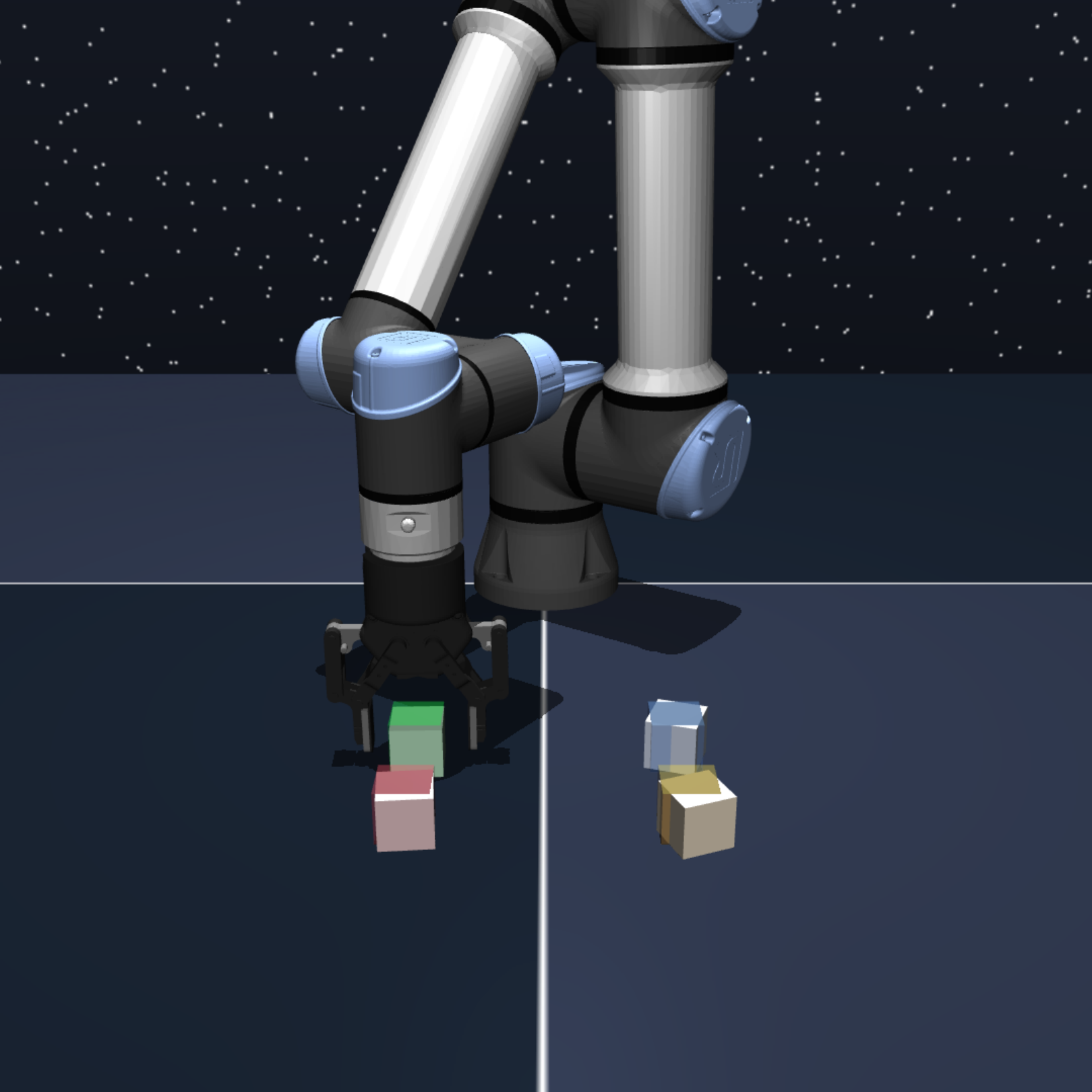}
    \end{subfigure}
    \begin{subfigure}[b]{0.235\textwidth}
        \centering
        \includegraphics[width=\textwidth]{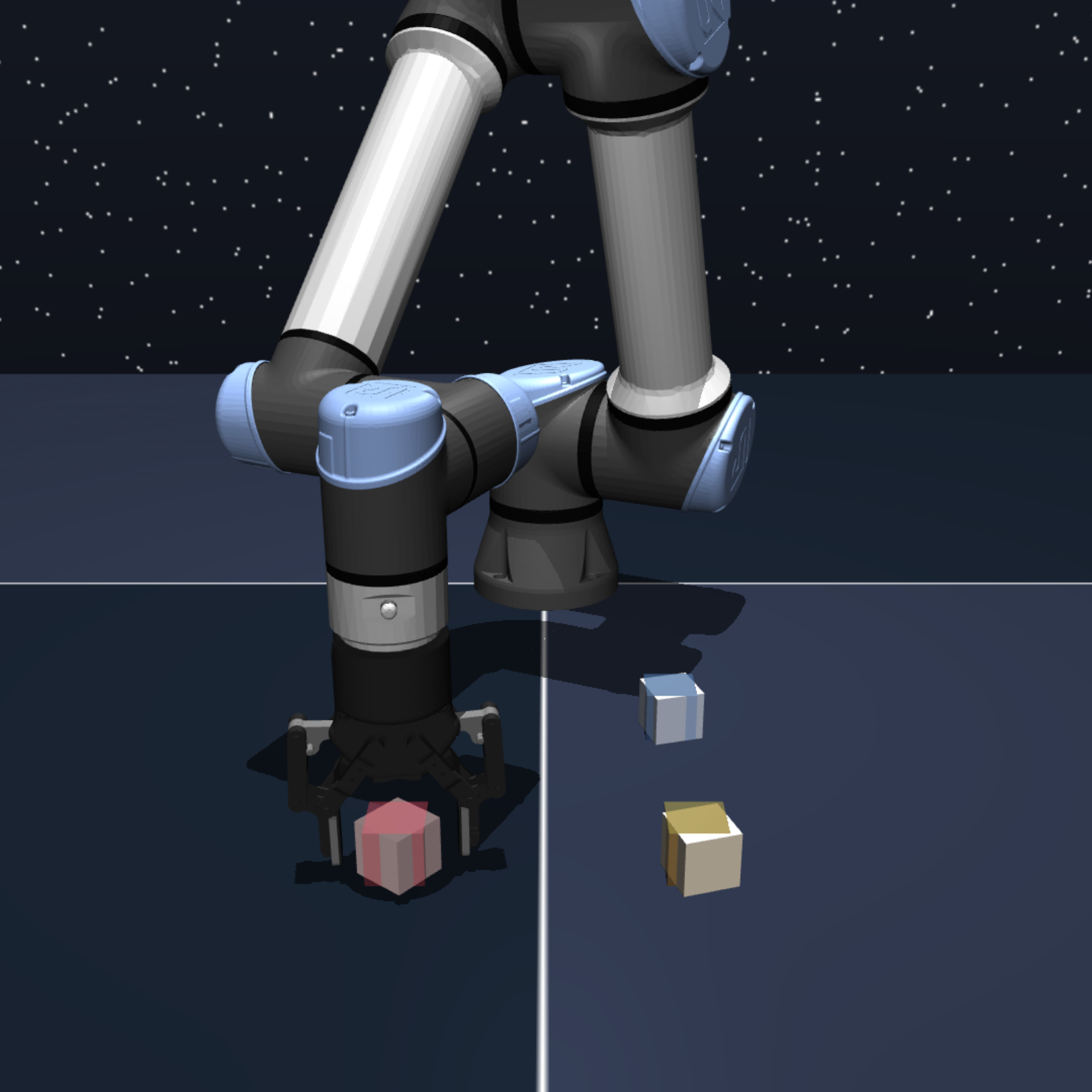}
    \end{subfigure}

    % Row 3: trajectories
    \begin{subfigure}[b]{0.235\textwidth}
        \centering
        \includegraphics[width=\textwidth]{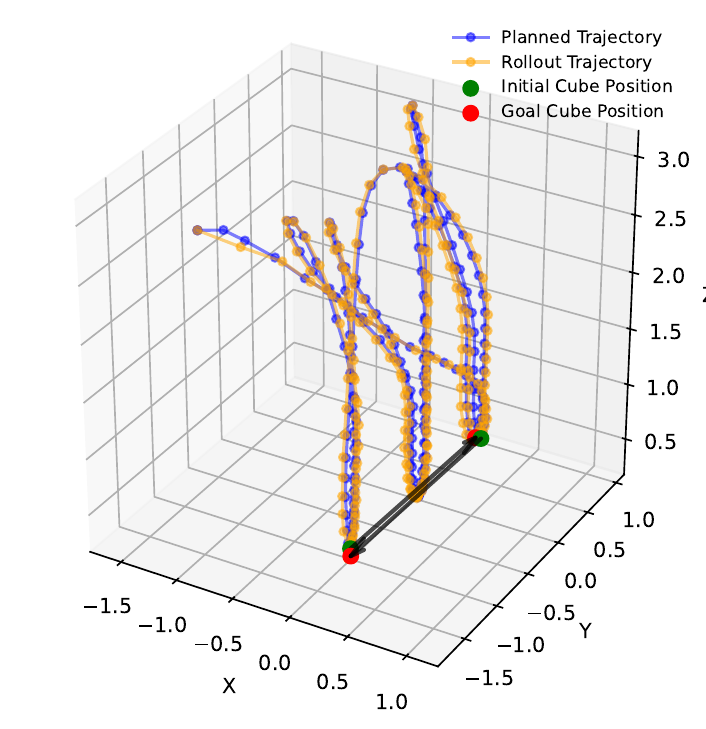}
    \end{subfigure}
    \begin{subfigure}[b]{0.235\textwidth}
        \centering
        \includegraphics[width=\textwidth]{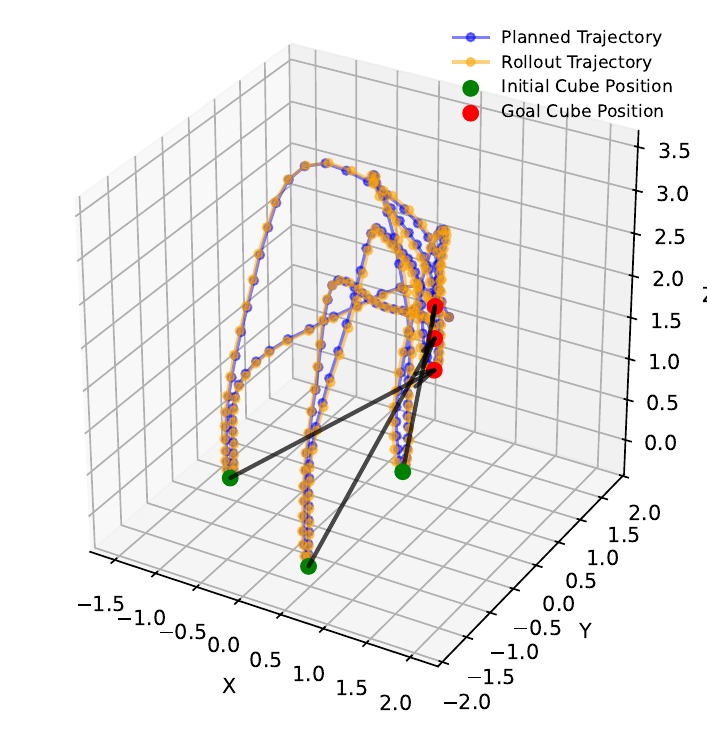}
    \end{subfigure}
    \begin{subfigure}[b]{0.235\textwidth}
        \centering
        \includegraphics[width=\textwidth]{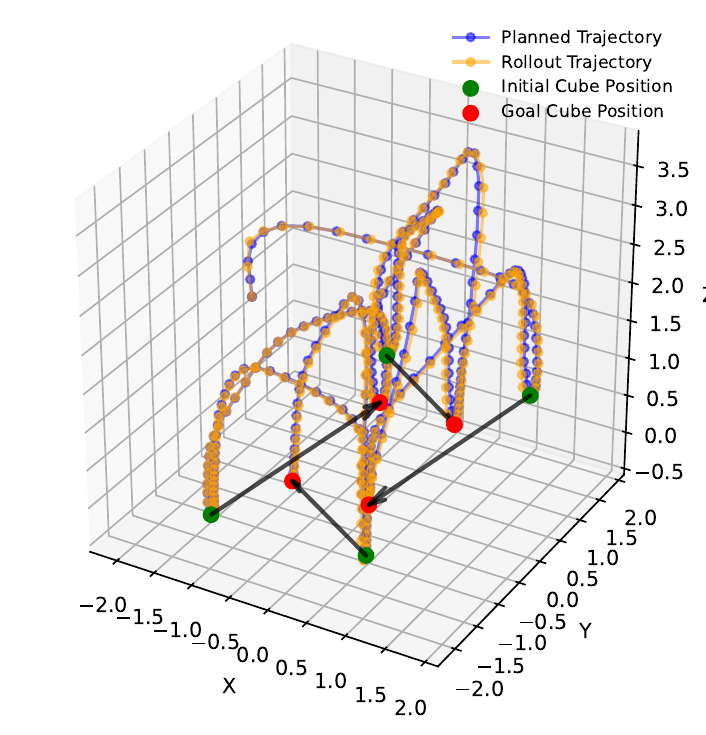}
    \end{subfigure}
    \begin{subfigure}[b]{0.235\textwidth}
        \centering
        \includegraphics[width=\textwidth]{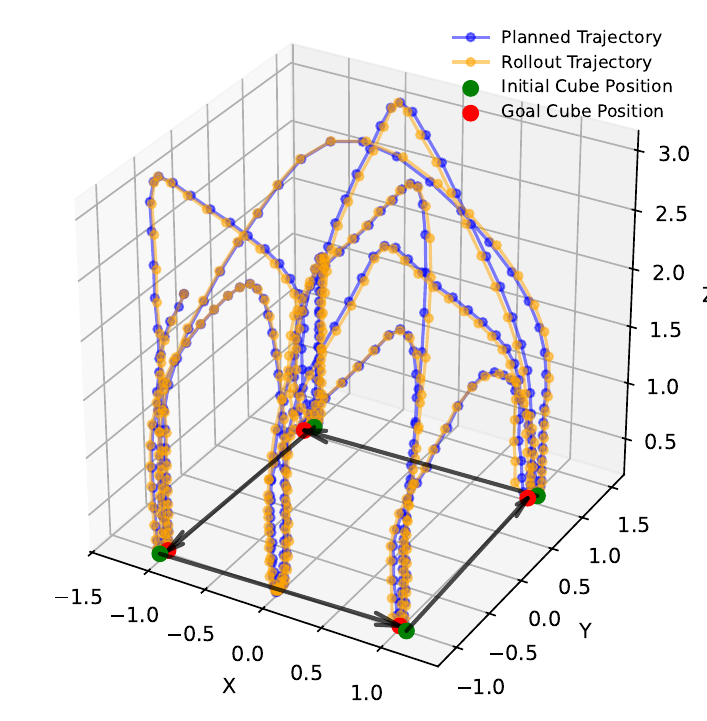}
    \end{subfigure}

    \caption{Representative manipulation cases in the Cube environment. Each column corresponds to one task instance. From top to bottom, the rows show the initial configuration, the final configuration, and the planned (blue)/executed (yellow) end-effector trajectories, respectively.}
    \label{fig:cube_cases}
\end{figure*}

\paragraph{Representative Examples}
Figure~\ref{fig:cube_cases} presents a set of representative manipulation cases. Additional cases and dynamic demonstrations are available at \url{https://cps-sjtu.github.io/DAG-STL}. Each column corresponds to one task instance, covering moving, swapping, stacking, and rotating behaviors in scenarios with two to four cubes. From top to bottom, the rows show the initial configuration, the final configuration after execution, and the planned/executed end-effector trajectories, respectively.

These representative cases show that DAG-STL is not restricted to navigation problems, but can also serve as a logic-based interface for multi-stage robot manipulation. In particular, the framework can synthesize temporally structured end-effector motions for moving, swapping, stacking, and related multi-object behaviors, despite being trained only on task-agnostic single-cube data. A systematic quantitative evaluation of the Cube domain is deferred to the experimental section~\ref{sec:exp_cube}, where we further assess performance on benchmark manipulation tasks and extended task settings beyond the original benchmark.

\subsection{Quantitative Evaluation}\label{sec:exp}
\subsubsection{Task Generation and Filtering}\label{sec:templates}
To evaluate the zero-shot generalization capability of DAG-STL on diverse STL specifications, we construct an evaluation set from randomly generated initial states and STL tasks. Rather than relying only on a few hand-crafted examples, this protocol is designed to systematically cover a representative range of temporal and logical structures relevant to long-horizon STL planning.

Specifically, we define nine STL task templates, summarized in Table~\ref{tab:STLtemplate}. These templates cover the main specification patterns considered in this paper, including simple reach-avoid tasks, conjunctions of multiple temporal goals, bounded-until constraints, nested eventuality structures, and tasks combining reachability with invariance requirements. They are not intended to span the full STL syntax, but rather to provide a representative and systematically controllable family of task patterns for evaluating offline STL planning under unknown dynamics. To instantiate concrete tasks from these templates, we randomly sample both the time intervals appearing in the temporal operators and the geometric parameters of the goal or avoidance regions associated with the atomic predicates. In the navigation environments (Maze2D, AntMaze, and the custom environment), these regions are defined as circles in the two-dimensional task space; in the Cube environment, they are defined as spheres in the three-dimensional Cartesian space of the end-effector position. This procedure yields a diverse set of STL tasks with varying horizon lengths, temporal nesting depths, and spatial configurations.

To obtain a controlled evaluation set and avoid trivially infeasible random specifications, we further apply a feasibility screening step. In the benchmark environments (Maze2D, AntMaze, and Cube), where no sound and complete feasibility oracle is available, we use ARS as a screening procedure: for each sampled task, we run the allocation process under a fixed computational budget and retain the task only if at least one feasible timed waypoint sequence is found. In the custom-built environment, by contrast, we use the optimization-based baseline~\cite{gilpin2020smooth} as a model-aware screening tool, since that environment admits an explicit analytical model. Only tasks for which the baseline returns a feasible solution are included in the final testing set.

\begin{table}[tbp]
\centering
\caption{STL task templates used in the experiments.}
\label{tab:STLtemplate}
\begin{tabular}{cl}
\toprule
\textbf{Type} & \textbf{STL Templates} \\
\midrule
1 & $\F_{\mr{I}_1}\mu_1 \wedge \G(\neg \mu_2)$ \\
2 & $\F_{\mr{I}_1}\mu_1 \wedge \F_{\mr{I}_2}\mu_2$ \\
3 & $\F_{\mr{I}_1}\mu_1 \wedge (\neg \mu_1 \U_{\mr{I}_1} \mu_2) $ \\
4 & $\F_{\mr{I}_1}(\mu_1 \wedge (\F_{\mr{I}_2}(\mu_2 \wedge \F_{\mr{I}_3} (\mu_3\wedge \F_{\mr{I}_4}(\mu_4)))))$ \\
5 & $\F_{\mr{I}_1}(\mu_1 \wedge (\F_{\mr{I}_2}(\mu_2 \wedge \F_{\mr{I}_3} (\mu_3)))) \wedge \G(\neg \mu_4\wedge\neg \mu_5)$ \\ 
6 & $\F_{\mr{I}_1} (\mu_1) \wedge \F_{\mr{I}_2} (\mu_2) \wedge \F_{\mr{I}_3} (\mu_3) \wedge \G(\neg \mu_4)$ \\ 
7 & $\F_{\mr{I}_1} (\G_{\mr{I}_2} (\mu_1)) \wedge \F_{\mr{I}_3} (\mu_2) \wedge \G(\neg \mu_3)$ \\ 
8 & $\F_{\mr{I}_1} (\mu_1 \wedge \F_{\mr{I}_2}( \G_{\mr{I}_3} (\mu_2)))$ \\ 
9 & $\F_{\mr{I}_1} (\mu_1 \wedge \F_{\mr{I}_2} (\mu_2) \wedge \F_{\mr{I}_3} (\mu_3) \wedge \G_{\mr{I}_4} (\mu_4))$ \\
\bottomrule
\end{tabular}

\end{table}

\subsubsection{Algorithm Variants}\label{sec:variants}
To evaluate the contribution of the main algorithmic components, we consider five variants of DAG-STL in the quantitative experiments. These variants are designed to isolate the effects of (i) the allocation strategy, namely basic planning versus ARS, (ii) metric-guided anytime refinement beyond first-solution search, and (iii) the integrated online replanning mechanism during execution.

\begin{itemize}[leftmargin=*]
    \item \textbf{Basic Planning (B).}
    This variant uses the basic decomposition--allocation--generation pipeline described in Section~\ref{sec:allocation} and Section~\ref{sec:traj_generation}, without ARS and without online replanning. It returns the first STL-feasible solution found by the basic allocation search and serves as the nominal baseline of the proposed framework.

    \item \textbf{Basic Planning + Online Replanning (B+OR).}
    This variant augments Basic Planning with the online replanning mechanism described in Section~\ref{sec:online_replanning}. It evaluates how much execution-time recovery alone can improve robustness when the nominal plan is still generated by the basic planner.

    \item \textbf{ARS.}
    This variant replaces the basic allocation module with Anytime Refinement Search. Instead of stopping at the first feasible timed waypoint allocation, it continues exploring alternative allocations under a fixed budget and returns the best STL-feasible trajectory according to the dynamic consistency metric.

    \item \textbf{ARS First Solution (ARS-FS).}
    This variant retains the multi-hypothesis branching mechanism of ARS, but terminates as soon as the first STL-feasible solution is found. It does not use the dynamic consistency metric to rank completed candidates or to guide further refinement. Comparing this variant with full ARS isolates the benefit of metric-guided anytime improvement beyond enlarged search coverage alone.

    \item \textbf{ARS + Online Replanning (ARS+OR).}
    This variant combines ARS with the online replanning mechanism. It represents the most complete version evaluated in the experiments, pairing an execution-oriented offline planner with runtime recovery.
\end{itemize}

\subsection{Experiments in Maze2D}
\subsubsection{Comparison with a Diffusion-Based Baseline}\label{sec:exp_maze}

We first compare the Basic Planning (B) backbone of DAG-STL with a representative diffusion-based baseline to isolate the effect of the \emph{test-time planning formulation}. The goal of this experiment is not to compare different generative backbones or training pipelines, but to examine whether long-horizon STL tasks are better handled by \emph{direct robustness-guided trajectory generation} or by the proposed \emph{decomposition--allocation--generation} strategy. We conduct the comparison on the three Maze2D layouts (\textit{Umaze}, \textit{Medium}, and \textit{Large}) using the filtered random STL tasks introduced in Section~\ref{sec:templates}.

\paragraph{Baseline}
As a representative baseline for direct robustness-guided diffusion planning, we adopt the method of~\cite{zhong2023guided}, which guides diffusion-based trajectory sampling using gradients of STL robustness computed by STLCG~\cite{leung2023backpropagation}. For convenience, we denote this adapted baseline by \textbf{RGD} (\emph{robustness-guided diffusion}) in the remainder of the paper. We note that the original method was not proposed for the long-horizon planning setting considered here; rather, we adapt its core robustness-guided sampling mechanism to our offline planning scenario as a controlled baseline representing the direct ``generate-the-whole-trajectory'' paradigm. In contrast to DAG-STL, RGD attempts to generate a complete STL-satisfying trajectory in a single stage, without explicit task decomposition or timed waypoint allocation. For a fair comparison, both methods use the same diffusion-model backbone and are trained on the same D4RL offline dataset. The difference therefore lies in the test-time planning strategy rather than in the underlying generative model or training data.

\begin{table*}[t]
\centering 
\caption{Comparison between RGD and the Basic Planning (B) backbone of DAG-STL in the Maze2D environments on STL task Types 1--3. For each environment-template pair, results are aggregated over 200 test cases. U: Umaze; M: Medium; L: Large.}
\setlength{\tabcolsep}{4pt}
\begin{tabular}{r >{\raggedleft\arraybackslash}p{0.8cm} rr rr rr}
\toprule
\multirow{2}{*}{\textbf{Env}} & \multirow{2}{*}{\textbf{Type}} & \multicolumn{2}{r}{\textbf{Success Rate(\%)$\uparrow$}} & \multicolumn{2}{r}{\textbf{Average Robustness Value$\uparrow$}} & \multicolumn{2}{r}{\textbf{Planning Time (s)$\downarrow$}}  \\
\cmidrule(r){3-4} \cmidrule(r){5-6} \cmidrule(r){7-8}
 &  & \multicolumn{1}{r}{\textbf{RGD}} & \multicolumn{1}{r}{\textbf{ours (B)}} & \multicolumn{1}{r}{\textbf{RGD}} & \multicolumn{1}{r}{\textbf{ours (B)}} & \multicolumn{1}{r}{\textbf{RGD}} & \multicolumn{1}{r}{\textbf{ours (B)}} \\
\midrule
\multirow{3}{*}{\textbf{U}} & 1 & 97.5 & 99.5 & 0.084$\pm$0.042 & 0.164$\pm$0.037 & 1.86$\pm$0.37 & 0.47$\pm$0.02 \\
 & 2 & 30.0 & 100.0 & -0.249$\pm$0.361 & 0.099$\pm$0.034 & 4.02$\pm$1.93 & 0.96$\pm$0.02 \\
 & 3 & 29.5 & 89.5 & -0.176$\pm$0.245 & 0.080$\pm$0.045 & 18.14$\pm$1.27 & 1.06$\pm$0.02 \\
\midrule
\multirow{3}{*}{\textbf{M}} & 1 & 94.0 & 95.5 & 0.178$\pm$0.062 & 0.256$\pm$0.064 & 3.70$\pm$0.70 & 0.79$\pm$0.02 \\
 & 2 & 26.5 & 92.5 & -0.729$\pm$0.809 & 0.183$\pm$0.125 & 20.96$\pm$16.62 & 1.81$\pm$0.04 \\
 & 3 & 28.0 & 83.5 & -0.380$\pm$0.504 & 0.167$\pm$0.105 & 46.53$\pm$4.59 & 1.90$\pm$0.03 \\
\midrule
\multirow{3}{*}{\textbf{L}} & 1 & 86.5 & 89.5 & 0.235$\pm$0.207 & 0.365$\pm$0.119 & 6.43$\pm$1.54 & 1.24$\pm$0.03 \\
 & 2 & 23.0 & 88.0 & -1.110$\pm$1.033 & 0.286$\pm$0.217 & 11.47$\pm$5.46 & 2.52$\pm$0.05 \\
 & 3 & 23.5 & 78.5 & -0.605$\pm$0.687 & 0.208$\pm$0.235 & 80.96$\pm$10.07 & 2.60$\pm$0.04 \\
\bottomrule
\end{tabular}

\label{tab:experiment_maze}
\end{table*}

\paragraph{Evaluation Protocol and Metrics}
For each environment-template pair, we evaluate both methods on the same set of 200 test tasks sampled according to the protocol in Section~\ref{sec:templates}. Execution and robustness evaluation follow the common procedure in Section~\ref{sec:robustness}, with trajectory resolution factor \(\eta=8\), using the same low-level execution settings described in Section~\ref{sec:implementation}. We report the following metrics:
\begin{itemize}[leftmargin=*]
    \item \textbf{Execution Success Rate (SR):} the percentage of cases whose executed trajectories achieve nonnegative STL robustness;
    \item \textbf{Average Robustness Value (RV):} the average robustness of the executed trajectories per case, computed after discarding the top and bottom 5\% of samples to reduce outlier effects;
    \item \textbf{Planning Time (PT):} the average wall-clock planning time per case, computed after discarding the top and bottom 5\% of samples to reduce outlier effects.
\end{itemize}

\paragraph{Results and Analysis}
The results are summarized in Table~\ref{tab:experiment_maze}. We focus this direct comparison on Types 1--3 because, on more complex templates, RGD rarely produces trajectories that are both STL-satisfying and executable, making the comparison much less informative. The full evaluation of DAG-STL across all templates is provided later in Section~\ref{sec:exp_variants}.

Three observations are most important. First, the Basic Planning (B) backbone of DAG-STL consistently outperforms RGD in all reported settings. Second, the gap widens rapidly with task complexity: while both methods achieve nontrivial success rates on the simplest template (Type 1), RGD deteriorates sharply on Types 2 and 3, whereas DAG-STL maintains high execution success. For example, in Umaze Type 2, the success rate increases from \(30.0\%\) to \(100.0\%\), and in Large Type 3, it increases from \(23.5\%\) to \(78.5\%\). Third, the same trend is reflected in robustness quality: RGD yields negative average robustness on all Type 2 and Type 3 tasks, whereas DAG-STL remains positive in all reported settings.

This comparison is meaningful because RGD represents a direct way to adapt robustness-guided diffusion to our planning setting: generate the entire trajectory at once and optimize it toward STL satisfaction through gradient guidance. However, this paradigm becomes increasingly ineffective for long-horizon structured tasks. On the one hand, directly maximizing the robustness of a full trajectory under temporally compositional STL constraints leads to a difficult nonconvex optimization problem at test time. On the other hand, trajectories that satisfy complex long-horizon tasks often deviate substantially from the trajectories most commonly represented in the offline dataset. As a result, robustness guidance tends to continually push samples away from the learned trajectory prior. As task complexity increases, these two effects compound, leading to lower success rates and higher planning cost.

In contrast, DAG-STL addresses this difficulty by decomposing the original STL task into a sequence of shorter-horizon requirements. This reduces the optimization burden of each generation step and keeps intermediate segments closer to behaviors that can be represented by the learned diffusion prior. As a result, even the Basic Planning backbone achieves markedly higher success rates and robustness than the direct baseline.

DAG-STL is also substantially more efficient. Across all reported settings, it reduces planning time by a large margin, and on the more difficult cases the gain often approaches an order of magnitude or more. In Umaze, the average runtime decreases from \(1.86\)--\(18.14\) seconds for RGD to \(0.47\)--\(1.06\) seconds for DAG-STL; in the Large layout, it decreases from \(6.43\)--\(80.96\) seconds to \(1.24\)--\(2.60\) seconds. This efficiency gain is consistent with the qualitative difference between the two planning paradigms: RGD repeatedly refines full-trajectory samples under robustness guidance, whereas DAG-STL solves a structured sequence of subproblems with much shorter effective horizons.

Overall, these results support the main motivation of DAG-STL: under the same diffusion backbone and offline training data, the decomposition--allocation--generation strategy is substantially more effective than direct robustness-guided diffusion planning for structured long-horizon STL tasks.

\subsubsection{Comparison of DAG-STL Variants}\label{sec:exp_variants}

We next compare the five DAG-STL variants introduced in Section~\ref{sec:variants} on the full set of nine STL task templates in the three Maze2D layouts. The goal of this experiment is to isolate the contribution of the main algorithmic components of the framework, namely: (i) replacing the basic allocation procedure with ARS, (ii) using metric-guided anytime refinement beyond first-solution search, and (iii) incorporating online replanning during execution. For each environment-template pair, we evaluate each variant on 200 test tasks sampled and filtered according to the protocol in Section~\ref{sec:templates}, and execution is carried out using the same evaluation procedure described in Section~\ref{sec:robustness}. We report the execution success rate of each variant in Table~\ref{tab:mode_comparison}.

% Requires \usepackage[table]{xcolor}
% Requires \usepackage{multirow}
\providecommand{\bestcell}[1]{\cellcolor{blue!18}#1}
\providecommand{\secondbestcell}[1]{\cellcolor{blue!8}#1}
\begin{table*}[htbp]
\centering
\caption{Execution success rates (\%) of different DAG-STL variants in Maze2D. U: Umaze; M: Medium; L: Large. Dark blue cells denote the best value, while light blue cells denote the second-best value.}
\label{tab:mode_comparison}
\begin{tabular}{l l *{9}{r}}
\toprule
Env & Variant & Type 1 & Type 2 & Type 3 & Type 4 & Type 5 & Type 6 & Type 7 & Type 8 & Type 9 \\
\midrule
\multirow{5}{*}{U} & B & \secondbestcell{99.5} & \bestcell{100.0} & 89.5 & 98.0 & 90.5 & \secondbestcell{94.0} & 98.5 & 98.0 & 95.0 \\
 & B+OR & \bestcell{100.0} & \secondbestcell{99.5} & 93.0 & 98.0 & 93.0 & \bestcell{94.5} & \bestcell{99.5} & 99.0 & 96.0 \\
 & ARS-FS & \secondbestcell{99.5} & \bestcell{100.0} & 92.0 & \secondbestcell{98.5} & 96.5 & 93.5 & 98.5 & 99.0 & \bestcell{97.0} \\
 & ARS & \bestcell{100.0} & \bestcell{100.0} & \secondbestcell{95.0} & 98.0 & \secondbestcell{98.0} & 92.0 & \bestcell{99.5} & \secondbestcell{99.5} & \secondbestcell{96.5} \\
 & ARS+OR & \bestcell{100.0} & \bestcell{100.0} & \bestcell{96.0} & \bestcell{99.0} & \bestcell{98.5} & \bestcell{94.5} & \secondbestcell{99.0} & \bestcell{100.0} & 95.5 \\
\midrule
\multirow{5}{*}{M} & B & 95.5 & 92.5 & 83.5 & 78.0 & 85.0 & 82.0 & 89.5 & 84.0 & 66.0 \\
 & B+OR & \secondbestcell{96.5} & 95.5 & 92.0 & 83.0 & 90.5 & 86.5 & 92.0 & 86.5 & 66.5 \\
 & ARS-FS & \secondbestcell{96.5} & 94.5 & 90.0 & 90.5 & 87.5 & 87.5 & 95.0 & 93.0 & 91.0 \\
 & ARS & \bestcell{98.5} & \secondbestcell{97.0} & \secondbestcell{93.5} & \secondbestcell{91.5} & \secondbestcell{92.0} & \secondbestcell{90.0} & \bestcell{98.0} & \bestcell{97.0} & \secondbestcell{91.5} \\
 & ARS+OR & \bestcell{98.5} & \bestcell{98.5} & \bestcell{95.5} & \bestcell{95.5} & \bestcell{93.0} & \bestcell{92.0} & \secondbestcell{97.0} & \secondbestcell{96.5} & \bestcell{93.0} \\
\midrule
\multirow{5}{*}{L} & B & 89.5 & 88.0 & 78.5 & 71.0 & 73.5 & 81.0 & 81.0 & 89.0 & 60.0 \\
 & B+OR & 92.5 & 90.5 & 84.0 & 74.0 & 81.5 & 88.5 & 89.0 & 93.0 & 61.0 \\
 & ARS-FS & \secondbestcell{95.5} & 95.5 & 84.0 & 86.0 & 80.5 & 85.0 & 91.0 & 96.0 & 83.0 \\
 & ARS & \bestcell{98.5} & \bestcell{99.0} & \secondbestcell{90.0} & \secondbestcell{94.0} & \secondbestcell{85.5} & \secondbestcell{91.5} & \secondbestcell{93.5} & \secondbestcell{98.0} & \secondbestcell{87.0} \\
 & ARS+OR & \bestcell{98.5} & \secondbestcell{98.5} & \bestcell{91.0} & \bestcell{95.5} & \bestcell{87.5} & \bestcell{92.5} & \bestcell{94.0} & \bestcell{98.5} & \bestcell{90.5} \\
\bottomrule
\end{tabular}
\end{table*}

\paragraph{Overall Trends}
Several clear trends are evident from Table~\ref{tab:mode_comparison}. First, the full version, ARS+OR, achieves the strongest overall performance across the three Maze2D layouts and the nine STL templates. It attains the best or second-best result in nearly all reported settings, and is particularly dominant in the more challenging Medium and Large layouts. Second, the gap between variants is small on simpler templates and in the easiest layout, but widens substantially as the temporal structure of the task and the geometric difficulty of the environment increase. This indicates that the main benefit of the proposed components is not merely to improve already easy cases, but to preserve execution success under increasing temporal and geometric complexity.

\paragraph{Effect of ARS Over Basic Planning}
Comparing B with ARS shows that replacing the basic first-feasible allocation procedure with Anytime Refinement Search leads to substantial gains, especially in the Medium and Large layouts. In the Umaze environment, where the basic planner already performs strongly, the improvement is moderate but still consistent on several of the more structured templates. By contrast, in the Medium and Large layouts, ARS improves success rates across almost all task types, often by a large margin. For example, in Medium Type 9, the success rate increases from \(66.0\%\) for B to \(91.5\%\) for ARS, and in Large Type 4, it increases from \(71.0\%\) to \(94.0\%\). These results suggest that as the timed waypoint allocation problem becomes more combinatorial, stopping at the first feasible solution is increasingly suboptimal, and continued search over alternative allocations becomes important for obtaining executable plans.

\paragraph{Effect of Metric-Guided Anytime Refinement}
The comparison between ARS-FS and ARS isolates the benefit of metric-guided anytime refinement beyond enlarged search coverage alone. In general, ARS outperforms ARS-FS more clearly in the more difficult environments, especially in the Large layout, where ARS achieves higher success rates on all nine templates. This indicates that the gain of ARS cannot be explained solely by the multi-hypothesis branching mechanism; the dynamic consistency metric used to rank and refine candidate allocations also plays an important role. In the easier Umaze layout, the gap between ARS-FS and ARS is smaller and occasionally mixed, which is consistent with the fact that many candidate allocations are already executable in relatively simple geometries. As the environment becomes more challenging, however, selecting among feasible nominal plans according to execution-oriented quality becomes increasingly important.

\paragraph{Effect of Online Replanning}
Online replanning further improves performance in many settings, although the magnitude of the gain depends on the quality of the nominal offline plan. Comparing B+OR with B shows that runtime recovery is especially helpful in the Medium and Large layouts, where the nominal plan is more likely to encounter execution drift. For example, in Large Type~6, adding online replanning to B raises the success rate from \(81.0\%\) to \(88.5\%\), and in Medium Type~3, it raises the success rate from \(83.5\%\) to \(92.0\%\). In the easier Umaze layout, where B already performs strongly, the improvement is smaller and can occasionally be neutral or slightly mixed.

A similar pattern appears when comparing ARS+OR with ARS. Since ARS already provides high-quality nominal plans, the additional margin brought by online replanning is naturally smaller, but it remains visible on several difficult templates, such as Medium Type~4 (\(91.5\%\rightarrow95.5\%\)) and Large Type~9 (\(87.0\%\rightarrow90.5\%\)). At the same time, on several easier templates the difference is marginal and can even be slightly negative. One reason is that, for efficiency, the replanning module in ARS+OR invokes ARS-FS rather than the full ARS. As a result, once replanning is triggered, the recovered suffix is based on the first feasible allocation rather than the higher-quality refined nominal trajectory that ARS could obtain offline, so the replanned continuation can occasionally be less favorable than the original nominal rollout. Overall, these results suggest that online replanning primarily acts as an execution-time recovery mechanism that is most useful when the nominal plan is strong but the rollout remains vulnerable to accumulated deviation.

\paragraph{Combined Effect of Offline and Online Components}
Taken together, the results show that the strongest performance is obtained by combining an execution-oriented offline planner with runtime recovery. Basic planning alone is often sufficient on simple tasks, but its performance degrades noticeably on longer-horizon and more structured templates. ARS substantially improves the quality of the nominal plan by metric-guided searching over alternative timed waypoint allocations, and online replanning further increases robustness during execution by adapting to deviations encountered at execution time. The full ARS+OR variant therefore provides the most reliable behavior across the full range of tested STL tasks.

\providecommand{\bestcell}[1]{\cellcolor{blue!18}#1}
\providecommand{\secondbestcell}[1]{\cellcolor{blue!8}#1}
\begin{table}[t]
\centering
\caption{Average total decision time (s) of different DAG-STL variants in Maze2D, averaged over the nine STL task templates in each environment. U: Umaze; M: Medium; L: Large. Detailed per-template timing results are deferred to Appendix~\ref{apx:additional_timing}. Dark blue cells denote the best value, while light blue cells denote the second-best value.}
\label{tab:mode_time_comparison}
\begin{tabular}{l c c c}
\toprule
Variant & U & M & L \\
\midrule
B & \bestcell{1.21$\pm$0.44} & \bestcell{2.28$\pm$0.91} & \bestcell{3.19$\pm$1.22} \\
B+OR & \secondbestcell{1.25$\pm$0.47} & \secondbestcell{2.57$\pm$1.02} & \secondbestcell{4.44$\pm$1.73} \\
ARS-FS & 3.17$\pm$2.10 & 4.13$\pm$2.24 & 5.06$\pm$2.39 \\
ARS & 9.49$\pm$3.38 & 12.59$\pm$4.04 & 16.83$\pm$5.00 \\
ARS+OR & 9.59$\pm$3.62 & 13.91$\pm$4.85 & 19.95$\pm$8.27 \\
\bottomrule
\end{tabular}
\end{table}

\paragraph{Computational Cost}
We next examine the computational cost of the different variants. Note that the timing metric used here differs slightly from that in Section~\ref{sec:exp_maze}. There, \emph{Planning Time} corresponds to the offline planning time because no online replanning is used. Here, we report \emph{Total Decision Time}, defined as the total time from task input to obtaining a final executable trajectory. For variants without online replanning, this quantity coincides with the offline planning time; for variants with online replanning, it additionally includes the replanning time incurred during execution. Table~\ref{tab:mode_time_comparison} summarizes the average decision time over the nine STL templates in each environment, while detailed per-template results are deferred to Appendix~\ref{apx:additional_timing}.

Table~\ref{tab:mode_time_comparison} shows a clear and stable runtime ordering across all three Maze2D layouts. The basic planner B is the fastest variant, with average decision times of \(1.21\), \(2.28\), and \(3.19\) seconds in Umaze, Medium, and Large, respectively. B+OR remains close to B, while ARS-FS introduces a moderate additional cost. By contrast, ARS and ARS+OR are substantially more expensive, with average decision times increasing to \(9.49\!\sim\!19.95\) seconds depending on the environment. This trend is expected: B terminates once a feasible timed allocation is found, whereas ARS continues to explore and refine alternative allocations in order to improve execution-oriented plan quality. The resulting increase in computation is therefore the cost of obtaining more reliable nominal plans, especially in the more difficult environments.

The comparison between ARS-FS and ARS further clarifies this trade-off. In all three environments, ARS-FS is much faster than ARS, indicating that metric-guided anytime refinement introduces a nontrivial additional search cost beyond first-solution branching alone. At the same time, as shown in Table~\ref{tab:mode_comparison}, ARS typically achieves higher success rates than ARS-FS, particularly in the Medium and Large layouts. This suggests that the additional computation of ARS is not redundant; rather, it is directly associated with better allocation quality and improved execution performance.

For the variants with online replanning, the reported time should be interpreted more carefully. In B+OR and ARS+OR, part of the measured time is incurred during execution, since the total includes both the initial offline planning phase and the additional online replanning time required before a final executable trajectory is obtained. Consequently, the increase from B to B+OR, or from ARS to ARS+OR, should not be interpreted solely as additional offline planning overhead; it also reflects the runtime cost of execution-time recovery.

The difference between B+OR and ARS+OR is also consistent with the replanning mechanisms used in the two variants. In B+OR, each online replanning step invokes the basic planner B, whereas in ARS+OR, online replanning invokes ARS-FS, which still performs multi-candidate branching but does not include metric-guided anytime refinement. Thus, the overhead introduced by OR depends not only on how often replanning is triggered, but also on the complexity of the replanner used at each trigger. As a result, the additional total decision cost introduced by OR is typically smaller for B+OR than for ARS+OR. This also explains why ARS+OR can yield stronger execution performance while incurring a larger runtime overhead, especially in the more difficult environments.

Overall, these timing results complement the success-rate analysis by revealing a clear computation--performance trade-off. B provides the lowest decision time but is less reliable on difficult tasks. Adding OR often improves execution success, but at the price of introducing decision-making effort during execution, which may be undesirable in settings with strict real-time constraints. In such cases, purely offline variants, especially B, may be preferable, or one may use a lightweight replanner such as B during execution. By contrast, ARS incurs substantially higher offline search cost, but yields much stronger nominal plans and higher execution success on complex long-horizon tasks. It is therefore better suited to settings in which offline planning budget is less critical and execution reliability is the primary objective. Among all variants, ARS+OR provides the strongest overall reliability, but also typically incurs the highest total decision cost.

\subsubsection{Analysis of Reachability Time Predictor}\label{exp:time_predictor}

\begin{figure*}[t]
    \centering
    \includegraphics[width=0.97\textwidth]{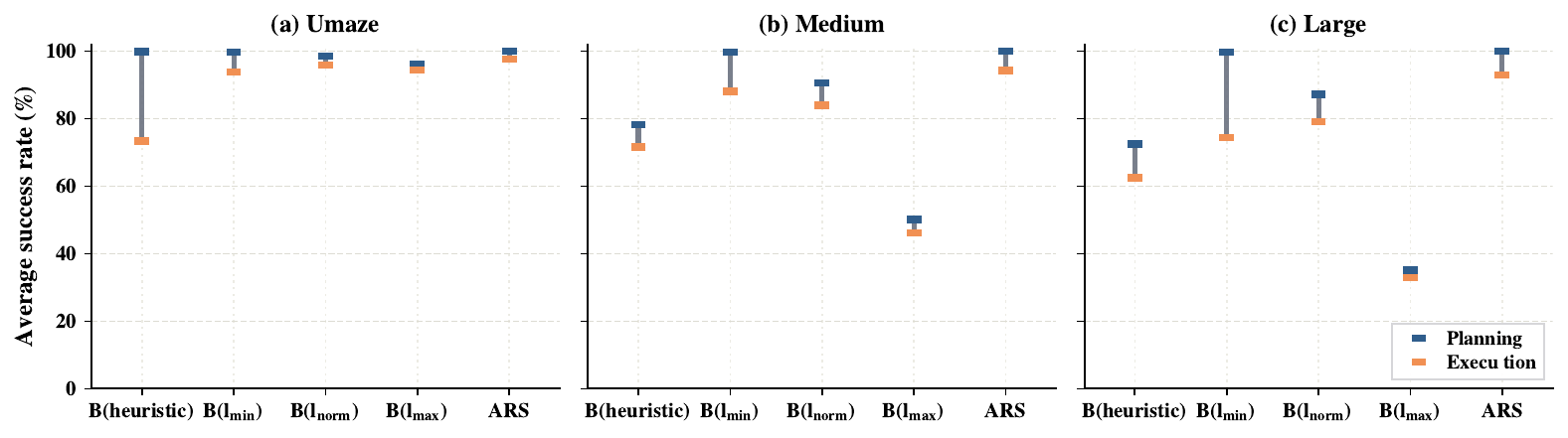}
    \caption{Comparison of average planning and execution success rates across Maze2D environments. In each panel, the x-axis lists the five methods: B(Heuristic), B($l_{\min}$), B($l_{\mathrm{norm}}$), B($l_{\max}$), and ARS, and the y-axis reports success rate (\%). For each environment-method pair, planning and execution success rates are averaged over the nine task templates. The two values are shown as horizontal markers connected by a vertical segment.}
    \label{fig:exp_TP}
\end{figure*}

As described in Section~\ref{sec:time_predict}, our time predictor does not output a single deterministic transition time. Instead, it models a conditional distribution over feasible trajectory lengths between two states and supports sampling-based prediction. The default prediction \(l_{\mathrm{norm}}\) corresponds to an unguided sample reflecting a typical length under the learned distribution, while \(l_{\min}\) and \(l_{\max}\) are obtained by guided sampling toward shorter and longer lengths, respectively. The latter two therefore represent relatively extreme modes of the same learned predictor, whereas \(l_{\mathrm{norm}}\) corresponds to the most typical prediction under the data distribution.

The role of the time predictor in our framework is to inject heuristic dynamical feasibility information into task-level allocation. More specifically, the timed waypoint sequence produced by allocation should not only satisfy the STL timing constraints, but should also be easy to realize later by low-level trajectories that remain consistent with the offline trajectory distribution. From this perspective, different prediction modes induce different trade-offs. A more aggressive time estimate may make it easier to find a nominal allocation, but the resulting timed waypoints can be harder to realize reliably during trajectory generation and execution. A more conservative estimate may improve the realizability of the allocated waypoints, but can also make the allocation itself overly restrictive under limited timing slack.

To analyze this trade-off, we compare four variants of the \textbf{Basic Planning} (\textbf{B}) that differ only in the time prediction mode: the learned predictor using \(l_{\mathrm{norm}}\), the same learned predictor using \(l_{\min}\) only, the same learned predictor using \(l_{\max}\) only, and a heuristic predictor based on scaled Manhattan distance. We additionally include ARS as a reference. Unlike the B variants, ARS does not commit to a single fixed timing mode; instead, it retains and explores multiple temporal hypotheses sampled from the learned predictor and then uses dynamic-consistency feedback from the generated trajectories to refine its allocation choices. 

For each environment-template pair, we evaluate 200 test tasks as in the preceding experiments. Figure~\ref{fig:exp_TP} reports, for each environment and method, the planning success rate (whether the planner finds a complete candidate allocation) and the execution success rate (whether the final executed trajectory satisfies the STL task), averaged over the nine task templates.

\paragraph{Results and Analysis}
Figure~\ref{fig:exp_TP} shows a clear trade-off between nominal schedulability and downstream executability induced by the time prediction mode. Using \(l_{\min}\), which corresponds to an aggressive tail of the learned length distribution, generally increases planning success, indicating that shorter predicted transition times make it easier for the allocator to assemble a complete timed waypoint sequence. However, this benefit often comes with a larger gap between planning and execution success, showing that many of the resulting nominal allocations are harder to realize reliably by low-level trajectories supported by the offline data distribution. Using \(l_{\max}\), which corresponds to an overly conservative tail, produces the opposite behavior: the planning-to-execution gap is usually smaller, suggesting that the resulting timed waypoints are easier to realize once found, but the planning success rate drops substantially, especially in the Medium and Large layouts, because the inflated time estimates make the allocator too restrictive under limited timing windows. In contrast, the default learned prediction \(l_{\mathrm{norm}}\), which represents the most typical sample under the learned distribution, provides a better balance between these two effects and therefore serves as a more effective default mode for Basic Planning.

At the same time, the figure suggests that the framework is not overly brittle to moderate timing error. Even a simple heuristic predictor retains meaningful success rates in all three environments. This tolerance is consistent with the structure of the problem and the data: STL tasks usually provide nonzero timing slack, the offline dataset may contain multiple feasible trajectories of different lengths and paths between the same waypoint pair, and the trajectory generator together with low-level execution can absorb moderate timing mismatch. Thus, the framework does not require an oracle-like time predictor to remain functional. Nevertheless, systematic extreme bias remains harmful: geometric heuristics are consistently inferior to the learned predictor, and the fixed extreme modes \(l_{\min}\) and especially \(l_{\max}\) expose opposite failure modes of overly aggressive and overly conservative allocation.

Finally, ARS achieves the strongest overall performance because it partially breaks this fixed-mode trade-off. Instead of committing to a single conservative or aggressive timing estimate, ARS can preserve multiple temporal hypotheses during allocation and then revise these choices according to the dynamic consistency of the generated trajectories. In this sense, ARS can better absorb the residual systematic bias that may remain even when using \(l_{\mathrm{norm}}\) alone.

\subsubsection{Effectiveness of the Dynamic Consistency Metric}\label{sec:exp_dcm_effectiveness}
As introduced in Section~\ref{sec:dyn_metric}, the proposed Dynamic Consistency Metric (DCM) is designed to measure how well a planned trajectory is supported by the offline training distribution at the local dynamical level. The underlying expectation is that trajectories with better data-supported local transitions should also be easier to execute reliably under the true rollout dynamics. To study this relationship more directly, we additionally perform a reach-avoid evaluation that is independent of the STL planning stack. This experiment uses the Large Maze2D layout and focuses only on ranking multiple candidate trajectories generated for the same local task, rather than on end-to-end STL planning.

\paragraph{Setup and Metrics}
We randomly generate 200 reach-avoid cases in Maze2D-Large. Each case contains one start state, one circular goal region, and two circular avoidance regions. To match the actual use of DCM as closely as possible while isolating the trajectory-generation-to-execution pipeline from the rest of the STL planner, we reuse exactly the same pretrained trajectory generator, time predictor, controller, and rollout settings as those used in the full DAG-STL pipeline. For every case, we therefore follow the same local generation protocol used inside DAG-STL: for each candidate, we first sample a goal state from the goal region, use the same time predictor as in the main Maze2D experiments to predict the corresponding transition length, and then run the diffusion generator with that predicted horizon to produce a reference trajectory. Repeating this process 10 times yields 10 candidates per case and 2000 candidate trajectories in total. Before execution, each candidate is scored only by DCM, without using any rollout information. We then measure whether the executed trajectory reaches the goal while remaining outside both avoidance regions. Figure~\ref{fig:dcm_effectiveness} reports two complementary views: the upper panel shows execution success rate as a function of the DCM rank among the 10 candidates within each case, while the lower panel shows execution success rate after grouping all 2000 candidates by global DCM-score decile.

\begin{figure}[tbp]
    \centering
    \includegraphics[width=0.94\linewidth]{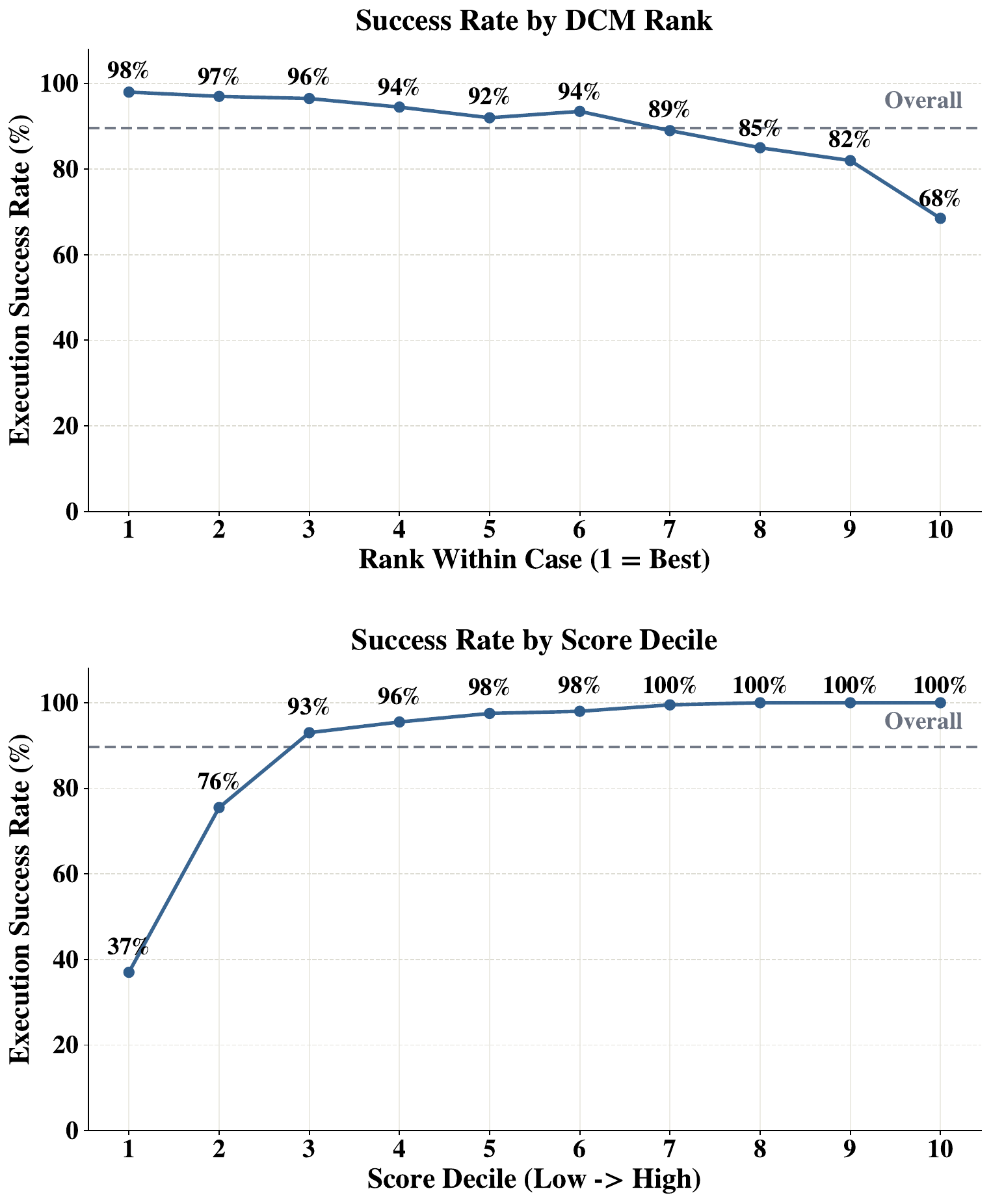}
    \caption{Effectiveness of DCM in Maze2D-Large. Upper panel: execution success rate versus DCM rank among 10 sampled candidates from the same case. Lower panel: execution success rate across global DCM-score deciles.}
    \label{fig:dcm_effectiveness}
\end{figure}

\paragraph{Results and Analysis}
Figure~\ref{fig:dcm_effectiveness} shows that DCM is strongly correlated with the final execution outcome. Within the same case, the candidate ranked first by DCM reaches a \(98.0\%\) execution success rate, whereas the lowest-ranked candidate drops to \(68.5\%\). A similarly clear stratification appears globally: the lowest DCM-score decile succeeds only \(37.0\%\) of the time, while the highest deciles are essentially saturated at \(99.5\%\) to \(100\%\). This trend is also reflected in the aggregate statistics: over all 2000 candidates, the overall execution success rate is \(89.6\%\), but successful trajectories have a much higher average DCM score than failed ones, and the resulting area under the receiver operating characteristic curve (AUROC) reaches \(0.930\).

These results provide direct support for the role of DCM in the full framework. Even outside the STL allocation loop, DCM can reliably identify the candidates that are most likely to remain executable under the true rollout dynamics. In particular, selecting only the top-ranked candidate in each case already raises the case-level success rate to \(98.0\%\), which is very close to the \(98.5\%\) oracle upper bound obtained by accepting a case whenever any one of its 10 candidates succeeds. This confirms that DCM is an effective posterior reranking and candidate-screening module for diffusion-based trajectory planning.

\subsection{Experiments in Other Environments}\label{sec:exp_other}
\subsubsection{Cross-Dynamics Evaluation in AntMaze}\label{sec:exp_ant}
We next evaluate DAG-STL in the AntMaze environment to examine whether the trends observed in Maze2D persist under substantially more complex system dynamics. Compared with Maze2D, AntMaze involves a higher-dimensional locomotion system and a more challenging execution interface, making it a more demanding testbed for logic-guided long-horizon planning.

We report four representative variants: \textbf{Basic Planning (B)}, \textbf{Basic Planning with Online Replanning (B+OR)}, \textbf{ARS}, and \textbf{ARS+OR}. The task generation, execution protocol, and evaluation metrics follow the same setup as in the preceding experiments. For each STL template, results are aggregated over 200 test cases. Table~\ref{tab:exp_antmaze} reports the execution success rate (SR) and total decision time (DT) for the nine STL task templates.

\begin{table*}[tbp]
\centering
\setlength{\tabcolsep}{2pt}
\caption{Results in AntMaze. SR (\%): execution success rate; DT (s): total decision time; Dark blue cells denote the best value, while light blue cells denote the second-best value.}
\label{tab:exp_antmaze}
\providecommand{\bestcell}[1]{\cellcolor{blue!18}#1}
\providecommand{\secondbestcell}[1]{\cellcolor{blue!8}#1}
\small
\begin{tabular}{l l *{9}{r}}
\toprule
 & \textbf{Method} & \textbf{Type 1} & \textbf{Type 2} & \textbf{Type 3}& \textbf{Type 4} & \textbf{Type 5} & \textbf{Type 6} & \textbf{Type 7} & \textbf{Type 8} & \textbf{Type 9} \\
\midrule
\multirow{4}{*}{SR} & B & 81.5 & 90.0 & 74.0 & 77.5 & 67.5 & 70.0 & 71.5 & 85.0 & 63.0 \\
 & B+OR & 84.0 & 89.5 & 83.5 & 80.0 & 67.5 & \bestcell{79.5} & 78.5 & 81.5 & 60.5 \\
 & ARS & \secondbestcell{91.0} & \bestcell{95.0} & \bestcell{87.5} & \secondbestcell{81.5} & \secondbestcell{70.5} & \secondbestcell{77.0} & \secondbestcell{79.0} & \bestcell{92.5} & \bestcell{77.0} \\
 & ARS+OR & \bestcell{92.0} & \secondbestcell{92.5} & \secondbestcell{87.0} & \bestcell{85.0} & \bestcell{71.5} & \secondbestcell{77.0} & \bestcell{82.5} & \secondbestcell{88.0} & \secondbestcell{76.5} \\
\cmidrule{1-11}
\multirow{4}{*}{DT} & B & \bestcell{1.04$\pm$0.02} & \bestcell{2.06$\pm$0.05} & \bestcell{2.18$\pm$0.05} & \bestcell{4.15$\pm$0.10} & \bestcell{3.11$\pm$0.12} & \bestcell{3.16$\pm$0.17} & \bestcell{2.13$\pm$0.06} & \bestcell{2.07$\pm$0.05} & \bestcell{4.21$\pm$0.15} \\
 & B+OR & \secondbestcell{2.29$\pm$1.45} & \secondbestcell{4.25$\pm$2.51} & \secondbestcell{4.29$\pm$2.37} & \secondbestcell{11.29$\pm$8.17} & \secondbestcell{8.96$\pm$7.35} & \secondbestcell{6.06$\pm$2.90} & \secondbestcell{4.07$\pm$2.03} & \secondbestcell{4.44$\pm$2.70} & \secondbestcell{11.74$\pm$8.33} \\
 & ARS & 7.35$\pm$0.32 & 10.43$\pm$1.50 & 17.20$\pm$1.84 & 18.91$\pm$5.96 & 14.70$\pm$3.99 & 14.62$\pm$4.01 & 11.00$\pm$2.26 & 11.19$\pm$2.78 & 21.75$\pm$6.84 \\
 & ARS+OR & 7.88$\pm$1.69 & 12.35$\pm$4.85 & 20.35$\pm$9.96 & 28.27$\pm$34.30 & 20.36$\pm$17.31 & 20.72$\pm$22.08 & 13.21$\pm$6.47 & 14.08$\pm$8.19 & 37.01$\pm$39.43 \\
\bottomrule
\end{tabular}
\end{table*}

\paragraph{Results and Analysis}
The results in Table~\ref{tab:exp_antmaze} show that the main trends observed in Maze2D remain visible in AntMaze, despite the substantially more complex locomotion dynamics. Overall, DAG-STL still achieves strong execution success rates in this harder setting: ARS and ARS+OR consistently obtain the best results, with success rates above \(90\%\) on several templates and around \(70\%\!\sim\!80\%\) even on more challenging ones. This indicates that the framework remains effective beyond simple point-mass navigation and can still produce a substantial number of successful long-horizon executions under much more difficult dynamics.

Within this setting, ARS and ARS+OR provide the strongest overall performance. In particular, ARS performs best or second-best on nearly all task types and substantially outperforms the basic planner on the more structured templates. For example, on Type~9, the execution success rate improves from \(63.0\%\) for B to \(77.0\%\) for ARS. This suggests that, even under more difficult locomotion dynamics, searching over multiple candidate timed allocations remains important for obtaining executable plans.

The effect of online replanning is still visible, but it is notably less uniform than in Maze2D. Comparing B+OR with B shows that runtime recovery improves performance on several task types, especially Types~3, 6, and 7, but can also be neutral or slightly harmful on others. A similar pattern appears when comparing ARS+OR with ARS: online replanning improves performance on several templates, such as Types~1, 4, and 7, but does not consistently dominate ARS. We attribute this behavior to two main factors. For efficiency, the replanning module in ARS+OR invokes ARS-FS rather than the full ARS, so once replanning is triggered, the recovered suffix may be based on the first feasible solution rather than the higher-quality refined solution that ARS could obtain offline. In addition, AntMaze exhibits much stronger rollout jitter than Maze2D, so execution deviations are more likely to push the agent into states that are weakly represented in the offline dataset, such as states close to maze walls, at narrow passage boundaries, or in corner regions that are rarely occupied by successful trajectories in the data. Replanning from such out-of-distribution states can make it harder to recover a high-quality continuation. Taken together, these effects explain why OR remains useful in AntMaze but does not provide the same consistently positive margin as in Maze2D.

The timing results reveal a computation--performance trade-off similar to that in Maze2D. B is by far the fastest variant, while ARS and ARS+OR incur substantially higher decision time due to their deeper search over candidate allocations. Adding online replanning further increases total decision time because part of the computation is shifted into execution. Nevertheless, the success-rate improvements of ARS and ARS+OR indicate that this additional computation remains worthwhile when reliability on complex long-horizon tasks is the primary objective.

Overall, the AntMaze results support the same conclusion as the Maze2D study: the advantage of DAG-STL is not tied to simple navigation dynamics. Its core components, especially multi-candidate allocation and refinement search, remain effective under substantially more complex locomotion dynamics, while the behavior of online replanning becomes more sensitive to execution noise and distribution shift.

\subsubsection{Cross-Domain Evaluation in the Cube Environment}\label{sec:exp_cube}

We next turn to the ``Cube'' scenario from OGBench~\cite{ogbench_park2025} to evaluate whether DAG-STL generalizes beyond navigation and can support temporally structured robot manipulation. Here we focus on cross-domain generalization rather than fine-grained comparison among planning variants, since the environment's strong low-level execution interface reduces the sensitivity to high-level planning differences. Accordingly, throughout this section we use the \textbf{Basic Planning} (\textbf{B}) configuration as the default planner and evaluate whether it is already sufficient to support zero-shot manipulation generalization. Following the setup in Section~\ref{sec:case_manipulation}, STL is imposed only on the end-effector motion, while grasping and release are handled by a simple control heuristic. All learned modules are trained exclusively on the \textit{cube-single-play} dataset from OGBench, which contains only single-cube manipulation trajectories with randomly sampled start and goal cube positions. No multi-cube demonstrations or task-specific retraining are used.

\paragraph{Benchmark Manipulation Tasks}
We first report results on the original OGBench benchmark tasks. As in Section~\ref{sec:case_manipulation}, each benchmark manipulation task is converted into an STL specification over the end-effector motion by composing cube-specific pick-and-place subformulas; the detailed construction is omitted here for brevity. To assess zero-shot generalization, we evaluate the framework on four levels of difficulty: \textit{cube-single}, \textit{cube-double}, \textit{cube-triple}, and \textit{cube-quadruple}, involving one to four cubes, respectively. Each scenario contains five benchmark tasks, giving 20 tasks in total, and each task is repeated with ten random seeds. Since these benchmark tasks are defined primarily by object rearrangement outcomes, success is evaluated directly from the final object configuration rather than STL robustness: a trial is counted as successful only if all cubes reach their designated goal locations at the end of execution.

\begin{table*}[tbp]
\centering
\caption{Success rate (\%) on the original OGBench Cube benchmark tasks. Each scenario (\textit{cube-single}, \textit{cube-double}, \textit{cube-triple}, \textit{cube-quadruple}) contains five benchmark tasks, and each task is repeated with ten random seeds.}
\label{tab:cube_manipulation_results}
\small
\begin{tabular}{lccccc|c}
\toprule
\textbf{Scenario} & \textbf{Task 1} & \textbf{Task 2} & \textbf{Task 3} & \textbf{Task 4} & \textbf{Task 5} & \textbf{Avg. SR (\%)} \\
\midrule
\textit{Cube-Single}     & 100.0 & 90.0 & 90.0 & 100.0 & 100.0 & \textbf{96.0} \\
\textit{Cube-Double}     & 100.0 & 100.0 & 100.0 & 100.0 & 90.0  & \textbf{98.0} \\
\textit{Cube-Triple}     & 100.0 & 100.0 & 50.0 & 100.0 & 90.0  & \textbf{88.0} \\
\textit{Cube-Quadruple}  & 100.0 & 100.0 & 0.0  & 100.0 & 30.0  & \textbf{66.0} \\
\bottomrule
\end{tabular}
\end{table*}

The results in Table~\ref{tab:cube_manipulation_results} show strong zero-shot generalization from single-cube training data to substantially more complex benchmark manipulation tasks. The average success rate exceeds \(95\%\) in the \textit{cube-single} and \textit{cube-double} settings, and remains \(88\%\) and \(66\%\) in the \textit{cube-triple} and \textit{cube-quadruple} settings, respectively. This indicates that logic-guided end-effector planning alone is already sufficient to induce nontrivial multi-stage object manipulation behaviors, even when all learned components are trained only on task-agnostic single-cube trajectories. The main failures occur in tasks with stronger physical interactions, especially stacking-related ones, where unintended contacts between the end effector and nearby cubes can propagate to object displacement or collapse. This is consistent with the scope of the current formulation: the planner explicitly reasons over end-effector motion, but does not model the full contact-rich object dynamics of multi-cube manipulation.

\begin{table*}[t]
\centering
\caption{Extended STL-style manipulation task families in the Cube domain. Each family contains two fixed cube-count templates, shown here in canonical STL form.}
\label{tab:cube_stl_families}
\small
\begin{tabular}{
>{\raggedright\arraybackslash}m{2.5cm}
>{\raggedright\arraybackslash}m{1.2cm}
>{\raggedright\arraybackslash}m{5.2cm}
>{\raggedright\arraybackslash}m{5.5cm}
}
\toprule
\textbf{Task family} & \textbf{\# Cubes} & \textbf{STL template} & \textbf{Interpretation} \\
\midrule
Post-Placement Protection
& 2, 3
& \(
\F_{\mr{I}_1}\!\bigl(pick_1\wedge\F_{\mr{I}_1}\bigl(place_1\wedge\G_{\mr{I}_2}protect_1\wedge
\F_{\mr{I}_3}(pick_2\wedge\F_{\mr{I}_3}(place_2\wedge\G_{\mr{I}_4}protect_2\wedge\cdots))\bigr)\bigr)
\)
& After each cube is placed, all already completed goal regions remain protected during subsequent manipulations. \\
\midrule
Forbidden Rearrangement
& 1, 4
& \(
\G_{\mr{I}}\ avoid_{\mathcal O}\wedge
\bigwedge_{i=1}^{N}\F_{\mr{I}_{i}}(pick_i\wedge\F_{\mr{I}_{i}}place_i)
\)
& Rearrangement must be completed while always avoiding one or more forbidden workspace regions \(\mathcal O\). \\
\midrule
Structured Stacking
& 2, 3
& \(
\F_{\mr{I}_1}\!\bigl(pick_1\wedge\F_{\mr{I}_1}\bigl(place_1\wedge\G_{\mr{I}_2}protect_1\wedge
\F_{\mr{I}_3}(pick_2\wedge\F_{\mr{I}_3}(place_2\wedge\G_{\mr{I}_4}protect_2\wedge\cdots))\bigr)\bigr)
\)
& Cubes must be stacked in a prescribed bottom-to-top order, and completed lower layers remain protected during later placements. \\
\midrule
Unlock-Then-Place
& 2, 4
& \(
\bigl(avoid_{\mathcal O_{\mathrm{lock}}}\bigr)\U_{\mr{I}_1}\bigl(pick_u\wedge\F_{\mr{I}_1}place_u\bigr)
\wedge \F_{\mr{I}_2}(pick_t\wedge\F_{\mr{I}_2}place_t)
\)
& A locked target region $\mathcal{O}_{\mathrm{lock}}$ cannot be entered until a designated unlock cube has first been moved to its goal. \\
\bottomrule
\end{tabular}
\end{table*}

\paragraph{STL-Style Manipulation Tasks Beyond the Original Benchmark}
While the original OGBench tasks already demonstrate meaningful cross-domain transfer, they do not fully reflect the range of temporal-logic structure that DAG-STL is designed to handle. We therefore further extend the Cube domain with four families of STL-style manipulation tasks, each instantiated with two cube-count variants.

For the \(i\)-th cube, let \(\bs{s}_i\) and \(\bs{g}_i\) denote its initial and goal positions, respectively. As in Section~\ref{sec:case_manipulation}, we define
\[
pick_i:\; r_i-\|\x-\bs{s}_i\|_2 \ge 0,
\;
place_i:\; r_i-\|\x-\bs{g}_i\|_2 \ge 0.
\]
In addition, \(protect_i\) denotes the predicate that the end effector remains outside a protection region centered at \(\bs{g}_i\), and \(avoid_{\mathcal O}\) denotes the predicate that the end effector remains outside a forbidden region \(\mathcal O\). Table~\ref{tab:cube_stl_families} summarizes the canonical STL templates used for the four extended manipulation task families. For the larger cube-count variants, the same compositional pattern is extended recursively. Note that although Post-Placement Protection and Structured Stacking share a similar recursive logical form, they differ in geometric instantiation: the former corresponds to general rearrangement with sequential protection, whereas the latter uses stacked target configurations and enforces bottom-to-top placement semantics.

For each template, task instances are generated by structured randomization. Cube initial positions, goal positions, temporal windows, and template-specific auxiliary regions such as forbidden, protection, and locked regions are randomly sampled under validity constraints to avoid contradictory or physically degenerate instances. For each of the eight templates, we generate 200 task instances and evaluate three success-rate metrics. The \emph{logic success rate} measures the fraction of executions whose end-effector trajectories achieve nonnegative STL robustness. The \emph{object success rate} measures the fraction of executions in which all relevant cubes reach their designated goal configurations at the end of execution. Finally, the \emph{task success rate} requires both conditions to hold simultaneously, and is therefore the main metric reported for overall task completion.

\begin{table*}[htbp]
\centering
\caption{Template-level success rates (\%) for randomly instantiated STL cube manipulation tasks.}
\label{tab:stl-cube-template-summary}
\small
\begin{tabular}{lcccc}
\toprule
\textbf{Task Family} & \textbf{\# Cubes} & \textbf{Logic Success} & \textbf{Object Success} & \textbf{Task Success} \\
\midrule
\multirow{2}{*}{Post-Placement Protection} & 2 & 100.0 & 85.5 & 85.5 \\
 & 3 & 98.5 & 76.5 & 76.5 \\
\midrule
\multirow{2}{*}{Forbidden Rearrangement} & 1 & 100.0 & 98.0 & 98.0 \\
 & 4 & 97.5 & 65.0 & 65.0 \\
\midrule
\multirow{2}{*}{Structured Stacking} & 2 & 100.0 & 92.5 & 92.5 \\
 & 3 & 100.0 & 83.5 & 83.5 \\
\midrule
\multirow{2}{*}{Unlock-Then-Place} & 2 & 100.0 & 93.0 & 93.0 \\
 & 4 & 95.0 & 80.5 & 80.5 \\
\bottomrule
\end{tabular}
\end{table*}

\paragraph{Results and Analysis}
The results in Table~\ref{tab:stl-cube-template-summary} show that DAG-STL remains effective on these extended STL-style manipulation tasks beyond the original OGBench benchmark. Across all eight templates, the logic success rate is consistently high, ranging from \(95.0\%\) to \(100.0\%\). It indicates that the robot is usually able to carry out the required temporally structured end-effector behavior in execution, even under the additional logical constraints introduced by the new templates.

At the same time, the object success rates are systematically lower than the corresponding STL success rates. This gap is informative: it shows that a large fraction of failures do not come from violating the STL task at the end-effector level, but from manipulation-specific execution errors after the logical motion has essentially been carried out correctly. In particular, object failure is mainly caused by unintended contacts during execution, such as accidentally pushing nearby cubes, disturbing already placed objects, or destabilizing partial stacks. This effect becomes more pronounced in the harder templates, especially those involving more cubes, where physical interactions are more frequent and errors can propagate more easily.

A clear complexity trend is also visible. For every task family, the larger-cube variant is more difficult than the smaller one, with the drop appearing mainly in object success rather than STL success. For example, in Forbidden Rearrangement, the STL success decreases only slightly from \(100.0\%\) to \(97.5\%\) when moving from 1 cube to 4 cubes, whereas the object success drops much more substantially from \(98.0\%\) to \(65.0\%\). A similar pattern appears in Post-Placement Protection and Unlock-Then-Place. This suggests that the main challenge in these extended tasks is not generating an end-effector trajectory that satisfies the logical constraints, but ensuring that the resulting manipulation remains robust to contact-rich interactions as the number of objects increases.

Overall, these results provide two complementary conclusions. First, DAG-STL can successfully generalize to richer STL-style manipulation templates beyond the original benchmark, and the executed end-effector trajectories satisfy the required temporal logic in the vast majority of cases. Second, the remaining performance gap is largely attributable to object-level manipulation failures caused by unmodeled contact effects, rather than to a failure of the logic-guided planning mechanism itself.

\subsection{Comparative Experiment with an Optimization-Based Baseline}\label{sec:exp_custom}

We finally consider the custom-built environment from Section~\ref{sec:env} to obtain a controlled model-aware reference for the otherwise fully offline setting. This section contains two complementary controlled studies. The first evaluates how much of a model-solvable reference task set DAG-STL can recover using offline data alone, and how this recovery compares with direct optimization. The second constructs a tighter combinatorial stress test to isolate allocation behavior when success depends on identifying a near-optimal visiting order under a shared deadline.

\subsubsection{Comparison on a Model-Solvable Reference Set}

\paragraph{Baseline}  
We use the optimization-based STL planner provided in \texttt{stlpy}~\cite{kurtz2022mixed}, following the smooth robustness formulation of~\cite{gilpin2020smooth}. Unlike DAG-STL, this baseline plans directly in a full model-aware setting: it optimizes a state-action trajectory with \emph{complete access to the analytical double-integrator dynamics and obstacle geometry}. Therefore, it serves as a reference for the best possible performance under the given dynamics and task templates. By contrast, DAG-STL must plan from offline trajectories alone without direct access to the system model or environment layout. The comparison provides a controlled reference for how much of the model-solvable reference task set DAG-STL can recover using offline data alone, and how this recovery compares with direct optimization.

\paragraph{Evaluation Protocol and Metrics}  
The environment, offline dataset, and training protocol follow the common setup in Sections~\ref{sec:env} and~\ref{sec:implementation}. The test tasks are sampled from the same STL template family introduced in Section~\ref{sec:templates}. For this controlled study, we generate 200 test tasks per template and retain only those for which the optimization-based baseline returns a feasible solution. Since the optimization-based solver becomes prohibitively slow on the two deepest nested templates (Types~4 and~5), we report comparisons only on Types~1, 2, 3, 6, 7, 8, and 9.

For DAG-STL, planning, trajectory generation, and robustness evaluation follow the common protocol of Sections~\ref{sec:implementation} and~\ref{sec:robustness}. For the optimization-based baseline, by contrast, a successful solve itself certifies feasibility under the known dynamics and constraints, since the baseline directly solves the full model-based control problem. This difference is inherent to the two information regimes: the baseline plans with complete access to the analytical system model and obstacle geometry, whereas DAG-STL must plan from offline trajectories alone.

On this retained reference set, the optimization-based baseline can be regarded as having \(100\%\) feasibility by construction. Accordingly, we report SR0 and SR only for the DAG-STL variants, while using planning time to compare all methods.

We report the following metrics:
\begin{itemize}[leftmargin=*]
    \item \textbf{Allocation Success Rate (SR0):} for DAG-STL variants, the proportion of retained tasks for which the Progress Allocation module finds a solution.
    \item \textbf{Execution Success Rate (SR):} for DAG-STL variants, the proportion of retained tasks whose final executed trajectories achieve nonnegative STL robustness.
    \item \textbf{Planning Time (PT):} the average wall-clock planning time per case; this metric is reported for all DAG-STL variants and the optimization-based baseline.
\end{itemize}

\begin{table*}[tbp]
\centering
\caption{Results in the custom-built environment. SR0 (\%) and SR (\%) are reported for DAG-STL variants, and PT (s) denotes planning time. For each template, results are aggregated over 200 test tasks.}
\begin{tabular}{c r r r | r r r | r r r r}
\toprule
\multirow{2}{*}{\centering \textbf{Type}} & \multicolumn{3}{c}{\textbf{SR0(\%)$\uparrow$}} & \multicolumn{3}{c}{\textbf{SR(\%)$\uparrow$}} & \multicolumn{4}{c}{\textbf{Planning Time (s)$\downarrow$}} \\
\cmidrule(r){2-11}
 & \textbf{B} & \textbf{ARS-FS} & \textbf{ARS} & \textbf{B} & \textbf{ARS-FS} & \textbf{ARS} & \textbf{B} & \textbf{ARS-FS} & \textbf{ARS} & \textbf{Baseline} \\
\midrule
1 & 96.0 & 100 & 100 & 94.5 & 99.0 & 99.5 & 0.42$\pm$0.01 & 0.76$\pm$0.03 & 1.55$\pm$0.03 & 0.45$\pm$0.20 \\
2 & 99.0 & 100 & 100 & 99.0 & 100 & 100 & 0.82$\pm$0.02 & 1.48$\pm$0.05 & 2.53$\pm$0.45 & 2.08$\pm$0.68 \\
3 & 97.5 & 100 & 100 & 95.5 & 98.5 & 98.5 & 0.88$\pm$0.02 & 2.80$\pm$0.06 & 3.73$\pm$0.42 & 10.60$\pm$6.87 \\
\midrule
6 & 96.5 & 100 & 100 & 95.0 & 96.5 & 97.0 & 1.20$\pm$0.03 & 2.16$\pm$0.06 & 3.23$\pm$0.55 & 4.12$\pm$1.23 \\
7 & 97.5 & 100 & 100 & 96.5 & 100 & 100 & 0.81$\pm$0.02 & 1.45$\pm$0.04 & 2.44$\pm$0.42 & 3.30$\pm$1.30 \\
8 & 96.0 & 100 & 100 & 96.0 & 100 & 100 & 0.82$\pm$0.02 & 1.46$\pm$0.04 & 2.51$\pm$0.47 & 29.12$\pm$13.84 \\
9 & 99.0 & 100 & 100 & 99.0 & 100 & 100 & 1.70$\pm$0.05 & 3.82$\pm$0.98 & 6.01$\pm$1.32 & 214.66$\pm$89.94 \\
\midrule
Avg. & 97.4 & 100 & 100 & 96.5 & 99.1 & 99.3 & 0.95$\pm$0.03 & 1.99$\pm$0.18 & 3.14$\pm$0.52 & 37.76$\pm$16.29 \\
\bottomrule
\end{tabular}
\label{tab:custom_env_comparison}
\end{table*}

\paragraph{Results and Analysis}
Table~\ref{tab:custom_env_comparison} shows that DAG-STL remains highly effective in this controlled setting. Although it only has access to offline trajectories and relies on finite-budget heuristic search, it still recovers most tasks on the model-solvable reference set. Averaged over the seven templates, Basic Planning (B) achieves \(97.4\%\) SR0 and \(96.5\%\) SR, while ARS First Solution (ARS-FS) already reaches \(100\%\) SR0 and \(99.1\%\) SR. Full ARS further attains \(100\%\) SR0 and \(99.3\%\) SR. In particular, the fact that ARS-FS and ARS both reach \(100\%\) SR0 shows that, in this relatively simple controlled environment, increasing the search budget and branching width is already sufficient to recover essentially the entire model-solvable set.

The same refinement gain observed in the earlier experiments still holds here: compared with ARS-FS, full ARS still yields a small improvement in realized success. However, this gain is much less pronounced in the present setting, where the dynamics and execution interface are simple and SR already remains very close to SR0.

The timing results reveal a similarly clear efficiency picture. On average, B, ARS-FS, and ARS require \(0.95\), \(1.99\), and \(3.14\) seconds per task, respectively, whereas the optimization-based baseline requires \(37.76\) seconds. Even the larger-budget ARS variant therefore remains substantially faster than direct optimization, and the gap widens on the harder templates: on Type~8, ARS requires \(2.51\) seconds compared with \(29.12\) seconds for the optimization-based baseline, and on Type~9 it requires \(6.01\) seconds compared with \(214.66\) seconds. Taken together, these results show that DAG-STL can recover almost all model-solvable tasks in this controlled setting while retaining a large computational advantage over full model-based optimization.

\subsubsection{Combinatorial Stress Test}
\paragraph{Task Family and Goal}
The comparison above shows that DAG-STL recovers most tasks on the retained reference set in this controlled environment. A complementary question is how well it can still \emph{find any valid solution} in deliberately near-extremal regimes, and how that solution-finding ability changes as the search budget of ARS is varied. To probe both aspects more directly, we additionally construct a hard-feasible combinatorial stress test in the same custom environment. We choose a shared-deadline multi-goal task family because it is especially challenging for DAG-STL: the planner must identify a suitable visiting order from many possibilities, while the common deadline weakens much of the temporal heuristic guidance that would otherwise help order the subgoals. At the same time, its difficulty remains easy to control by tightening or relaxing the shared deadline. The task family takes the form
\[
\varphi_n=\bigwedge_{i=1}^{n}\F_{[0,D]} p_i,
\]
where all \(n\) circular goal regions share the same deadline \(D\). The formula therefore specifies \emph{what} regions must be visited, but not \emph{in what order}. The resulting tasks are feasible by construction but deliberately hard, since only a limited subset of visiting orders remains valid under a tight deadline.

\begin{figure}[t]
    \centering
    \begin{subfigure}[t]{0.48\columnwidth}
        \centering
        \includegraphics[width=\columnwidth]{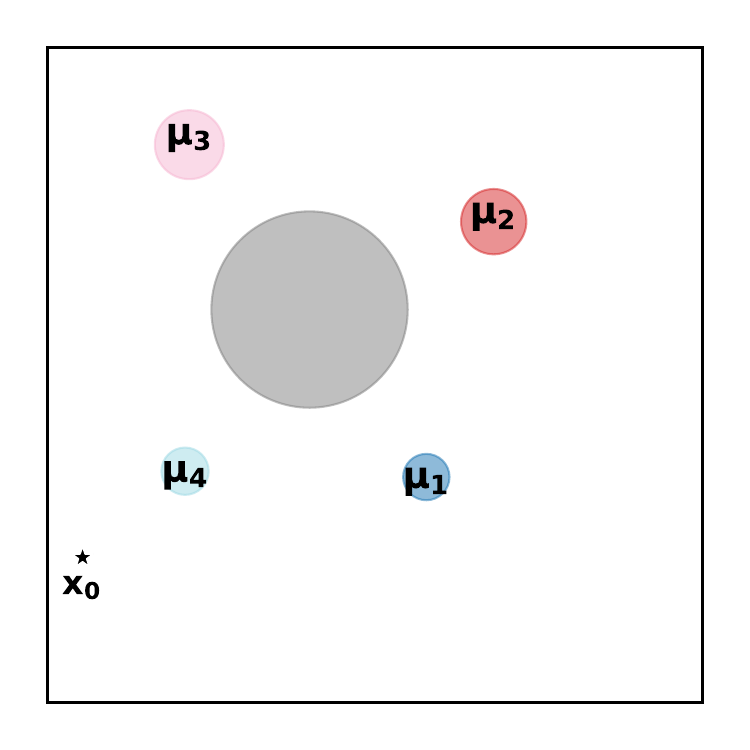}
    \end{subfigure}\hfill
    \begin{subfigure}[t]{0.48\columnwidth}
        \centering
        \includegraphics[width=\columnwidth]{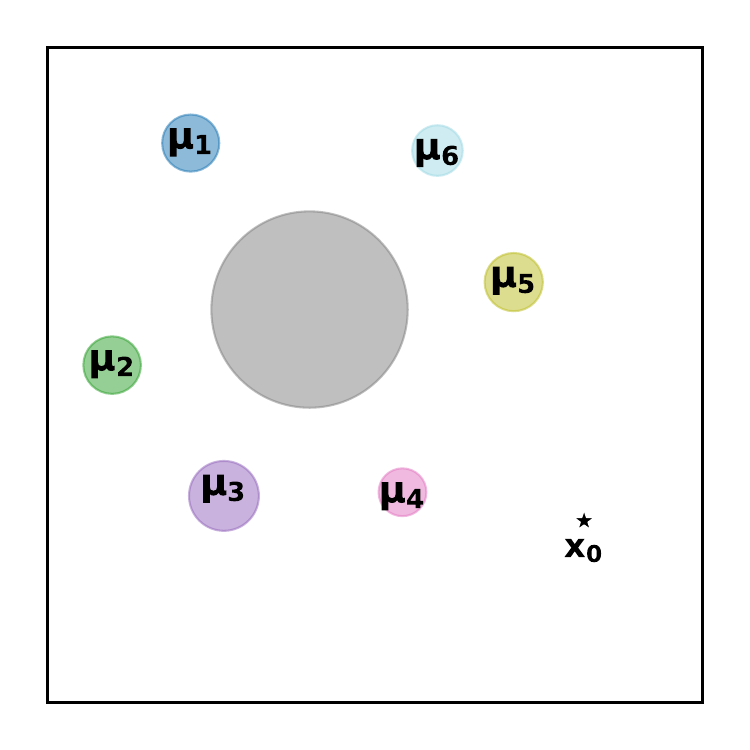}
    \end{subfigure}
    \caption{Representative sampled layouts for the combinatorial stress test in the custom-built environment, shown here for \(n=4\) and \(n=6\). Under this obstacle-centered layout, a natural feasible strategy is to start from the initial state and visit the regions in either clockwise or counterclockwise order, while the STL task itself still leaves the order unspecified.}
    \label{fig:custom_stress_layouts}
\end{figure}

\begin{figure*}[t]
    \centering
    \begin{subfigure}[t]{0.98\textwidth}
        \centering
        \includegraphics[width=\textwidth]{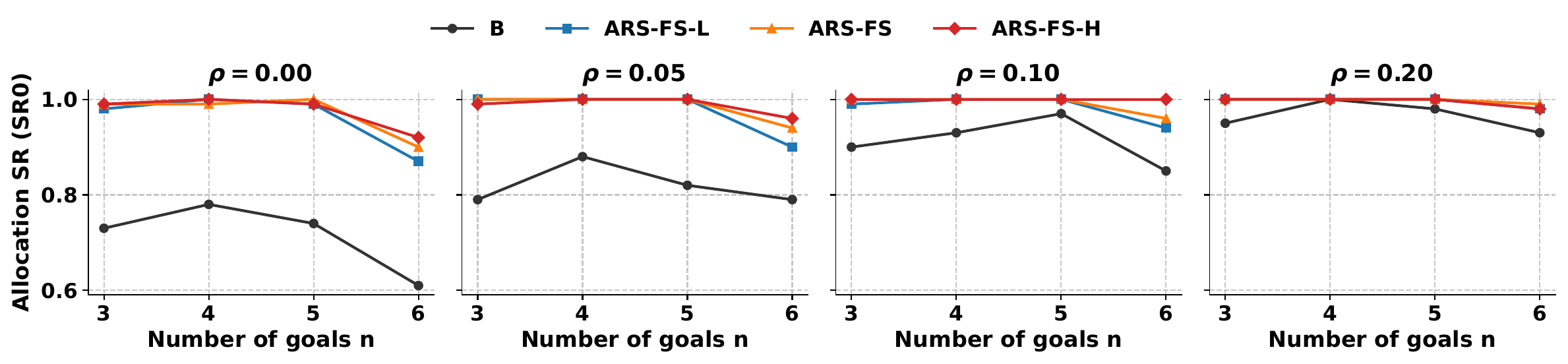}
        % \caption{Allocation success rate SR0 versus the number of unordered goals \(n\).}
    \end{subfigure}

    \vspace{0.5em}

    \begin{subfigure}[t]{0.98\textwidth}
        \centering
        \includegraphics[width=\textwidth]{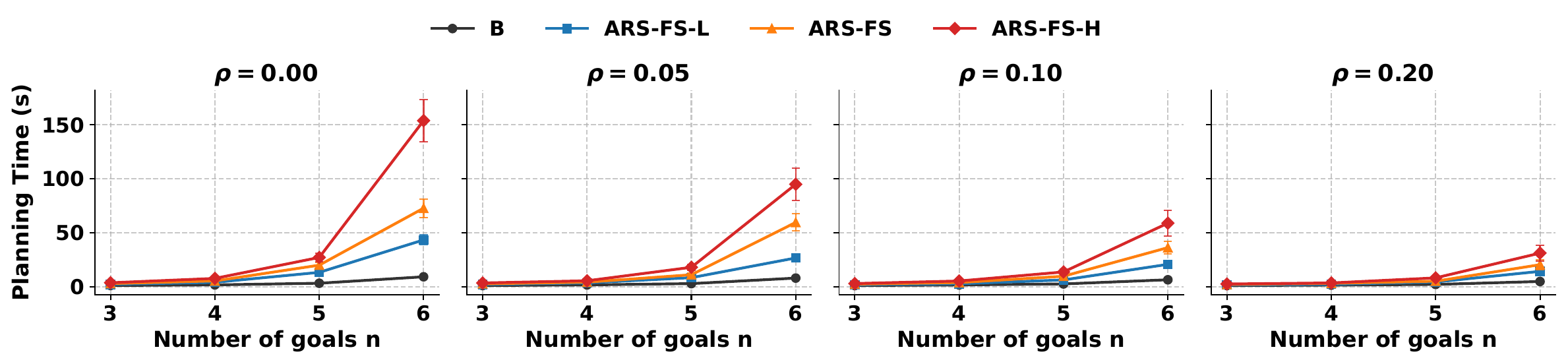}
        % \caption{Mean total internal planning time versus the number of unordered goals \(n\).}
    \end{subfigure}
    \caption{Results on the combinatorial stress test in the custom-built environment. Each column fixes a slack ratio \(\rho\) and varies the number of unordered goals \(n\). The upper panel reports SR0, and the lower panel reports planning time.}
    \label{fig:custom_stress_results}
\end{figure*}

\paragraph{Controlled Task Construction}
The controlled setup is designed so that each sampled task is both feasible and deliberately tight. We first randomly generate obstacle-centered layouts of goal regions, as illustrated in Figure~\ref{fig:custom_stress_layouts}. Under such layouts, completing the task within a short shared deadline naturally requires visiting the regions in a coherent clockwise or counterclockwise order starting from the sampled initial state. We therefore use this natural order only as a hidden witness for task construction, without providing it to DAG-STL. Along this witness order, we solve each local segment with the model-based planner to estimate the shortest feasible system horizon, and then calibrate that horizon against the empirical segment-length distribution seen during DAG-STL training by taking the larger of the minimal feasible horizon and a distance-matched empirical target horizon from the training segments. The calibrated segment horizons are summed to obtain a reference duration \(T^\star\), from which we define the overall deadline as \(D=\lceil(1+\rho)T^\star\rceil\). In this way, \(\rho\) directly controls how tight the resulting task is. We then apply the optimization-based sanity check to the full task and keep only cases that remain feasible under the true model. In the reported experiment, we use \(n\in\{3,4,5,6\}\) and \(\rho\in\{0,0.05,0.10,0.20\}\). For each \(n\), we repeatedly sample and filter tasks in this manner until 100 accepted base cases are retained. Each accepted base case is then instantiated with the four \(\rho\) values, yielding 100 tasks for every \((n,\rho)\) setting and 1600 tasks in total. For each accepted base case, the layout and hidden witness are fixed, and only \(D\) in the STL task is varied through \(\rho\) to instantiate the four difficulty levels. This procedure yields a large collection of tasks that are guaranteed to have solutions while remaining deliberately tight, so failures primarily reflect the difficulty of finding a good allocation under limited search budget rather than arbitrary task infeasibility or severe local distribution mismatch.

\paragraph{Compared Variants and Evaluation}
On the accepted task set, we compare \textbf{Basic Planning (B)} with three budget levels of \textbf{ARS First Solution (ARS-FS)}, denoted ARS-FS-L, ARS-FS, and ARS-FS-H. The three ARS-FS variants use \((K,N_{\mathrm{iter}},t_{\max})=(3,400,120\text{s})\), \((5,600,180\text{s})\), and \((8,800,240\text{s})\), respectively, with \(N_{\mathrm{seq}}=1\) in all cases, while Basic Planning (B) uses its default search procedure under the same wall-clock cap \(t_{\max}=180\text{s}\). All variants are evaluated on the same accepted tasks. As in the controlled comparison above, we report SR0 and planning time, with SR0 serving as the primary metric in this stress test because our focus here is solution-finding ability under increasing combinatorial difficulty. The final rollout success rate remains very close to SR0 overall in this environment, so it is not shown separately.

\paragraph{Results and Analysis}
Figure~\ref{fig:custom_stress_results} shows a clear hardness scaling trend: SR0 decreases as the number of goals \(n\) increases and the slack ratio \(\rho\) decreases. Even under this trend, the ARS-FS variants retain strong solution-finding ability throughout the grid. In the hardest setting \((\rho=0,n=6)\), Basic Planning (B) achieves only \(61.0\%\) SR0, whereas ARS-FS-L, ARS-FS, and ARS-FS-H reach \(87.0\%\), \(90.0\%\), and \(92.0\%\), respectively. The same ordering appears over the full grid, where the mean SR0 values are \(85.3\%\) for B, \(97.8\%\) for ARS-FS-L, \(98.6\%\) for ARS-FS, and \(98.9\%\) for ARS-FS-H. These results show that expanding the search branches and increasing the first-solution budget consistently improve the probability of recovering a complete allocation, with the highest-budget variant still maintaining a strong solve rate in the most difficult regime.

The lower panel shows the corresponding computational cost. Planning time grows with \(n\) and increases more sharply as \(\rho\) becomes smaller, indicating that tighter shared deadlines require substantially more search effort. The three ARS-FS variants also follow the expected budget ordering, with ARS-FS-L consistently fastest among them and ARS-FS-H slowest. In the hardest setting \((\rho=0,n=6)\), the mean planning times are \(9.2\) s for B, \(43.3\) s for ARS-FS-L, \(72.7\) s for ARS-FS, and \(153.8\) s for ARS-FS-H. Averaged over the full grid, B requires \(3.15\) s per task, compared with \(9.84\), \(16.35\), and \(27.53\) s for ARS-FS-L, ARS-FS, and ARS-FS-H.

Overall, the results show a clean budget-performance trade-off: DAG-STL already maintains good solution-finding ability at relatively low search budgets, and this ability can be further strengthened by investing a moderate amount of additional search computation.

\section{Limitations}\label{sec:limitations}
\noindent Despite the strong empirical performance across both navigation and manipulation domains, several aspects remain outside the current scope of DAG-STL.

\subsection{Coverage and Formula Scope}
DAG-STL gains tractability by introducing a structured intermediate representation together with finite-budget search. The core decomposition is developed for the STL fragment described in Section~\ref{sec:STLdecomposition}, and disjunctive formulas are handled in practice by solving disjunction-free branches separately after preprocessing, a tractability strategy that is common in decomposition- and rewrite-based STL synthesis~\cite{leahy2023rewrite,yu2024continuous,kapoor2024safe}. In addition, both the basic allocator and ARS explore only finitely many sampled waypoint and timing hypotheses under a finite computation budget. These choices are precisely what make the framework scalable in the offline unknown-dynamics setting, but they do not provide a completeness guarantee over the full STL language or over all dynamically feasible trajectories compatible with the underlying system.

At the same time, the experiments suggest that the resulting conservativeness is often moderate rather than prohibitive in the target regime. In the controlled comparison of Section~\ref{sec:exp_custom}, DAG-STL recovers most tasks on the model-solvable reference set, and in the supplementary combinatorial stress test it retains strong solution-finding ability even under deliberately tight deadlines, with higher-budget ARS variants further improving success. Future work may reduce the remaining gap through native handling of disjunctive choices, richer proposal mechanisms for timed waypoints, and more systematic anytime search over allocation hypotheses.

\subsection{Computational Complexity and Recurrent Obligations}
As discussed in Section~\ref{sec:complexity}, the theoretical complexity of the proposed planning pipeline can grow rapidly with the number of decomposed progress conditions, time variables, and sampled allocation hypotheses. This is an expected consequence of converting a temporally structured STL formula into a coupled decomposition--allocation problem and then exploring the resulting timed waypoint space under finite budgets. In practice, the anytime and heuristic components make this complexity manageable for the milestone-like long-horizon tasks targeted in this paper, where the formula structure is dominated by a finite set of temporally ordered reachability and invariance requirements.

However, the same representation is less naturally suited to specifications dominated by high-frequency recurrent obligations, especially dense outer-\(\G\F\)-type patterns. The recurrent reachability case studies in Section~\ref{sec:casestudy} show that DAG-STL can handle such obligations to a finite extent, but they also illustrate how recurrent structure can induce many progress conditions and time variables. For more complex or higher-frequency recurrent specifications, the resulting allocation problem may become computationally demanding, and more specialized symbolic compression, recurrence-aware decomposition, or receding-horizon treatment would likely be needed.

\subsection{Task-Space Abstraction and Execution Interface}
Like many hierarchical planners, DAG-STL reasons primarily over state trajectories in a task-relevant planning space and delegates low-level action realization to a PD controller or a learned inverse dynamics model, as described in Section~\ref{sec:action_control}; similar task-space abstractions are also common in high-dimensional generative planning and control~\cite{luo2025generative}. This abstraction works well in Maze2D, AntMaze, and Cube, but it also means that fine-grained actuator limits, contact mode changes, and full-configuration feasibility are captured only indirectly through offline data support and execution-time correction. Similarly, although the experiments span both navigation and manipulation, the predicates considered here are still predominantly geometric region predicates defined in relatively low-dimensional task spaces.

A natural next step is therefore to tighten the coupling between logic-level planning and low-level realization, for example through stronger action-recovery models, receding-horizon execution, or richer predicate classes involving contact, force, or hybrid symbolic state.

\subsection{Plan Quality and Learned Timing Models}
The present framework is designed primarily to find executable STL-satisfying plans rather than to optimize a global secondary objective. ARS already improves nominal plan quality by refining multiple allocation hypotheses, but the method does not provide formal optimality with respect to makespan, temporal slack, energy, or robustness margin. In addition, the time predictor remains a learned approximation derived from offline trajectories. When the dataset under-represents rare detours, long-horizon transitions, or evaluation-time distribution shift, the predicted transition lengths may still become biased, which in turn can affect allocation quality.

Our experiments indicate that this dependence is manageable in the target regime. The analysis in Section~\ref{exp:time_predictor} shows that moderate timing error is often absorbed by segment-wise generation and closed-loop execution, and the gains of ARS indicate that search can compensate for part of the residual predictor bias. Still, there is clear room for improvement through uncertainty-calibrated time prediction, slack-aware or multi-objective allocation scores, and data augmentation that improves coverage of rare but feasible transitions~\cite{li2024diffstitch,lee2025state,wang2023optimal,myers2025offline,liu2025vh}.

\section{Conclusion}
This paper presented DAG-STL, a hierarchical framework for offline trajectory planning under Signal Temporal Logic (STL) constraints when the underlying dynamics are unknown. The central idea is to separate logical reasoning from trajectory realization. Following this principle, DAG-STL first decomposes an STL formula into progress conditions with shared timing constraints, then allocates timed waypoints, and finally completes the plan with diffusion-based trajectory generation. We further enhance this pipeline with a rollout-free dynamic consistency metric, an anytime refinement search procedure, and hierarchical online replanning, yielding a planning framework that is both logically structured and more aware of execution feasibility.

The experiments support this design from several complementary perspectives. In Maze2D, DAG-STL substantially outperforms direct robustness-guided diffusion on complex long-horizon tasks. In both Maze2D and AntMaze, ARS and online replanning improve reliability on difficult templates by refining or repairing timed allocations instead of relying on a single nominal plan. In the OGBench Cube domain, the framework generalizes beyond navigation and supports temporally structured manipulation using only task-agnostic offline data. In the custom-built environment, DAG-STL recovers most tasks in the model-solvable reference set while maintaining a clear computational advantage over direct optimization as temporal complexity increases.

Future work will focus on further reducing conservativeness and broadening scope, including uncertainty-calibrated time prediction, more systematic multi-objective allocation refinement, tighter coupling between task-level planning and low-level realization, and richer STL predicates beyond the current geometric task-space setting.
% \begin{acks}

% \end{acks}

\section*{Author Contributions}
Ruijia Liu: Conceptualization, methodology, software, investigation, and writing--original draft. Ancheng Hou: Software and investigation. Xiao Yu: Supervision and writing--review \& editing. Xiang Yin: Conceptualization, methodology, supervision, and writing--review \& editing.

\section*{Statements and Declarations}

\subsection*{Ethical considerations}
Not applicable. This study does not involve human participants, human data, human tissue, or animal subjects.

\subsection*{Consent to participate}
Not applicable. This article does not involve human participants or personal data requiring consent to participate.

\subsection*{Consent for publication}
Not applicable. This article does not contain any individual person's data in any form, including any individual details, images, or videos.

\subsection*{Declaration of conflicting interest}
The author(s) declared no potential conflicts of interest with respect to the research, authorship, and/or publication of this article.

\subsection*{Funding statement}
This work was supported by the National Natural Science Foundation of China (62173226,92367203,62061136004).

\subsection*{Data availability}
The code and experimental assets supporting the findings of this study are available at \url{https://cps-sjtu.github.io/DAG-STL}.

\subsection*{Use of AI-assisted language editing}
ChatGPT (OpenAI) was used to improve the language and readability of the manuscript. The authors reviewed and verified all AI-assisted text, take full responsibility for the final content and citations, and acknowledge that large language models may generate biased, erroneous, or incomplete outputs.

\bibliographystyle{SageH}
\bibliography{root}

@inproceedings{wang2025multi,
  title={Multi-agent reinforcement learning guided by signal temporal logic specifications},
  author={Wang, Jiangwei and Yang, Shuo and An, Ziyan and Han, Songyang and Zhang, Zhili and Mangharam, Rahul and Ma, Meiyi and Miao, Fei},
  booktitle={IEEE/RSJ International Conference on Intelligent Robots and Systems},
  pages={6048--6054},
  year={2025}
}

@inproceedings{sun2022dnn,
  title={Out-of-distribution detection with deep nearest neighbors},
  author={Sun, Yiyou and Ming, Yifei and Zhu, Xiaojin and Li, Yixuan},
  booktitle={International conference on machine learning},
  pages={20827--20840},
  year={2022},
  organization={PMLR}
}

@inproceedings{breunig2000lof,
  title={LOF: identifying density-based local outliers},
  author={Breunig, Markus M and Kriegel, Hans-Peter and Ng, Raymond T and Sander, J{\"o}rg},
  booktitle={Proceedings of the 2000 ACM SIGMOD international conference on Management of data},
  pages={93--104},
  year={2000}
}

@inproceedings{ramaswamy2000outliers,
  title={Efficient algorithms for mining outliers from large data sets},
  author={Ramaswamy, Sridhar and Rastogi, Rajeev and Shim, Kyuseok},
  booktitle={Proceedings of the 2000 ACM SIGMOD international conference on Management of data},
  pages={427--438},
  year={2000}
}

@inproceedings{xiao2023safediffuser,
  author       = {Wei Xiao and
                  Tsun{-}Hsuan Wang and
                  Chuang Gan and
                  Ramin M. Hasani and
                  Mathias Lechner and
                  Daniela Rus},
  title        = {SafeDiffuser: Safe Planning with Diffusion Probabilistic Models},
  booktitle    = {The Thirteenth International Conference on Learning Representations},
  year         = {2025},
}

@article{kapoor2020model,
  title={Model-based reinforcement learning from signal temporal logic specifications},
  author={Kapoor, Parv and Balakrishnan, Anand and Deshmukh, Jyotirmoy V},
  journal={arXiv preprint arXiv:2011.04950},
  year={2020}
}

@article{belta2019formal,
  title={Formal methods for control synthesis: An optimization perspective},
  author={Belta, Calin and Sadraddini, Sadra},
  journal={Annual Review of Control, Robotics, and Autonomous Systems},
  volume={2},
  number={1},
  pages={115--140},
  year={2019},
  publisher={Annual Reviews}
}

@inproceedings{pant2017smooth,
  title={Smooth operator: Control using the smooth robustness of temporal logic},
  author={Pant, Yash Vardhan and Abbas, Houssam and Mangharam, Rahul},
  booktitle={IEEE Conference on Control Technology and Applications},
  pages={1235--1240},
  year={2017},
  organization={IEEE}
}

@article{cardona2025stl,
  title={STL and wSTL control synthesis: A disjunction-centric mixed-integer linear programming approach},
  author={Cardona, Gustavo A and Kamale, Disha and Vasile, Cristian-Ioan},
  journal={Nonlinear Analysis: Hybrid Systems},
  volume={56},
  pages={101576},
  year={2025},
  publisher={Elsevier}
}

@article{armanini2023soft,
  title={Soft robots modeling: A structured overview},
  author={Armanini, Costanza and Boyer, Fr{\'e}d{\'e}ric and Mathew, Anup Teejo and Duriez, Christian and Renda, Federico},
  journal={IEEE Transactions on Robotics},
  volume={39},
  number={3},
  pages={1728--1748},
  year={2023},
  publisher={IEEE}
}

@inproceedings{venkataraman2020tractable,
  title={Tractable reinforcement learning of signal temporal logic objectives},
  author={Venkataraman, Harish and Aksaray, Derya and Seiler, Peter},
  booktitle={Learning for dynamics and control},
  pages={308--317},
  year={2020},
  organization={PMLR}
}

@article{salunkhe2025kinematic,
  title={Kinematic issues in 6R cuspidal robots, guidelines for path planning and deciding cuspidality},
  author={Salunkhe, Durgesh Haribhau and Marauli, Tobias and M{\"u}ller, Andreas and Chablat, Damien and Wenger, Philippe},
  journal={The International Journal of Robotics Research},
  volume={44},
  number={6},
  pages={1035--1054},
  year={2025},
  publisher={SAGE Publications Sage UK: London, England}
}

@article{kurtz2020trajectory,
  title={Trajectory optimization for high-dimensional nonlinear systems under STL specifications},
  author={Kurtz, Vince and Lin, Hai},
  journal={IEEE Control Systems Letters},
  volume={5},
  number={4},
  pages={1429--1434},
  year={2020},
  publisher={IEEE}
}

@article{kress2018synthesis,
  title={Synthesis for robots: Guarantees and feedback for robot behavior},
  author={Kress-Gazit, Hadas and Lahijanian, Morteza and Raman, Vasumathi},
  journal={Annual Review of Control, Robotics, and Autonomous Systems},
  volume={1},
  number={1},
  pages={211--236},
  year={2018},
  publisher={Annual Reviews}
}

@book{lavalle2006planning,
  title={Planning algorithms},
  author={LaValle, Steven M},
  year={2006},
  publisher={Cambridge University Press}
}

@inproceedings{puranic2021learning,
  title={Learning from demonstrations using signal temporal logic},
  author={Puranic, Aniruddh and Deshmukh, Jyotirmoy and Nikolaidis, Stefanos},
  booktitle={Conference on Robot Learning},
  pages={2228--2242},
  year={2021} 
}

@article{sewlia2022cooperative,
  title={Cooperative object manipulation under signal temporal logic tasks and uncertain dynamics},
  author={Sewlia, Mayank and Verginis, Christos K and Dimarogonas, Dimos V},
  journal={IEEE Robotics and Automation Letters},
  volume={7},
  number={4},
  pages={11561--11568},
  year={2022},
  publisher={IEEE}
}

@article{gu2025robust,
  title={Robust-locomotion-by-logic: Perturbation-resilient bipedal locomotion via signal temporal logic guided model predictive control},
  author={Gu, Zhaoyuan and Zhao, Yuntian and Chen, Yipu and Guo, Rongming and Leestma, Jennifer K and Sawicki, Gregory S and Zhao, Ye},
  journal={IEEE Transactions on Robotics},
  year={2025},
  publisher={IEEE}
}

@inproceedings{yang2025stlgame,
  title={STLGame: Signal Temporal Logic Games in Adversarial Multi-Agent Systems},
  author={Yang, Shuo and Zheng, Hongrui and Vasile, Cristian-Ioan and Pappas, George and Mangharam, Rahul},
  booktitle={7th Annual Learning for Dynamics \& Control Conference},
  pages={181--196},
  year={2025},
  organization={PMLR}
}

@article{yin2024formal,
  title={Formal synthesis of controllers for safety-critical autonomous systems: Developments and challenges},
  author={Yin, Xiang and Gao, Bingzhao and Yu, Xiao},
  journal={Annual Reviews in Control},
  volume={57},
  pages={100940},
  year={2024},
  publisher={Elsevier}
}

@article{kapoor2025stlcg++,
  title={{STLCG}++: A masking approach for differentiable signal temporal logic specification},
  author={Kapoor, Parv and Mizuta, Kazuki and Kang, Eunsuk and Leung, Karen},
  journal={IEEE Robotics and Automation Letters},
  year={2025}
}

@article{silano2021power,
  title={Power line inspection tasks with multi-aerial robot systems via signal temporal logic specifications},
  author={Silano, Giuseppe and Baca, Tomas and Penicka, Robert and Liuzza, Davide and Saska, Martin},
  journal={IEEE Robotics and Automation Letters},
  volume={6},
  number={2},
  pages={4169--4176},
  year={2021},
  publisher={IEEE}
}

@article{luckcuck2019formal,
  title={Formal specification and verification of autonomous robotic systems: A survey},
  author={Luckcuck, Matt and Farrell, Marie and Dennis, Louise A and Dixon, Clare and Fisher, Michael},
  journal={ACM Computing Surveys},
  volume={52},
  number={5},
  pages={1--41},
  year={2019},
  publisher={ACM New York, NY, USA}
}

@inproceedings{ogbench_park2025,
  title={OGBench: Benchmarking Offline Goal-Conditioned RL},
  author={Park, Seohong and Frans, Kevin and Eysenbach, Benjamin and Levine, Sergey},
  booktitle={International Conference on Learning Representations},
  year={2025},
}

@article{li2024derivative,
  title={Derivative-Free Guidance in Continuous and Discrete Diffusion Models with Soft Value-Based Decoding},
  author={Li, Xiner and Zhao, Yulai and Wang, Chenyu and Scalia, Gabriele and Eraslan, Gokcen and Nair, Surag and Biancalani, Tommaso and Regev, Aviv and Levine, Sergey and Uehara, Masatoshi},
  journal={arXiv preprint arXiv:2408.08252},
  year={2024}
}

@article{yu2024continuous,
  title={Continuous-time control synthesis under nested signal temporal logic specifications},
  author={Yu, Pian and Tan, Xiao and Dimarogonas, Dimos V},
  journal={IEEE Transactions on Robotics},
  volume={40},
  pages={2272--2286},
  year={2024},
  publisher={IEEE}
}

@inproceedings{huang2025diffusion,
  title={Diffusion Models as Optimizers for Efficient Planning in Offline RL},
  author={Huang, Renming and Pei, Yunqiang and Wang, Guoqing and Zhang, Yangming and Yang, Yang and Wang, Peng and Shen, Hengtao},
  booktitle={European Conference on Computer Vision},
  pages={1--17},
  year={2025},
  organization={Springer}
}

@article{bartocci2018specification,
  title={Specification-based monitoring of cyber-physical systems: a survey on theory, tools and applications},
  author={Bartocci, Ezio and Deshmukh, Jyotirmoy and Donz{\'e}, Alexandre and Fainekos, Georgios and Maler, Oded and Ni{\v{c}}kovi{\'c}, Dejan and Sankaranarayanan, Sriram},
  journal={Lectures on Runtime Verification: Introductory and Advanced Topics},
  pages={135--175},
  year={2018},
  publisher={Springer}
}

@inproceedings{kapoor2024safe,
  title={Safe Planning Through Incremental Decomposition of Signal Temporal Logic Specifications},
  author={Kapoor, Parv and Kang, Eunsuk and Meira-G{\'o}es, R{\^o}mulo},
  booktitle={NASA Formal Methods Symposium},
  pages={377--396},
  year={2024},
  organization={Springer}
}

@article{fu2020d4rl,
  title={D4rl: Datasets for deep data-driven reinforcement learning},
  author={Fu, Justin and Kumar, Aviral and Nachum, Ofir and Tucker, George and Levine, Sergey},
  journal={arXiv preprint arXiv:2004.07219},
  year={2020}
}

@inproceedings{mizuta2024cobl,
  title={Cobl-diffusion: Diffusion-based conditional robot planning in dynamic environments using control barrier and lyapunov functions},
  author={Mizuta, Kazuki and Leung, Karen},
  booktitle={2024 IEEE/RSJ International Conference on Intelligent Robots and Systems},
  pages={13801--13808},
  year={2024},
  organization={IEEE}
}

@article{christopher2024constrained,
  title={Constrained synthesis with projected diffusion models},
  author={Christopher, Jacob K and Baek, Stephen and Fioretto, Ferdinando},
  journal={Advances in Neural Information Processing Systems},
  volume={37},
  pages={89307--89333},
  year={2024}
}

@inproceedings{leahy2023rewrite,
  title={Rewrite-based decomposition of signal temporal logic specifications},
  author={Leahy, Kevin and Mann, Makai and Vasile, Cristian-Ioan},
  booktitle={NASA Formal Methods Symposium},
  pages={224--240},
  year={2023},
  organization={Springer}
}

@inproceedings{yu2023model,
  title={Model predictive control for signal temporal logic specifications with time interval decomposition},
  author={Yu, Xinyi and Wang, Chuwei and Yuan, Dingran and Li, Shaoyuan and Yin, Xiang},
  booktitle={2023 62nd IEEE Conference on Decision and Control},
  pages={7849--7855},
  year={2023},
  organization={IEEE}
}

@inproceedings{maler2004monitoring,
  title={Monitoring temporal properties of continuous signals},
  author={Maler, Oded and Nickovic, Dejan},
  booktitle={International Symposium on Formal Techniques in Real-Time and Fault-Tolerant Systems},
  pages={152--166},
  year={2004},
  organization={Springer}
}

@inproceedings{sadraddini2015robust,
  title={Robust temporal logic model predictive control},
  author={Sadraddini, Sadra and Belta, Calin},
  booktitle={53rd Annual Allerton Conference on Communication, Control, and Computing},
  pages={772--779},
  year={2015},
  organization={IEEE}
}

@inproceedings{janner2022planning,
  title={Planning with Diffusion for Flexible Behavior Synthesis},
  author={Janner, Michael and Du, Yilun and Tenenbaum, Joshua and Levine, Sergey},
  booktitle={International Conference on Machine Learning},
  pages={9902--9915},
  year={2022},
  organization={PMLR}
}

@inproceedings{raman2014model,
  title={Model predictive control from signal temporal logic specifications: A case study},
  author={Raman, Vasumathi and Maasoumy, Mehdi and Donz{\'e}, Alexandre},
  booktitle={Proceedings of the 4th ACM SIGBED International Workshop on Design, Modeling, and Evaluation of Cyber-Physical Systems},
  pages={52--55},
  year={2014}
}

@article{sun2022multi,
  title={Multi-agent motion planning from signal temporal logic specifications},
  author={Sun, Dawei and Chen, Jingkai and Mitra, Sayan and Fan, Chuchu},
  journal={IEEE Robotics and Automation Letters},
  volume={7},
  number={2},
  pages={3451--3458},
  year={2022},
  publisher={IEEE}
}

@article{kurtz2022mixed,
  title={Mixed-integer programming for signal temporal logic with fewer binary variables},
  author={Kurtz, Vincent and Lin, Hai},
  journal={IEEE Control Systems Letters},
  volume={6},
  pages={2635--2640},
  year={2022},
  publisher={IEEE}
}

@article{he2024scalable,
  title={Scalable Signal Temporal Logic Guided Reinforcement Learning via Value Function Space Optimization},
  author={He, Yiting and Liu, Peiran and Ji, Yiding},
  journal={arXiv preprint arXiv:2408.01923},
  year={2024}
}

@inproceedings{wang2024synthesis,
  title={Synthesis of temporally-robust policies for signal temporal logic tasks using reinforcement learning},
  author={Wang, Siqi and Li, Shaoyuan and Yin, Li and Yin, Xiang},
  booktitle={IEEE International Conference on Robotics and Automation},
  pages={10503--10509},
  year={2024},
  organization={IEEE}
}

@inproceedings{kalagarla2021model,
  title={Model-free reinforcement learning for optimal control of Markov decision processes under signal temporal logic specifications},
  author={Kalagarla, Krishna C and Jain, Rahul and Nuzzo, Pierluigi},
  booktitle={60th IEEE Conference on Decision and Control},
  pages={2252--2257},
  year={2021},
  organization={IEEE}
}

@inproceedings{zhong2023guided,
  title={Guided conditional diffusion for controllable traffic simulation},
  author={Zhong, Ziyuan and Rempe, Davis and Xu, Danfei and Chen, Yuxiao and Veer, Sushant and Che, Tong and Ray, Baishakhi and Pavone, Marco},
  booktitle={IEEE International Conference on Robotics and Automation},
  pages={3560--3566},
  year={2023},
  organization={IEEE}
}

@article{gilpin2020smooth,
  title={A smooth robustness measure of signal temporal logic for symbolic control},
  author={Gilpin, Yann and Kurtz, Vince and Lin, Hai},
  journal={IEEE Control Systems Letters},
  volume={5},
  number={1},
  pages={241--246},
  year={2020},
  publisher={IEEE}
}

@inproceedings{
ajay2022conditional,
title={Is Conditional Generative Modeling all you need for Decision Making?},
author={Anurag Ajay and Yilun Du and Abhi Gupta and Joshua B. Tenenbaum and Tommi S. Jaakkola and Pulkit Agrawal},
booktitle={The Eleventh International Conference on Learning Representations },
year={2023},
}

@article{ikemoto2022deep,
  title={Deep reinforcement learning under signal temporal logic constraints using Lagrangian relaxation},
  author={Ikemoto, Junya and Ushio, Toshimitsu},
  journal={IEEE Access},
  volume={10},
  pages={114814--114828},
  year={2022},
  publisher={IEEE}
}

@article{chi2023diffusion,
  title={Diffusion policy: Visuomotor policy learning via action diffusion},
  author={Chi, Cheng and Xu, Zhenjia and Feng, Siyuan and Cousineau, Eric and Du, Yilun and Burchfiel, Benjamin and Tedrake, Russ and Song, Shuran},
  journal={The International Journal of Robotics Research},
  pages={02783649241273668},
  year={2023},
  publisher={SAGE Publications Sage UK: London, England}
}

@inproceedings{agrawal2016learning,
  title={Learning to poke by poking: experiential learning of intuitive physics},
  author={Agrawal, Pulkit and Nair, Ashvin and Abbeel, Pieter and Malik, Jitendra and Levine, Sergey},
  booktitle={Proceedings of the 30th International Conference on Neural Information Processing Systems},
  pages={5092--5100},
  year={2016}
}

@article{leung2023backpropagation,
  title={Backpropagation through signal temporal logic specifications: Infusing logical structure into gradient-based methods},
  author={Leung, Karen and Ar{\'e}chiga, Nikos and Pavone, Marco},
  journal={The International Journal of Robotics Research},
  volume={42},
  number={6},
  pages={356--370},
  year={2023},
  publisher={SAGE Publications Sage UK: London, England}
}

@article{zhang2025decomposition,
  title={Decomposition-Based MPC for Uncertain Systems With Nested Signal Temporal Logic Specifications},
  author={Zhang, Jiarui and Lu, Penghong and Chen, Gang},
  journal={IEEE Control Systems Letters},
  year={2025},
  publisher={IEEE}
}

@inproceedings{balakrishnan2019structured,
  title={Structured reward functions using STL},
  author={Balakrishnan, Anand and Deshmukh, Jyotirmoy V},
  booktitle={Proceedings of the 22nd ACM International Conference on Hybrid Systems: Computation and Control},
  pages={270--271},
  year={2019}
}

@article{charitidou2021signal,
  title={Signal temporal logic task decomposition via convex optimization},
  author={Charitidou, Maria and Dimarogonas, Dimos V},
  journal={IEEE Control Systems Letters},
  volume={6},
  pages={1238--1243},
  year={2021},
  publisher={IEEE}
}

@inproceedings{ho2023sampling,
  title={Sampling-based Approach to Robust STL Synthesis for Complex Systems under Uncertainty},
  author={Ho, Qi Heng and Ilyes, Roland and Sunberg, Zachary and Lahijanian, Morteza},
  booktitle={Proceedings of the 26th ACM International Conference on Hybrid Systems: Computation and Control},
  pages={1--2},
  year={2023}
}

@inproceedings{aksaray2016q,
  title={Q-learning for robust satisfaction of signal temporal logic specifications},
  author={Aksaray, Derya and Jones, Austin and Kong, Zhaodan and Schwager, Mac and Belta, Calin},
  booktitle={IEEE 55th Conference on Decision and Control},
  pages={6565--6570},
  year={2016},
  organization={IEEE}
}

@inproceedings{ilyes2023stochastic,
  title={Stochastic robustness interval for motion planning with signal temporal logic},
  author={Ilyes, Roland B and Ho, Qi Heng and Lahijanian, Morteza},
  booktitle={IEEE International Conference on Robotics and Automation},
  pages={5716--5722},
  year={2023},
  organization={IEEE}
}

@inproceedings{dawson2022robust,
  title={Robust counterexample-guided optimization for planning from differentiable temporal logic},
  author={Dawson, Charles and Fan, Chuchu},
  booktitle={IEEE/RSJ International Conference on Intelligent Robots and Systems},
  pages={7205--7212},
  year={2022},
  organization={IEEE}
}

@inproceedings{li2024diffstitch,
author = {Li, Guanghe and Shan, Yixiang and Zhu, Zhengbang and Long, Ting and Zhang, Weinan},
title = {DiffStitch: boosting offline reinforcement learning with diffusion-based trajectory stitching},
year = {2024},
booktitle = {Proceedings of the 41st International Conference on Machine Learning},
articleno = {1148},
numpages = {13},
}

@article{rockafellar2000optimization,
  title={Optimization of conditional value-at-risk},
  author={Rockafellar, R Tyrrell and Uryasev, Stanislav and others},
  journal={Journal of risk},
  volume={2},
  pages={21--42},
  year={2000}
}

@article{cover1967nearest,
  title={Nearest neighbor pattern classification},
  author={Cover, Thomas and Hart, Peter},
  journal={IEEE transactions on information theory},
  volume={13},
  number={1},
  pages={21--27},
  year={1967},
  publisher={IEEE}
}

@inproceedings{luo2025generative,
  title={Generative Trajectory Stitching through Diffusion Composition},
  author={Luo, Yunhao and Mishra, Utkarsh Aashu and Du, Yilun and Xu, Danfei},
  booktitle={The Thirty-ninth Annual Conference on Neural Information Processing Systems},
  year={2025}
}

@article{meng2024diverse,
  title={Diverse controllable diffusion policy with signal temporal logic},
  author={Meng, Yue and Fan, Chuchu},
  journal={IEEE Robotics and Automation Letters},
  volume={9},
  number={10},
  pages={8354--8361},
  year={2024},
  publisher={IEEE}
}

@article{feng2024ltldog,
  title={LTLDoG: Satisfying temporally-extended symbolic constraints for safe diffusion-based planning},
  author={Feng, Zeyu and Luan, Hao and Goyal, Pranav and Soh, Harold},
  journal={IEEE Robotics and Automation Letters},
  year={2024},
  publisher={IEEE}
}

@inproceedings{feng2025diffusion,
  title={Diffusion meets options: Hierarchical generative skill composition for temporally-extended tasks},
  author={Feng, Zeyu and Luan, Hao and Ma, Kevin Yuchen and Soh, Harold},
  booktitle={2025 IEEE International Conference on Robotics and Automation},
  pages={10854--10860},
  year={2025},
  organization={IEEE}
}

@article{liu2025vh,
  title={VH-Diffuser: Variable Horizon Diffusion Planner for Time-Aware Goal-Conditioned Trajectory Planning},
  author={Liu, Ruijia and Hou, Ancheng and Li, Shaoyuan and Yin, Xiang},
  journal={arXiv preprint arXiv:2509.11930},
  year={2025}
}

@inproceedings{myers2025offline,
  title={Offline Goal-conditioned Reinforcement Learning with Quasimetric Representations},
  author={Myers, Vivek and Zheng, Bill and Eysenbach, Benjamin and Levine, Sergey},
  booktitle={The Thirty-ninth Annual Conference on Neural Information Processing Systems},
  year={2025}
}

@inproceedings{wang2023optimal,
  title={Optimal goal-reaching reinforcement learning via quasimetric learning},
  author={Wang, Tongzhou and Torralba, Antonio and Isola, Phillip and Zhang, Amy},
  booktitle={International Conference on Machine Learning},
  pages={36411--36430},
  year={2023},
  organization={PMLR}
}

@inproceedings{carvalho2023motion,
  title={Motion planning diffusion: Learning and planning of robot motions with diffusion models},
  author={Carvalho, Joao and Le, An T and Baierl, Mark and Koert, Dorothea and Peters, Jan},
  booktitle={IEEE/RSJ International Conference on Intelligent Robots and Systems},
  pages={1916--1923},
  year={2023},
  organization={IEEE}
}

@article{botteghi2023trajectory,
  title={Trajectory Generation, Control, and Safety with Denoising Diffusion Probabilistic Models},
  author={Botteghi, Nicol{\`o} and Califano, Federico and Poel, Mannes and Brune, Christoph},
  journal={arXiv preprint arXiv:2306.15512},
  year={2023}
}

@inproceedings{donze2010robust,
  title={Robust satisfaction of temporal logic over real-valued signals},
  author={Donz{\'e}, Alexandre and Maler, Oded},
  booktitle={International conference on formal modeling and analysis of timed systems},
  pages={92--106},
  year={2010},
  organization={Springer}
}

@article{suh2025dexterous,
  title={Dexterous contact-rich manipulation via the contact trust region},
  author={Suh, HJ Terry and Pang, Tao and Zhao, Tong and Tedrake, Russ},
  journal={The International Journal of Robotics Research},
  pages={02783649251398875},
  year={2025} 
}

@article{yuasa2026neuro,
  title={Neuro-Symbolic Generation of Explanations for Robot Policies With Weighted Signal Temporal Logic},
  author={Yuasa, Mikihisa and Sreenivas, Ramavarapu S and Tran, Huy T},
  journal={IEEE Robotics and Automation Letters},
  year={2026},
  publisher={IEEE}
}

@inproceedings{liu2025zeroshot,
title={Zero-Shot Trajectory Planning for Signal Temporal Logic Tasks},
author={Ruijia Liu and Ancheng Hou and Xiao Yu and Xiang Yin},
booktitle={The Thirty-ninth Annual Conference on Neural Information Processing Systems},
year={2025},
}

@inproceedings{lee2025state,
  title={State-Covering Trajectory Stitching for Diffusion Planners},
  author={Lee, Kyowoon and Choi, Jaesik},
  booktitle={Advances in Neural Information Processing Systems},
  year={2025},
}

@inproceedings{
zheng2024safe,
title={Safe Offline Reinforcement Learning with Feasibility-Guided Diffusion Model},
author={Yinan Zheng and Jianxiong Li and Dongjie Yu and Yujie Yang and Shengbo Eben Li and Xianyuan Zhan and Jingjing Liu},
booktitle={The Twelfth International Conference on Learning Representations},
year={2024},
}

@inproceedings{ho2020denoising,
  title={Denoising diffusion probabilistic models},
  author={Ho, Jonathan and Jain, Ajay and Abbeel, Pieter},
  booktitle={Proceedings of the 34th International Conference on Neural Information Processing Systems},
  pages={6840--6851},
  year={2020}
}

\appendix

\renewcommand{\theequation}{\Alph{section}.\arabic{equation}}
\renewcommand{\thefigure}{\Alph{section}.\arabic{figure}}
\renewcommand{\thetable}{\Alph{section}.\arabic{table}}
\counterwithin{equation}{section}
\counterwithin{figure}{section}
\counterwithin{table}{section}
\vspace{20pt}
{\large\textbf{Appendices}}
\section{Implementation Details}\label{apx:implementation}
This appendix summarizes the remaining implementation choices used in the experiments. Shared architectural and training settings have already been described in Sections~\ref{sec:implementation}--\ref{sec:robustness}; here we record the environment-specific hyperparameters and the additional details most relevant to reproduction.

\subsection{Environment-Specific Hyperparameter Summary}
Table~\ref{tab:impl_hparam} summarizes the main planning, refinement, and execution hyperparameters for each environment or environment family.
Here, \(\eta\) is the trajectory resolution factor introduced in Section~\ref{sec:robustness}; \(k\) is the synchronization ratio between one planning step and low-level control updates from Section~\ref{sec:action_control}; \(N_{\max}\) is the maximum number of waypoint-sampling attempts; \(K\) is the per-branch candidate capacity used in \texttt{MultiHypothesisAssign}; \(N_{\mathrm{iter}}\) and \(N_{\mathrm{seq}}\) are the ARS refinement budgets; and \((\varepsilon_{\mathrm{loc}},\varepsilon_{\mathrm{glob}})\) are the execution-error thresholds that determine whether the replanner performs local repair or global replanning.
The same hyperparameter setting is used across the three Maze2D layouts, so they are reported in a single row.

\begin{table}[htb]
\centering
\setlength{\tabcolsep}{4pt}
\caption{Environment-specific hyperparameters for DAG-STL.}
\label{tab:impl_hparam}
\begin{tabular}{lcccc}
\toprule
\textbf{Parameter} & \textbf{Maze2D} & \textbf{Ant} & \textbf{Cube} & \textbf{Custom} \\
\midrule
\(\eta\) & 8 & 8 & 4 & 4 \\
\(k\) & 1 & 2 & 1 & 1 \\
\(N_{\max}\) & 1 & 1 & 1 & 1 \\
\(K\) & 5 & 5 & 5 & 5 \\
\(N_{\mathrm{iter}}\) & 100 & 100 & - & 100 \\
\(N_{\mathrm{seq}}\) & 6 & 6 & - & 3 \\
\((\varepsilon_{\mathrm{loc}},\varepsilon_{\mathrm{glob}})\) & (0.4, 1.0) & (1.5, 4.0) & - & - \\
\bottomrule
\end{tabular}
\end{table}

\subsection{Learned Models}
\paragraph{Trajectory Generator}
The trajectory generator follows the DDPM-based \texttt{Diffuser} framework of~\cite{janner2022planning} with a U-Net backbone. During training, we adopt the variable-horizon strategy of~\cite{liu2025vh}: for each environment, we specify a maximum segment length \(H\) shown in Table~\ref{tab:impl_generator}, and each training sample is formed by randomly cropping a subtrajectory of length \(h\le H\) such that \(h\) is a multiple of the environment-dependent resolution factor \(\eta\). This matches the segment lengths used at test time and improves length generalization. At inference time, the segment length is controlled by changing the shape of the initial noise tensor so that the generated trajectory has the required number of states.

\begin{table}[t]
\centering
\small
\setlength{\tabcolsep}{5pt}
\caption{Environment-dependent settings of the trajectory generator.}
\label{tab:impl_generator}
\begin{tabular}{lcc}
\toprule
\textbf{Environment} & \(\boldsymbol{H}\) & \textbf{Diffusion Steps} \\
\midrule
Maze2D (U/M/L) & 128 / 384 / 512 & 64 / 128 / 192 \\
AntMaze & 384 & 192 \\
Cube & 128 & 64 \\
Custom & 64 & 64 \\
\bottomrule
\end{tabular}
\end{table}

At test time, invariance progress conditions are enforced by projection-based constrained denoising inspired by~\cite{christopher2024constrained}. After each selected denoising step, the current sample is replaced by its nearest point in the constraint set,
\[
\Pi_{\mathcal C}(\bs{\tau}) \;=\; \arg\min_{\hat{\bs{\tau}}\in\mathcal C}\|\hat{\bs{\tau}}-\bs{\tau}\|_2^2,
\]
where \(\mathcal C\) denotes the set of trajectories satisfying the active invariance conditions for the current segment. In the experiments of this paper, these constraints reduce to circular-region predicates and therefore admit closed-form projection. We also implemented a solver-based fallback for more general constraint sets. Following common practice, projection is applied only in the last few denoising steps.

\paragraph{Time Predictor}
The time predictor is a conditional generative model that takes a start state and a goal state as conditioning inputs, and generates the trajectory length between them as its sample. It is implemented as a diffusion model with a lightweight MLP backbone. Training uses randomly cropped trajectory segments with length \(h\le H\), where the maximum length \(H\) is chosen to match the training horizon of the trajectory generator in Table~\ref{tab:impl_generator}; each training sample consists of the start state, the end state, and the corresponding normalized segment length. The number of denoising steps is 128 in Maze2D, AntMaze, and Cube, and 64 in the custom environment. At test time, unguided sampling returns the nominal prediction \(l_{\mathrm{norm}}\), which is also the default mode used in our experiments. To obtain shorter and longer duration hypotheses, denoted by \(l_{\min}\) and \(l_{\max}\), respectively, we further apply the derivative-guidance strategy of~\cite{li2024derivative} during sampling.

\paragraph{Inverse Dynamics Model}
AntMaze is the only environment that uses a learned inverse-dynamics module for action recovery. This module is implemented as an MLP that takes the 29-dimensional current observation together with a future task-space reference state in \((x,y)\) as input, and predicts the corresponding 8-dimensional action. It is trained in a supervised manner on the offline AntMaze dataset using state-action trajectories collected under the same environment configuration as the planning data. At test time, the model is queried inside the time-synchronous execution loop to convert the planned task-space reference trajectory into executable low-level actions.

\subsection{Dynamic Consistency Metric}
The Dynamic Consistency Metric is instantiated in Maze2D, AntMaze, and the custom environment. It is not used in the reported Cube experiments because that environment only evaluates the basic planner and does not include Anytime Refinement Search, which is the main module that consumes this scorer. In all three scored environments, the metric is evaluated in a low-dimensional task space rather than the full raw state: \(\zeta(\x)\) extracts planar position, and the state-support and transition-support terms are computed from offline state samples and local displacement features using the same kNN-based support proxy. To make the state-support term more sensitive to unsupported traversals between endpoints, the implementation also checks a small number of interpolated interior points along each local segment. The final step cost then combines these data-support terms with a lightweight geometric regularizer. In Maze2D and the custom environment, this regularizer only penalizes overly long local displacements, which is sufficient for these planar tracking tasks. In AntMaze, we additionally include turning and smoothness penalties, since abrupt heading changes and nonsmooth local motion are more indicative of poor executability in the locomotion setting. Trajectory-level scores are finally obtained using the same CVaR-style tail aggregation described in Section~\ref{sec:dyn_metric}. Across the scored environments, we use the same default support-estimation and aggregation design, with only a minor adjustment of the step-length threshold in the custom environment.

\subsection{Other Implementation Details}
\paragraph{Custom Environment}
The custom environment is a 2D double integrator with state \([x,y,v_x,v_y]\), workspace \([0,10]\times[0,10]\), and a single circular obstacle centered at \((4,6)\) with radius \(1.5\). Controls are bounded in \([-0.5,0.5]^2\). The offline dataset is generated by sampling single reach-avoid tasks in this environment and solving them with \texttt{stlpy}'s \texttt{DrakeSmoothSolver}. The optimization-based reference in the controlled comparison uses the same backend on the corresponding coarse macro system.

\paragraph{Baselines and Metrics}
The robustness-guided diffusion baseline (RGD) in Maze2D reuses the same pretrained diffusion model as DAG-STL, while adding robustness-based guidance during sampling. Since this baseline must generate the complete trajectory in a single shot, we set its planning horizon directly to the horizon of the target STL task. We also use exactly the same trajectory resolution factor, rollout procedure, and robustness-evaluation protocol as those used for DAG-STL.

\section{Proofs Omitted}\label{apx:proof}
\subsection{Proof of Lemma~\ref{lemma:task_dec}}
\begin{proof}
We prove the statement by structural induction on $\varphi$. Throughout, we work in discrete time $t\in\mathbb{Z}_{\ge 0}$ and use the shorthand
$\s_0 \vDash \RC(a,b,\mu)
\ \Leftrightarrow\
\exists t\in[a,b]\ \text{such that}\ \x_t \vDash \mu,$
and
$\s_0 \vDash \IC(a,b,\mu)
\ \Leftrightarrow\
\forall t\in[a,b],\ \x_t \vDash \mu,$
as defined in~\eqref{formula:reachability}--\eqref{formula:invariance}.

\smallskip
\noindent\textbf{Base cases.}

\emph{(i) $\varphi=\mu$.}
The decomposition yields
$\PP_\varphi=\{\RC(0,0,\mu)\}$,
$\TT_\varphi=\varnothing$,
and $\Lambda_\varphi=\varnothing$.
By the semantics of atomic predicates,
$\s_0 \vDash \mu
\ \Leftrightarrow\
\x_0 \vDash \mu
\ \Leftrightarrow\
\s_0 \vDash \RC(0,0,\mu),
$
which is exactly the desired form.

\emph{(ii) $\varphi=\F_{[a,b]}\mu$.}
The decomposition yields $\Lambda_\varphi=\{\lambda\}$ and
$\PP_\varphi=\{\RC(\lambda,\lambda,\mu)\},
\TT_\varphi=\{\lambda\in[a,b]\}$.
By the Boolean semantics of $\F$ in~\eqref{formula:sem_F},
\[
\begin{aligned}
\s_0 \vDash \F_{[a,b]}\mu
\ &\Leftrightarrow\
\exists \lambda\in[a,b]\ \text{such that}\ \x_\lambda \vDash \mu\\
&\Leftrightarrow\
\exists \lambda\in[a,b]\ \text{such that}\ \s_0 \vDash \RC(\lambda,\lambda,\mu),
\end{aligned}
\]
which is exactly the target form with $\bs{\lambda}=[\lambda]\in\mathcal{F}_\varphi$.

\emph{(iii) $\varphi=\G_{[a,b]}\mu$.}
The decomposition yields $\Lambda_\varphi=\varnothing$ and
$\PP_\varphi=\{\IC(a,b,\mu)\}$,
$\TT_\varphi=\varnothing$.
By~\eqref{formula:sem_G},
\[
\s_0 \vDash \G_{[a,b]}\mu
\ \Leftrightarrow\
\forall t\in[a,b],\ \x_t \vDash \mu
\ \Leftrightarrow\
\s_0 \vDash \IC(a,b,\mu),
\]
which matches the desired equivalence (no time variables).

\emph{(iv) $\varphi=\mu_1 \U_{[a,b]} \mu_2$.}
The decomposition yields
$\Lambda_\varphi=\{\lambda\},\PP_\varphi=\{\IC(0,\lambda,\mu_1),\RC(\lambda,\lambda,\mu_2)\},
\TT_\varphi=\{\lambda\in[a,b]\}.$
By~\eqref{formula:sem_U},
\[
\begin{aligned}
\s_0 \vDash \mu_1 \U_{[a,b]} \mu_2
&\Leftrightarrow
\exists \lambda\in[a,b]\ \text{such that}\ \x_\lambda \vDash \mu_2
\\ & \qquad \text{and}\ 
\forall t\in[0,\lambda],\ \x_t \vDash \mu_1 \\
&\Leftrightarrow
\exists \lambda\in[a,b]\ \text{such that}\ 
\s_0 \vDash \RC(\lambda,\lambda,\mu_2)
\\ &\qquad \text{and}\ 
\s_0 \vDash \IC(0,\lambda,\mu_1),
\end{aligned}
\]
again yielding the target form with $\bs{\lambda}=[\lambda]\in\mathcal{F}_\varphi$.

\smallskip
\noindent\textbf{Inductive steps.}

For each constructor below, assume that the lemma holds for the immediate subformula(e).

\emph{(1) Conjunction: $\varphi=\varphi_1\wedge\varphi_2$.}
By STL semantics,
\[
\s_0 \vDash \varphi
\ \Leftrightarrow\
(\s_0 \vDash \varphi_1)\ \text{and}\ (\s_0 \vDash \varphi_2).
\]
The decomposition uses disjoint unions
\[
\PP_\varphi=\PP_{\varphi_1}\uplus\PP_{\varphi_2},
\TT_\varphi=\TT_{\varphi_1}\uplus\TT_{\varphi_2},
\Lambda_\varphi=\Lambda_{\varphi_1}\uplus\Lambda_{\varphi_2},
\]
with implicit $\alpha$-renaming to avoid variable clashes. Since the two variable sets are disjoint and the constraints are merged componentwise, a time assignment is feasible for $\varphi$ iff its two components are feasible for $\varphi_1$ and $\varphi_2$, respectively. Hence
\[
\mathcal{F}_\varphi=\mathcal{F}_{\varphi_1}\times \mathcal{F}_{\varphi_2}.
\]
Applying the induction hypothesis to $\varphi_1$ and $\varphi_2$, and observing that each progress condition in $\PP_{\varphi_i}$ depends only on the corresponding component of
$\bs{\lambda}=[\bs{\lambda}_1,\bs{\lambda}_2]$, we obtain
\[
\s_0\vDash\varphi
\ \Leftrightarrow\
\exists \bs{\lambda}\in\mathcal{F}_\varphi
\ \text{such that}\
\forall \PC\in\PP_\varphi,\ 
\s_0 \vDash \PC(\bs{\lambda}).
\]

\emph{(2) Outer eventually: $\varphi'=\F_{[a,b]}\varphi$.}
By~\eqref{formula:sem_F},
\[
\s_0\vDash\varphi'
\ \Leftrightarrow\
\exists \lambda\in[a,b]\ \text{such that}\ \s_\lambda \vDash \varphi.
\]
We first note the following time-shift identity: for any progress condition $\PC(c,d,\mu)$ and any $k\in\mathbb{Z}_{\ge 0}$,
\begin{equation}\label{eq:shift_appendix}
\s_k \vDash \PC(c,d,\mu)
\ \Leftrightarrow\
\s_0 \vDash \PC(c+k,d+k,\mu).
\end{equation}
This follows directly from the definitions of \(\RC\) and \(\IC\).

The decomposition of $\F_{[a,b]}\varphi$ introduces a fresh variable $\lambda\in[a,b]$ and shifts every progress condition of $\varphi$ by $+\lambda$. Therefore,
\[
\Lambda_{\varphi'}=\Lambda_\varphi\uplus\{\lambda\},
\TT_{\varphi'}=\TT_\varphi\uplus\{\lambda\in[a,b]\},
\]
and
\[
\mathcal{F}_{\varphi'}
=
\left\{
[\bs{\lambda},\lambda]
\;\middle|\;
\bs{\lambda}\in\mathcal{F}_\varphi,\ \lambda\in[a,b]
\right\}.
\]
Applying the induction hypothesis to $\varphi$ over the shifted signal $\s_\lambda$, and then using~\eqref{eq:shift_appendix}, yields
\[
\s_0\vDash \varphi'
\ \Leftrightarrow\
\exists \bs{\lambda}'\in\mathcal{F}_{\varphi'}
\ \text{such that}\
\forall \PC'\in\PP_{\varphi'},\ 
\s_0 \vDash \PC'(\bs{\lambda}').
\]
The special case $\F_{[a,a]}\varphi$ is identical, except that the shift is by the constant \(a\) and no new time variable is introduced.

\emph{(3) Outer always: $\varphi'=\G_{[a,b]}\varphi$.}
In discrete time,
\begin{equation}\label{eq:G_as_bigwedge_appendix}
\G_{[a,b]}\varphi
\equiv
\bigwedge_{k=a}^{b}\F_{[k,k]}\varphi,
\end{equation}
since $\s_0\vDash\G_{[a,b]}\varphi$ iff $\s_k\vDash\varphi$ for every integer $k\in[a,b]$.
The decomposition therefore creates an independent copy
\[
(\PP_\varphi^{(k)},\TT_\varphi^{(k)},\Lambda_\varphi^{(k)})
\]
for each \(k\in[a,b]\), shifts every copied progress condition by \(+k\), and then merges all copies. Accordingly, the feasible assignment set is the Cartesian product of the feasible sets of all copies:
\[
\mathcal{F}_{\varphi'}
=
\prod_{k=a}^{b}\mathcal{F}_{\varphi}^{(k)}.
\]
Applying the previous case to each subformula \(\F_{[k,k]}\varphi\) yields the desired equivalence.

It remains to justify the merging of constant invariance progress conditions. If
\(\IC(c,d,\mu)\in\PP_\varphi\) has constant endpoints, then its shifted copies
$\IC(c+a,d+a,\mu),\ \IC(c+a+1,d+a+1,\mu),\ \ldots,\ \IC(c+b,d+b,\mu)$
are jointly satisfied iff \(\mu\) holds over the union of the corresponding intervals. Since these intervals overlap consecutively in discrete time, their union is exactly \([c+a,d+b]\). Hence
\[
\bigwedge_{k=a}^{b}\IC(c+k,d+k,\mu)
\Leftrightarrow
\IC(c+a,d+b,\mu),
\]
so the merge preserves semantics.

\emph{(4) Outer until: $\varphi'=\phi \U_{[a,b]} \varphi$.}
By the restriction introduced in Section~\ref{sec:STL}, the prefix formula \(\phi\) contains no \(\F\) or \(\U\) operators. Therefore, its decomposition introduces no time variables and yields only invariance progress conditions with constant endpoints; in particular, $\Lambda_\phi=\varnothing,
\TT_\phi=\varnothing,$
and every element of \(\PP_\phi\) is of the form \(\IC(c,d,\mu)\).

The decomposition of \(\phi\U_{[a,b]}\varphi\) introduces a new variable \(\lambda\in[a,b]\), shifts every progress condition of \(\varphi\) by \(+\lambda\), and extends every invariance progress condition of \(\phi\) to end at the chosen until time. Thus
\[
\mathcal{F}_{\varphi'}
=
\left\{
[\bs{\lambda},\lambda]
\;\middle|\;
\bs{\lambda}\in\mathcal{F}_{\varphi},\ \lambda\in[a,b]
\right\}.
\]
By~\eqref{formula:sem_U},
\begin{equation}\label{eq:U_semantics_appendix}
\begin{aligned}
\s_0\vDash \phi \U_{[a,b]} \varphi
\ \Leftrightarrow\ &
\exists \lambda\in[a,b]\ \text{such that}\\
&\Big(\s_\lambda\vDash \varphi\Big)
\ \text{and}\
\Big(\forall \tau\in[0,\lambda],\ \s_\tau\vDash \phi\Big).
\end{aligned}
\end{equation}
The first conjunct is handled exactly as in the outer-eventually case using the shift identity~\eqref{eq:shift_appendix}. For the second conjunct, let \(\IC(c,d,\mu)\in\PP_\phi\). Then
\[
\forall \tau\in[0,\lambda],\ \s_\tau\vDash \IC(c,d,\mu)
\]
holds iff \(\mu\) is satisfied over every interval \([c+\tau,d+\tau]\) for \(\tau=0,\dots,\lambda\), which is equivalent to requiring \(\mu\) over the union of these intervals. Since
\[
\bigcup_{\tau=0}^{\lambda}[c+\tau,d+\tau]=[c,d+\lambda],
\]
this is equivalent to $\s_0 \vDash \IC(c,d+\lambda,\mu)$.
Applying this argument to every invariance progress condition in \(\PP_\phi\), and combining it with the shifted decomposition of \(\varphi\), proves that
\[
\s_0\vDash \varphi'
\ \Leftrightarrow\
\exists \bs{\lambda}'\in\mathcal{F}_{\varphi'}
\ \text{such that}\
\forall \PC'\in\PP_{\varphi'},\ 
\s_0 \vDash \PC'(\bs{\lambda}').
\]

\smallskip
\noindent\textbf{Conclusion.}
All base cases and constructors preserve the claimed equivalence. Therefore, by structural induction, for every disjunction-free STL formula \(\varphi\) in PNF satisfying the restriction of Section~\ref{sec:STL},
\[
\s_0 \vDash \varphi
\ \Longleftrightarrow\
\exists\,\bs{\lambda}\in\mathcal{F}_\varphi
\ \text{such that}\
\forall \PC\in\PP_\varphi,\ 
\s_0 \vDash \PC(\bs{\lambda}).
\]
If \(\Lambda_\varphi=\varnothing\), then \(\mathcal{F}_\varphi=\{\emptyset\}\), and the statement reduces to
\[
\s_0 \vDash \varphi
\ \Longleftrightarrow\
\forall \PC\in\PP_\varphi,\ 
\s_0 \vDash \PC.
\qedhere
\]
\end{proof}

\subsection{Proof of Lemma~\ref{lemma:progress_allo}}
\begin{proof}
We prove the lemma by maintaining inductive invariants along the depth-first search.
At each search node, the algorithm maintains a tuple
$(\x,t,\PP^{\RC}_{\mathrm{rem}},\TT,\tilde{\s})$,
where \(\tilde{\s}\) is the timed waypoint sequence constructed so far, and \(\PP^{\RC}_{\mathrm{rem}}\) is the set of remaining reachability progress conditions.

\paragraph{Inductive invariants}
After initialization, and after every successful extension returned by \texttt{TimeAssign} followed by \texttt{UpdateConstraint}, the following statements hold:
\begin{itemize}
\item[(I1)] \emph{Constraint feasibility:}
the current time-constraint set \(\TT\) admits at least one feasible time-variable assignment.

\item[(I2)] \emph{Committed reachability certification:}
for every reachability progress condition
\(\RC(a_\Lambda,b_\Lambda,\mu)\)
that has already been assigned a waypoint \((\x',t')\) and removed from \(\PP^{\RC}_{\mathrm{rem}}\),
the current constraint set \(\TT\) contains the inequalities
\[
a_\Lambda \le t',
\qquad
b_\Lambda \ge t'.
\]
Hence, for every feasible assignment \(\bs\lambda\) of \(\TT\), one has
\[
t' \in [a_\Lambda(\bs\lambda), b_\Lambda(\bs\lambda)],
\]
and therefore the assigned waypoint time certifies this reachability progress condition.

\item[(I3)] \emph{Waypoint-level invariance consistency:}
consider any invariance progress condition
\(\IC(c,d_\Lambda,\mu')\)
whose start time has already been determined through its triggering reachability progress condition.
If an accepted waypoint \((\x',t')\) satisfies \(\x' \nvDash \mu'\), then the current constraint set \(\TT\) contains the truncation constraint
\[
d_\Lambda < t'.
\]
Consequently, no feasible assignment of \(\TT\) can make this invariance progress condition active at time \(t'\).
\end{itemize}

\paragraph{Initialization}
At the root node, \(\tilde{\s}=[(\x_0,0)]\), and the initial time-constraint set is the one produced by the decomposition and preprocessing steps. By assumption, this initial constraint set is feasible, so (I1) holds. Since no reachability progress condition has yet been assigned, (I2) and (I3) hold vacuously.

\paragraph{Preservation under one successful extension}
Suppose the algorithm selects a remaining reachability progress condition
\[
\RC(a_\Lambda,b_\Lambda,\mu)\in \PP^{\RC}_{\mathrm{rem}}
\]
at the current node \((\x,t,\PP^{\RC}_{\mathrm{rem}},\TT,\tilde{\s})\), and \texttt{TimeAssign} returns a new waypoint \((\x',t^*)\) with \(\x'\vDash\mu\).

By construction, \texttt{TimeAssign} returns a \emph{locally admissible} candidate pair \((\x',t^*)\) based on the current timing window, the learned transition-time prediction, and the currently determined invariance conflicts. It does not by itself certify global consistency of the updated constraint system. Instead, \texttt{UpdateConstraint} augments \(\TT\) by adding
\[
a_\Lambda \le t^*,
\qquad
b_\Lambda \ge t^*,
\]
together with every required truncation constraint
\[
d_\Lambda < t^*
\]
for determined invariance progress conditions violated by \(\x'\).
The search branch is extended only if the resulting augmented constraint set remains feasible; otherwise, the branch is discarded and the algorithm backtracks. Therefore, (I1) is preserved. Moreover, the newly selected reachability progress condition is certified at time \(t^*\) by the inserted inequalities, so (I2) is preserved. Finally, every waypoint-level violation of an already activated invariance progress condition is explicitly ruled out by the corresponding truncation constraint \(d_\Lambda<t^*\), so (I3) is preserved as well.

\paragraph{Termination and witness construction}
Assume the algorithm terminates successfully. Then
\(\PP^{\RC}_{\mathrm{rem}}=\varnothing\),
and the final time-constraint set is \(\TT_f\) with feasible assignment set
\[
\mathcal F
=
\left\{
\bs{\lambda}\in\mathbb{Z}_{+}^{|\Lambda|}
\;\middle|\;
\bs{\lambda}\ \text{satisfies all constraints in}\ \TT_f
\right\}.
\]
By (I1), \(\mathcal F\neq\varnothing\). Fix any \(\bs{\lambda}\in\mathcal F\).

We first verify all reachability progress conditions.
Since every reachability progress condition in \(\PP^\RC\) has been assigned before termination, (I2) implies that for each
\(\RC(a_\Lambda,b_\Lambda,\mu)\in\PP^\RC\),
the corresponding waypoint time \(t_i\) satisfies
\[
t_i\in[a_\Lambda(\bs\lambda),b_\Lambda(\bs\lambda)].
\]
Because the assigned waypoint state satisfies \(\tilde{\x}_i \vDash \mu\), and \(\x_{t_i}=\tilde{\x}_i\) by assumption on the signal \(\s_0\), we obtain
\[
\s_0 \vDash \RC(a_\Lambda(\bs{\lambda}),b_\Lambda(\bs{\lambda}),\mu).
\]

We next verify waypoint-level consistency for all invariance progress conditions.
Let
\(\IC(a_\Lambda,b_\Lambda,\mu')\in\PP^\IC\),
and let \(t_i\in [a_\Lambda(\bs\lambda),b_\Lambda(\bs\lambda)]\cap\{t_0,\dots,t_n\}\).
We must show that \(\x_{t_i}\vDash\mu'\).

If \(\x_{t_i}\vDash\mu'\), then there is nothing to prove.
Otherwise, suppose \(\x_{t_i}\nvDash\mu'\).
Then this invariance progress condition must already have had its start time determined by the time \(t_i\) was accepted as a waypoint; indeed, after preprocessing every invariance progress condition is associated with a unique triggering reachability progress condition, and once the invariance becomes active its start time is fixed.
Therefore, by (I3), the final constraint set must contain the truncation constraint
\[
b_\Lambda < t_i
\]
for this invariance progress condition, which contradicts the assumption that
\[
t_i\in[a_\Lambda(\bs\lambda),b_\Lambda(\bs\lambda)].
\]
Hence \(\x_{t_i}\vDash\mu'\) must hold.

Therefore, under the chosen feasible assignment \(\bs\lambda\in\mathcal F\), every reachability progress condition is witnessed at the returned waypoint times, and no invariance progress condition is violated at any selected waypoint during its active interval. This proves the lemma.
\end{proof}

\subsection{Proof of Theorem~\ref{theorem:diffusionSTL}}
\begin{proof}
Let
\(
(\PP_\varphi,\TT_\varphi,\Lambda_\varphi)
\)
be the decomposition of the STL specification \(\varphi\), and let
\(
\tilde{\s}=(\tilde{\x}_0,t_0)(\tilde{\x}_1,t_1)\dots(\tilde{\x}_n,t_n),
0=t_0\le \dots \le t_n,
\)
be the timed waypoint sequence returned by the Progress Allocation algorithm.
By Lemma~\ref{lemma:progress_allo}, there exists a feasible time-variable assignment
\(
\bs\lambda^\star \in \mathcal{F}
\)
such that all reachability progress conditions are witnessed at the allocated waypoints, and no invariance progress condition is violated at any selected waypoint during its active interval.
Let
\(
\s_0=\x_0\x_1\dots\x_T
\)
with $T\ge t_n$
be the state trajectory returned by the Trajectory Generation module, which by assumption satisfies:
\begin{enumerate}[label=(\roman*),leftmargin=*]
\item \(\x_{t_i}=\tilde{\x}_i\) for all \(i\in\{0,\dots,n\}\);
\item for every invariance progress condition
\(\IC(a_\Lambda,b_\Lambda,\mu)\in\PP^\IC\),
the signal \(\s_0\) satisfies
\(
\IC(a_\Lambda(\bs\lambda^\star),b_\Lambda(\bs\lambda^\star),\mu)
\)
over its full active interval.
\end{enumerate}

\paragraph{Reachability progress conditions}
Consider any
\(
\RC(a_\Lambda,b_\Lambda,\mu)\in\PP^\RC.
\)
By Lemma~\ref{lemma:progress_allo}, under the chosen feasible assignment \(\bs\lambda^\star\), this reachability progress condition is witnessed at some allocated waypoint time \(t_i\), i.e.,
\(
t_i\in[a_\Lambda(\bs\lambda^\star),\,b_\Lambda(\bs\lambda^\star)]
\ \text{and}\ 
\tilde{\x}_i \vDash \mu.
\)
By item (i), the generated state trajectory visits all allocated waypoints, so \(\x_{t_i}=\tilde{\x}_i\). Hence
\(
\s_0 \vDash \RC(a_\Lambda(\bs\lambda^\star),b_\Lambda(\bs\lambda^\star),\mu).
\)

\paragraph{Invariance progress conditions}
Consider any
\(
\IC(a_\Lambda,b_\Lambda,\mu)\in\PP^\IC.
\)
By item (ii), the generated state trajectory satisfies
\(
\forall t\in[a_\Lambda(\bs\lambda^\star),\,b_\Lambda(\bs\lambda^\star)],\quad \x_t \vDash \mu,
\)
which is exactly
\(
\s_0 \vDash \IC(a_\Lambda(\bs\lambda^\star),b_\Lambda(\bs\lambda^\star),\mu).
\)

\paragraph{Conclusion via decomposition soundness}
We have shown that, for the same feasible assignment \(\bs\lambda^\star\),
\(
\forall \PC\in\PP_\varphi,
\s_0 \vDash \PC(\bs\lambda^\star).
\)
Therefore, by Lemma~\ref{lemma:task_dec}, it follows that
\(
\s_0 \vDash \varphi.
\)
This proves the theorem.
\end{proof}

\section{Additional Experimental Tables}\label{apx:additional_tables}

\subsection{Detailed Timing Results for Maze2D Variants}\label{apx:additional_timing}

Table~\ref{tab:mode_time_comparison_full} reports the per-template total decision time of the five DAG-STL variants in the three Maze2D layouts. These are the detailed results corresponding to the averaged summary shown in Table~\ref{tab:mode_time_comparison} in the main text.

\begin{table*}[t]
\centering
\caption{Detailed total decision time (s) of different DAG-STL variants across environments and task templates. U: Umaze; M: Medium; L: Large. Dark blue cells denote the best value, while light blue cells denote the second-best value.}
\label{tab:mode_time_comparison_full}
\begin{tabular}{l l r r r r r }
\toprule
Env & Type & B (s) & B+OR (s) & ARS-FS (s) & ARS (s) & ARS+OR (s) \\
\midrule
U & 1 & \bestcell{0.47±0.02} & \secondbestcell{0.48±0.02} & 0.97±0.09 & 3.66±0.17 & 3.65±0.17 \\
U & 2 & \bestcell{0.96±0.02} & \secondbestcell{0.97±0.03} & 1.90±0.13 & 7.04±1.95 & 6.73±1.85 \\
U & 3 & \bestcell{1.06±0.02} & \secondbestcell{1.13±0.19} & 8.14±0.20 & 11.89±1.91 & 13.22±2.35 \\
U & 4 & \bestcell{1.61±0.06} & \secondbestcell{1.73±0.24} & 3.75±0.23 & 12.50±6.24 & 12.86±6.20 \\
U & 5 & \bestcell{1.45±0.03} & \secondbestcell{1.51±0.19} & 2.95±0.26 & 10.53±4.55 & 10.91±4.65 \\
U & 6 & \bestcell{1.46±0.04} & \secondbestcell{1.52±0.21} & 3.12±0.85 & 10.17±4.39 & 10.36±4.50 \\
U & 7 & \bestcell{0.96±0.02} & \secondbestcell{0.97±0.07} & 1.87±0.13 & 6.49±1.63 & 6.25±1.54 \\
U & 8 & \secondbestcell{0.96±0.02} & \bestcell{0.96±0.07} & 1.90±0.14 & 8.63±4.00 & 8.07±3.76 \\
U & 9 & \bestcell{1.93±0.06} & \secondbestcell{1.99±0.20} & 3.92±0.28 & 14.54±6.28 & 14.23±6.17 \\
\midrule
M & 1 & \bestcell{0.79±0.02} & \secondbestcell{0.84±0.18} & 1.18±0.09 & 5.31±0.16 & 5.35±0.19 \\
M & 2 & \bestcell{1.81±0.04} & \secondbestcell{2.00±0.43} & 2.50±0.15 & 9.69±2.23 & 10.14±2.44 \\
M & 3 & \bestcell{1.90±0.03} & \secondbestcell{2.18±0.58} & 8.61±0.21 & 15.63±2.19 & 16.66±2.61 \\
M & 4 & \bestcell{3.44±0.05} & \secondbestcell{3.90±0.90} & 5.10±0.22 & 16.66±6.31 & 18.31±6.52 \\
M & 5 & \bestcell{2.66±0.16} & \secondbestcell{3.13±0.84} & 3.80±0.18 & 13.82±4.59 & 15.23±4.91 \\
M & 6 & \bestcell{2.66±0.08} & \secondbestcell{3.18±0.79} & 5.09±2.72 & 14.65±4.73 & 16.66±5.75 \\
M & 7 & \bestcell{1.79±0.05} & \secondbestcell{1.96±0.40} & 2.61±0.25 & 10.10±2.55 & 10.60±2.84 \\
M & 8 & \bestcell{1.80±0.03} & \secondbestcell{2.01±0.45} & 2.68±0.16 & 9.91±3.07 & 11.47±3.73 \\
M & 9 & \bestcell{3.64±0.09} & \secondbestcell{3.90±0.66} & 5.62±0.57 & 17.54±6.80 & 20.80±10.19 \\
\midrule
L & 1 & \bestcell{1.24±0.03} & \secondbestcell{1.64±0.87} & 1.66±0.11 & 8.14±0.27 & 8.22±0.39 \\
L & 2 & \bestcell{2.52±0.05} & 3.53±1.31 & \secondbestcell{3.31±0.19} & 14.03±2.93 & 14.60±3.10 \\
L & 3 & \bestcell{2.60±0.04} & \secondbestcell{3.82±1.39} & 9.37±0.23 & 19.08±2.24 & 20.93±3.00 \\
L & 4 & \bestcell{4.65±0.14} & 6.75±2.34 & \secondbestcell{6.57±0.32} & 23.09±7.25 & 33.37±10.48 \\
L & 5 & \bestcell{3.85±0.29} & 5.69±2.13 & \secondbestcell{4.82±0.26} & 17.45±4.81 & 21.54±8.95 \\
L & 6 & \bestcell{3.80±0.09} & \secondbestcell{5.45±1.91} & 5.81±2.10 & 17.71±4.40 & 19.63±5.11 \\
L & 7 & \bestcell{2.50±0.04} & \secondbestcell{3.27±1.11} & 3.60±1.05 & 13.64±2.96 & 14.65±3.40 \\
L & 8 & \bestcell{2.51±0.04} & 3.31±1.09 & \secondbestcell{3.25±0.14} & 14.16±3.43 & 14.73±3.53 \\
L & 9 & \bestcell{5.07±0.11} & \secondbestcell{6.53±1.87} & 7.14±0.90 & 24.14±6.74 & 31.86±18.40 \\
\bottomrule
\end{tabular}
\end{table*}

\end{document}